\pgfplotsset{compat=newest}
\DeclareMathOperator{\tr}{ {\rm tr} }
\DeclareMathOperator{\diag}{ {\rm diag} }
\DeclareMathOperator{\RF}{ {\rm RF} }
\DeclareMathOperator{\Gauss}{ {\rm Gauss} }
\newcommand{\RR}{\mathbb{R}}
\newcommand{\T}{ {\sf T} }
\newcommand{\EE}{\mathbb{E}}
\newcommand{\K}{{\mathbf{K}}}
\newcommand{\x}{{\mathbf{x}}}
\newcommand{\X}{{\mathbf{X}}}
\newcommand{\Y}{{\mathbf{Y}}}
\newcommand{\Z}{{\mathbf{Z}}}
\newcommand{\A}{{\mathbf{A}}}
\newcommand{\B}{{\mathbf{B}}}
\newcommand{\F}{{\mathbf{F}}}
\newcommand{\W}{{\mathbf{W}}}
\newcommand{\G}{{\mathbf{G}}}
\newcommand{\one}{{\mathbf{1}}}
\newcommand{\bOmega}{ \boldsymbol{\Omega} }
\newcommand{\bLambda}{ \boldsymbol{\Lambda} }
\newcommand{\bSigma}{ \boldsymbol{\Sigma} }
\newcommand{\C}{{\mathbf{C}}}
\newcommand{\I}{{\mathbf{I}}}
\newcommand{\uu}{{\mathbf{u}}}
\newcommand{\vv}{{\mathbf{v}}}
\renewcommand{\H}{{\mathbf{H}}}
\renewcommand{\L}{{\mathbf{L}}}
\newcommand{\bell}{{\boldsymbol{\ell}}}
\definecolor{RED}{rgb}{0.7,0,0}
\definecolor{BLUE}{rgb}{0,0,0.69}
\definecolor{GREEN}{rgb}{0,0.65,0}
\definecolor{PURPLE}{rgb}{0.69,0,0.8}
\definecolor{ORANGE}{rgb}{1,0.5,0}
\definecolor{ORANGERED}{rgb}{1.0,0.13,0.32}
\definecolor{BROWN}{rgb}{0.8,0.58,0.46}
\newcommand{\RED}{\color[rgb]{0.70,0,0}}
\newcommand{\BLUE}{\color[rgb]{0,0,0.69}}
\newcommand{\GREEN}{\color[rgb]{0,0.6,0}}
\newcommand{\PURPLE}{\color[rgb]{0.69,0,0.8}}
\newcommand{\ORANGE}{\color[rgb]{1,0.5,0}}
\newcommand{\BROWN}{\color[rgb]{0.8,0.58,0.46}}
\newtheorem{Theorem}{Theorem}
\newtheorem{Lemma}{Lemma}
\newtheorem{Remark}{Remark}
\newtheorem{Corollary}{Corollary}
\newtheorem{Definition}{Definition}
\begin{document}

\title{Robust and Communication-Efficient Federated Domain Adaptation via Random Features}

\author{
Zhanbo Feng\IEEEauthorrefmark{1},
\and
Yuanjie Wang\IEEEauthorrefmark{1},
\and
Jie Li,~\IEEEmembership{Fellow,~IEEE},
\and
Fan Yang,~\IEEEmembership{Member,~IEEE},
\and
Jiong Lou,~\IEEEmembership{Member,~IEEE},\\
\and
Tiebin Mi,~\IEEEmembership{Member,~IEEE},
\and
Robert.~C.~Qiu,~\IEEEmembership{Fellow,~IEEE},
\and
Zhenyu Liao\IEEEauthorrefmark{2},~\IEEEmembership{Member,~IEEE},
        % <-this % stops a space
\thanks{
Z.~Feng J.~Li, F.~Yang, and J.~Lou are with the Department of Computer Science and Engineering (CSE), Shanghai Jiao Tong University, Shanghai, China.  
Y.~Wang, T.~Mi, R.~C.~Qiu, and Z.~Liao are with the School of Electronic Information and Communications (EIC), Huazhong University of Science and Technology, Wuhan, Hubei, China.
}% <-this % stops a space
\thanks{Manuscript received xxxx, 2023; revised xxxx, 2024.}}

% The paper headers
\markboth{Journal of \LaTeX\ Class Files,~Vol.~14, No.~8, August~2021}%
{Shell \MakeLowercase{\textit{et al.}}: A Sample Article Using IEEEtran.cls for IEEE Journals}

% \IEEEpubid{0000--0000/00\$00.00~\copyright~2021 IEEE}
% Remember, if you use this you must call \IEEEpubidadjcol in the second
% column for its text to clear the IEEEpubid mark.

\maketitle

\begin{abstract}
Modern machine learning (ML) models have grown to a scale where training them on a single machine becomes impractical. 
As a result, there is a growing trend to leverage federated learning (FL) techniques to train large ML models in a distributed and collaborative manner. 
These models, however, when deployed on new devices, might struggle to generalize well due to domain shifts. 
In this context, federated domain adaptation (FDA) emerges as a powerful approach to address this challenge.

Most existing FDA approaches typically focus on aligning the distributions between source and target domains by minimizing their (e.g., MMD) distance.
Such strategies, however, inevitably introduce high communication overheads and can be highly sensitive to network reliability.

In this paper, we introduce RF-TCA, an enhancement to the standard Transfer Component Analysis approach that significantly accelerates computation without compromising theoretical and empirical performance. 
Leveraging the computational advantage of RF-TCA, we further extend it to FDA setting with FedRF-TCA. 
The proposed FedRF-TCA protocol boasts communication complexity that is \emph{independent} of the sample size, while maintaining performance that is either comparable to or even surpasses state-of-the-art FDA methods. 
We present extensive experiments to showcase the superior performance and robustness (to network condition) of FedRF-TCA.
\end{abstract}

\begin{IEEEkeywords}
Random features, maximum mean discrepancy, kernel method, federated domain adaptation
\end{IEEEkeywords}

\section{Introduction}
\label{sec:main}

\def\thefootnote{$*$}\footnotetext{Equal contribution.}
\def\thefootnote{$\dagger$}\footnotetext{Z.~Liao is the corresponding author (email: \href{mailto:zhenyu_liao@hust.edu.cn}{zhenyu\_liao@hust.edu.cn}).}

\renewcommand{\thefootnote}{\arabic{footnote}}

In today's context, marked by the unprecedented growth of machine learning (ML) and artificial intelligence (AI), increasingly larger ML models~\cite{brown2020language,rombach2022high} are being trained to address challenges across various fields, ranging from game~\cite{ye2020mastering,park2023generative}, text-to-image synthesis~\cite{saharia2022photorealistic}, natural language processing~\cite{brown2020language,openai2024gpt4technicalreport}, to robotics~\cite{karamcheti2023language} and intelligent decision-making~\cite{sutton2022quest}. 
The training of such colossal models is generally \emph{impossible} on a single machine. 
In this respect, federated learning (FL)~\cite{yang2019federated,mcmahan2017communication} techniques come into play, accelerating collaboration among different machines in training massive ML models while safeguarding data privacy across the network of participating distributed devices. 
Recent improvements have been focusing on on-device FL, which involves interactions within the network of distributed clients.
In this context, optimizing the communication costs across the network of distributed devices, ``aligning'' the (possibly very heterogeneous) datasets on local devices, and establishing reliable and efficient models are the major issues to address.

Models trained using FL techniques, however, may \emph{fail} to generalize on a new device due to \emph{domain shift}, in which case the distribution of the target (unlabeled) data significantly differs from that of the labeled data collected at source clients.
Unsupervised domain adaptation (DA)~\cite{wang2018deep}, in this respect, allows for  knowledge transferring from labeled source domains to unlabeled target domain, by aligning (in an unsupervised manner say) their feature distributions.

In this paper, we focus on the federated domain adaptation (FDA) approach to tackle the issue of domain shift in FL by minimizing their (e.g., maximum mean discrepancy, MMD~\cite{smola2007algo,gretton2006kernel}) distance through alternate information exchanges between devices.
These information exchanges, crucial to the performance of FDA methods, typically result in high communication overheads during the training procedure, see~\cite{peng2019federated,ghifary2014domain,sun2023feature}.
Moreover, in practical scenarios where the network of distributed machines is not always reliable -- potentially experiencing message drops and/or limited client availability (e.g., clients being temporarily unavailable and even dropping out during training)~\cite{kairouz2021Advances,li2020federated} -- existing FDA approaches  suffer from severe performance degradation.
In light of this, an asynchronous training scheme that is robust against (possibly limited) network reliability, without a significant sacrifice in performance and training time, is highly desirable in FDA.

\begin{figure*}
  \centering
  \includegraphics[width=140mm]{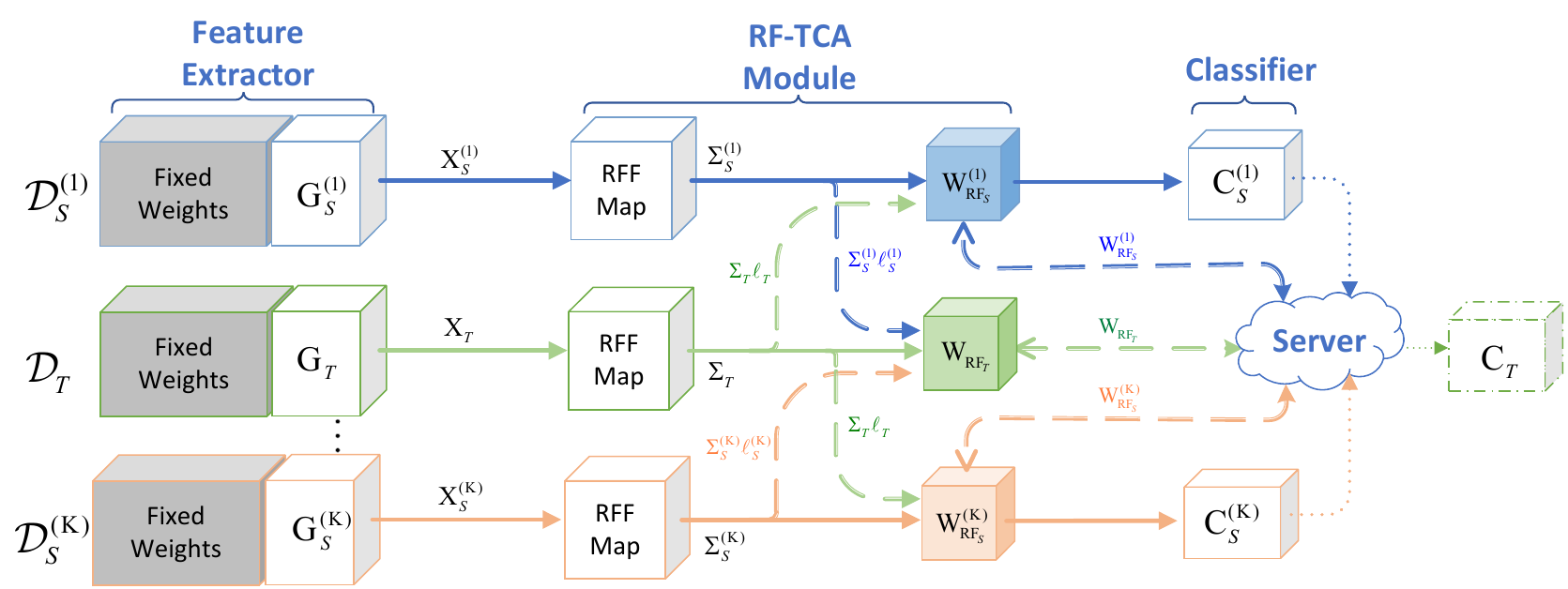}
  \caption{Illustration of the proposed FedRF-TCA protocol composed of
  (i) \textbf{Feature Extractor} with both fixed and learnable weights, denoted $\G_S$ and $\G_T$, respectively, obtained by fine-tuning a pretrained model (like ResNet-50); 
  (ii) \textbf{RF-TCA Transfer Module} using random features technique (see \Cref{def:RFF} below) and a linear adaptive layer ($\W_{\RF_S}^{(i)}$ or $\W_{\RF_T}$), with compressed features of the form $\bSigma_i \bell_i$ exchanged among clients during training; and
  (iii) \textbf{Classifier}.
  Solid arrows for local training and dashed arrows for global parameter aggregation between clients.
  See \Cref{sec:UFDA} for a detailed discussion.
  }
  \label{fig:multi_source_target_system}
\end{figure*}

\subsection{Our Approach and Contribution}
\label{subsec:our_approach}

In this paper, we propose a robust and communication-efficient FDA approach called Federated Random Features-based Transfer Component Analysis (FedRF-TCA), as depicted in \Cref{fig:multi_source_target_system}.
Aiming to minimize the communication overhead in training an FDA model, FedRF-TCA extends the random features approximation technique~\cite{rahimi2008random} to the Transfer Component Analysis (TCA)~\cite{pan2010domain} transfer learning scheme. 
This extension allows us to obtain compact source and target features with minimal MMD distance. 
From an FL perspective, the FedRF-TCA protocol is carefully designed so that only a \emph{minimal} amount of \emph{important} information is exchanged between clients in the network, in a highly compressed (and thus secure) fashion.
In a multi-source scenario, federated averaging (FedAvg) technique~\cite{mcmahan2017communication} is employed to train a shared classifier using data from different source domains.
This classifier is then applied on target features.

The primary contributions of this paper are as follows:
\begin{enumerate}
    \item We propose \textbf{RF-TCA}, an extension to standard TCA that significantly accelerates the vanilla TCA computations without theoretical \emph{and} empirical performance degradation (see, e.g., \Cref{theo:perf_RF_TCA} in \Cref{sec:RF-TCA}).
    This extension enables RF-TCA to be applied effectively on large-scale datasets, a key challenge for vanilla TCA.
    \item We introduce \textbf{FedRF-TCA}, a robust and efficient FDA approach. Unlike previous methods that suffer from high communications and/or computational costs on large-scale tasks, FedRF-TCA, as an extension of RF-TCA, enjoys a training communication overhead that is \emph{independent} of the dataset size and additional privacy protection, see \Cref{sec:UFDA}. 
    \item Extensive numerical experiments on various datasets are conducted in \Cref{sec:exper} to validate the efficiency of  computation, reduced communication overhead, and excellent robustness (to, e.g., network reliability) of FedRF-TCA.
\end{enumerate}

\subsection{Notations and Organization of the Paper}
% \label{subsec:contribution}

We denote scalars by lowercase letters, vectors by bold lowercase, and matrices by bold uppercase. We denote the transpose operator by $(\cdot)^{\sf T}$, and use $\|\cdot\|_2$ to denote the Euclidean norm for vectors and spectral/operator norm for matrices. 
For a random variable $z$, $\mathbb{E}[z]$ denotes the expectation of $z$. 
We use $\one_p$ and $\I_{p}$ for the vector of all ones of dimension $p$ and the identity matrix of dimension $p\times p$, respectively.
For a given p.s.d.~matrix $\K$, we denote $\dim(\K) \equiv \tr \K/\| \K \|_2$ the intrinsic dimension of $\K$.
We use $\Theta, O$ and $\Omega$ notations as in classical computer science literature \cite{cormen2022introduction,de1981asymptotic}.

\medskip

The remainder of this paper is organized as follows.
In~\Cref{sec:previous} we review prior research efforts and revisit both the TCA and FedAvg approaches. 
Subsequently, in \Cref{sec:RF-TCA}, we present an extension of the original TCA transfer learning method, termed RF-TCA, which offers substantial reductions in both storage and computational complexity.
Building upon this, in \Cref{sec:UFDA}, we further extend RF-TCA to the FL scenario and introduce the FedRF-TCA protocol. 
FedRF-TCA not only ensures robust and communication-efficient training but also provides enhanced privacy protection. 
To validate the effectiveness of the proposed FedRF-TCA scheme, we present extensive experimental results on commonly used DA datasets in~\Cref{sec:exper}, demonstrating its superior performance compared to state-of-the-art methods.

\section{Previous Efforts and Preliminaries}
\label{sec:previous}

In this section, we provide a brief review of prior research endeavors concerning random features approximation, transfer learning, and FL in \Cref{subsec:review}. 
We will delve deeper into the Transfer Component Analysis (TCA) and Federated Averaging (FedAvg) approaches, and provide comprehensive insights into their methodologies in \Cref{subsec:MMD-TCA}~and~\ref{subsec:FederatedAve}, respectively.

\subsection{Review of Previous Efforts}
\label{subsec:review}

\subsubsection{Random features and kernel method}

Random features methods were first proposed to alleviate the computational and storage challenges associated with kernel methods, particularly in scenarios involving a large number of $n \gg 1$ data points~\cite{scholkopf2018kernel,rahimi2008random}.
Notably, Random Fourier Features (RFFs) are known to approximate the widely used Gaussian kernel, if a sufficiently large $N$ number of random features are used~\cite{rahimi2008random,liao2020rff}. 
Other random feature techniques have been developed to address more involved kernels, see~\cite{vedaldi2012efficient}. 
For an extensive overview of these approaches, interested readers are referred to~\cite{liu2021random}, and for their intricate connections with (deep and/or random) neural networks, we recommend exploring \cite{louart2018random,couillet_liao_2022,gu2022Lossless}.

\subsubsection{Transfer learning and domain adaptation}

Transfer learning, and in particular, domain adaptation, aims to ``align'' the distribution of source and target data representations in a common feature space. 
Different feature alignment strategies have been explored in the literature:
TCA~\cite{pan2011domain} and JDA~\cite{long2013transfer} propose to minimize the maximum mean discrepancy (MMD) distance between the source and target marginal/joint distributions.
GFK~\cite{gong2012geodesic} performs subspace feature alignment by exploiting the intrinsic low-dimensional structures of the data.
CORAL~\cite{sun2016return} proposes to align data representations via their second-order statistics.
DAN~\cite{long2015learning}, DaNN~\cite{ghifary2014domain}, and DDC~\cite{tzeng2014deep} leverage the strong expressive power of deep neural networks to align source and target representations in a reproducing kernel Hilbert space, based on MMD principle~\cite{smola2007algo,gretton2006kernel}.
In contrast, DANN~\cite{ganin2015unsupervised} addresses the domain shift through adversarial learning, upon which CDAN~\cite{long2018conditional} introduces conditioning strategies to further improve the performance.

\subsubsection{Federated learning and federated domain adaptation}

Federated learning (FL) is a decentralized ML paradigm that facilitates model training on local data while preserving privacy through the sharing of model updates across a network of devices.
Notwithstanding the remarkable progress achieved in the field of FL~\cite{yang2019federated,gu2020federated,smith2017federated}, the resulting trained models often falter in generalizing to data at novel devices, primarily due to \emph{domain shift}, where the feature distribution in target domain diverges from that in the source domain.
To address this challenge, federated domain adaptation (FDA) techniques have been proposed to mitigate domain shift by aligning the feature distributions of both source and target domains. 
While many FDA methods have demonstrated considerable empirical success, they often come at the cost of prohibitively high computational and communicational overheads~\cite{peng2019federated,song2020privacy}.
To alleviate the computational and communicational challenges inherent associated with FDA, recent work~\cite{sun2023feature,kang2022communicational} propose to compute MMD distance using a less resource-intensive feature extractor, thereby significantly reducing both computational and communicational overheads.
\cite{kang2022communicational} take this optimization a step further by introducing an approximate version of the original MMD loss, resulting in even more pronounced reductions in communication costs.

\medskip

In this work, we address the issue of domain shift challenge within FL and introduce the FedRF-TCA protocol. 
The proposed FedRF-TCA approach enjoys a much lower (in fact a data-size-\emph{independent}) communication complexity compared to existing FDA methods, and is, in addition, robust to network reliability and possible malicious (source) clients, all while delivering commendable performance.

\subsection{Maximum Mean Discrepancy Principle and Transfer Component Analysis}
\label{subsec:MMD-TCA}

The maximum mean discrepancy (MMD) was first introduced as a pivotal test statistic in the seminal works of \cite{smola2007algo,gretton2006kernel}. 
It serves as a tool to determine whether data points are drawn i.i.d.\@ from the same underlying distribution, via an evaluation of their features in a reproducing kernel Hilbert space (RKHS).
It quickly gains popularity as the preferred objective function for the purpose of ``aligning'' different feature distributions, from different domains, of particular interest to DA~\cite{ben-david2006Analysis}.
This further gave rise to numerous highly efficient transfer learning methods~\cite{pan2008Transfer,pan2010Survey}, one noteworthy approach among which is the Transfer Component Analysis (TCA) method~\cite{pan2011domain}.

TCA proposes to align the distributions of source $\x_S \in \RR^{p}$ and target data $\x_T \in \RR^p$ in some common feature space, via the following two-step transformation (visualized in \Cref{fig:space-TCA}):
\begin{enumerate}
  \item ``lift'' both the source and target data to some \emph{predefined} RKHS, via the kernel feature map $\phi(\cdot) \colon \RR^p \to \mathcal{H}$, to form the kernel matrix 
  \begin{equation}\label{eq:def_K}
    \K \equiv \langle \phi(\x_i), \phi(\x_j) \rangle_{i,j=1}^n \in \RR^{n \times n}, \quad n = n_S + n_T,
  \end{equation}
  with $\langle \cdot,  \cdot\rangle$ the inner product in $\mathcal{H}$;
  \item find, in an unsupervised manner say, a linear map $\tilde{\W} \in \RR^{n \times m}$ that ``projects'' the kernel features of both source and target data onto a low-dimensional space $\RR^m$ (with $m \ll n$ so that they can be computed or stored much more efficiently) according to the MMD principle, by solving the following optimization problem:
    \begin{align}
      \min_{\W }& \quad L_\gamma(\W) = \tr (\W^\T \K \bell \bell^\T \K \W) + \gamma \tr (\W^\T \W), \nonumber \\
      \text{s.t.}& \quad \W^\T \K \H \K \W = \I_m, \label{eq:vanilla_TCA} \tag{TCA}
    \end{align}
  with $\W = \K^{-1/2}\tilde{\W} \in \RR^{n \times m}$, regularization penalty $\gamma \geq 0$, $\H = \I_n - \one_n \one_n^\T/n \in \RR^{n \times n}$, and $\bell \in \RR^n $ the (normalized) source and target ``label'' vector $\bell \in \RR^{n}$ with its $i$th entry given by
  \begin{equation}\label{eq:def_ell}
    \ell_i = \frac1{n_S} 1_{\x_i \in \X_S} - \frac1{n_T} 1_{\x_i \in \X_T},
  \end{equation}
  for $\X_S\in \RR^{p \times n_S}$ the (set of) source data and $\X_T\in \RR^{p \times n_T}$ the (set of) target data, so that $\| \bell \|_2^2 = \frac{n}{n_S n_T}$.
  The solution to~\eqref{eq:vanilla_TCA} is explicitly given by $\W \in \RR^{n \times m}$, the $m$ top eigenvectors (that correspond to the largest $m$ eigenvalues) of $(\gamma \I_n + \K \bell \bell^\T \K)^{-1} \K \H \K$ satisfying
  \begin{equation}\label{eq:TCA_solution_eigen}
    (\gamma \I_n + \K \bell \bell^\T \K)^{-1} \K \H \K \cdot \W =  \W \cdot \bLambda
  \end{equation}
  with diagonal $\bLambda = \diag \{ \lambda_i \}_{i=1}^m$ containing the largest $m$ eigenvalues of $(\gamma \I_n + \K \bell \bell^\T \K)^{-1} \K \H \K$, with the obtained low-dimensional and ``aligned'' representations (of both source and target data) given by the columns of $\W^\T \K \in \RR^{m \times n}$.
\end{enumerate}
This two-step transformation from $\RR^p$ to RKHS $\mathcal H$ and then to $\RR^m$ is visualized in \Cref{fig:space-TCA}, where we denote $\X_{\RR^p}, \X_{\mathcal H}, \X_{\RR^m}$ the features in different spaces, and $\K_{\RR^p}, \K_{\mathcal H}, \K_{\RR^m}$ the corresponding kernel matrices, respectively.

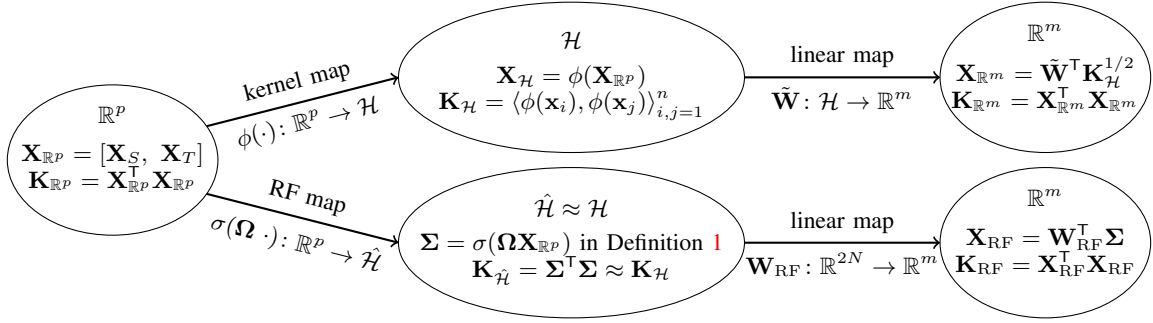
\begin{figure*}%[tb]
  % \begin{figure}[thb]
  \centering
  \begin{tikzpicture}[font=\footnotesize]%,scale=0.8
      \tikzstyle{every node}=[font=\small]
      \draw (-6.1,-1.1) circle [x radius=1.4cm, y radius=10mm];
      \node at (-6.1,-0.5) {$ \RR^p $};
      \node at (-6.1,-1.2) [align=center] {$ \X_{\RR^p} = [\X_S,~\X_T]$ \\ $\K_{\RR^p} = \X_{\RR^p}^\T \X_{\RR^p}$};
      
      % TCA part
      \draw (0,0) circle [x radius=2.3cm, y radius=10mm];
      \node at (0,0.5) {$\mathcal{H}$};
      \node at (0,-0.2) [align=center] {$ \X_{\mathcal H} = \phi(\X_{\RR^p}) $ \\ $\K_{\mathcal H} = {\langle \phi(\x_i),\phi(\x_j) \rangle}^n_{i,j=1}$};
      
      \draw (6.3,0) circle [x radius=1.4cm, y radius=10mm];
      \node at (6.3,0.6) {$ \RR^m $};
      \node at (6.3,-0.1) [align=center] {$ \X_{\RR^m} = \tilde{\W}^\T \K_{\mathcal H}^{1/2} $ \\ $\K_{\RR^m} = \X_{\RR^m}^\T \X_{\RR^m}$};
      
      \draw [thick,->] (-4.85,-0.65) -- (-2.3,0) node [midway, above, sloped] { kernel map} node [midway, below, sloped] {$\phi(\cdot) \colon \RR^p \to \mathcal{H}$};
      \draw [thick,,->] (2.3,0) -- (4.9,0) node [midway, above, sloped] {linear map} node [midway, below, sloped] {$\tilde{\W} \colon \mathcal{H} \to \RR^m $};
  
      % RF-TCA part
      \draw (0,-2.2) circle [x radius=2.3cm, y radius=10mm];
      \node at (0,-1.7) {$\hat{\mathcal{H}} \approx \mathcal{H} $}; %\subset \RR^{2N}
      \node at (0,-2.4) [align=center] {$ \bSigma = \sigma(\bOmega \X_{\RR^p})$ in \Cref{def:RFF} \\ $\K_{\hat{\mathcal{H}}} = \bSigma^\T \bSigma \approx \K_{\mathcal H} $  }; %= [\cos(\bOmega \cdot ),~ \sin(\bOmega \cdot)]/\sqrt N$
      
      \draw (6.3,-2.2) circle [x radius=1.4cm, y radius=10mm];
      \node at (6.3,-1.6) {$ \RR^m $};
      \node at (6.3,-2.3) [align=center] {$ \X_{\RF} = \W_{\RF}^\T \bSigma $ \\ $\K_{\RF} = \X_{\RF}^\T \X_{\RF}$};
      
      % RF-TCA line ->
      \draw [thick,->] (-4.85,-1.55) -- (-2.3,-2.2) node [midway, above, sloped] { RF map} node [midway, below, sloped] {$\sigma(\bOmega~\cdot) \colon \RR^p \to \hat{\mathcal{H}} $};
      \draw [thick,,->] (2.3,-2.2) -- (4.9,-2.2) node [midway, above, sloped] {linear map} node [midway, below, sloped] {$\W_{\RF} \colon \RR^{2N} \to \RR^m $};
    \end{tikzpicture}
    \caption{ { \textbf{Top}: Two-step ``transformation'' in TCA, from raw data space $\RR^p$ to (possibly infinite-dimensional) RKHS $\mathcal H$, and then to the low-dimensional $\RR^m$. \textbf{Bottom}: Two-step ``transformation'' in the proposed RF-TCA, from $\RR^p$ to random features kernel space $\hat{\mathcal{H}} \subset \RR^{2N}$, and then to the low-dimensional $\RR^m$.  }}
  \label{fig:space-TCA}
  \end{figure*}

To distinguish the original TCA approach in \eqref{eq:vanilla_TCA} from its forthcoming random-features-based variant, we will henceforth refer to it as \emph{vanilla} TCA.

Notably, it is worth mentioning that the matrix inverse $(\gamma \I_n + \K \bell \bell^\T \K)^{-1}$, being a rank-one update of the inverse of $\gamma \I_n$, can be \emph{explicitly} computed using the Sherman--Morrison formula (see \Cref{lem:sherman-morrison} in \Cref{sec:app_lemmas}).
This is discussed in the following result.
\begin{Lemma}[Equivalent form of vanilla TCA]\label{lem:equivalent_form}
The matrix $\gamma \I_n + \K \bell \bell^\T \K$ is invertible \emph{if and only if} $\gamma + \bell^\T \K^2\bell \neq 0$, and one has $ (\gamma \I_n + \K \bell \bell^\T \K )^{-1} = \frac1{\gamma} ( \I_n - \frac{ \K \bell \bell^\T \K }{\gamma + \bell^\T \K^2 \bell} )$, so that the ``transformed'' feature $\H \K \W$ obtained from vanilla TCA, with $\W$ solution to \eqref{eq:vanilla_TCA}, is given by the top eigenspace that corresponds to the largest $m$ eigenvalues of
\begin{equation}\label{eq:def_A}
  \A \equiv \H \left( \K^2 - \frac{ \K^2 \bell \bell^\T \K^2 }{\gamma + \bell^\T \K^2 \bell} \right) \H.
\end{equation}
\end{Lemma}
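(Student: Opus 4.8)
The plan is to handle the three assertions in turn, exploiting the rank-one structure of the perturbation. Since $\K$ is symmetric I would first rewrite $\K\bell\bell^\T\K = (\K\bell)(\K\bell)^\T$, exhibiting $\gamma\I_n + \K\bell\bell^\T\K$ as a rank-one update of $\gamma\I_n$. The invertibility criterion and the explicit inverse then drop out of a direct application of the Sherman--Morrison formula (\Cref{lem:sherman-morrison}) with base matrix $\gamma\I_n$ and $\uu = \vv = \K\bell$: the admissibility condition $1 + \gamma^{-1}(\K\bell)^\T(\K\bell) \neq 0$ is precisely $\gamma + \bell^\T\K^2\bell \neq 0$, and substituting $\gamma^{-1}\I_n$ for the base inverse yields the stated closed form after clearing factors of $\gamma$. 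For the ``if and only if'' I would additionally record the matrix-determinant identity $\det(\gamma\I_n + \K\bell\bell^\T\K) = \gamma^{n-1}(\gamma + \bell^\T\K^2\bell)$, which, in the regularized regime $\gamma \neq 0$ in which the inverse is even written, shows invertibility is equivalent to $\gamma + \bell^\T\K^2\bell \neq 0$.

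For the eigenspace claim, the key algebraic observation is that $\A$ factors through the inverse just computed. Writing $\B \equiv \gamma\I_n + \K\bell\bell^\T\K$, the formula from the first part gives $\gamma\B^{-1} = \I_n - (\gamma + \bell^\T\K^2\bell)^{-1}\K\bell\bell^\T\K$, hence $\gamma\K\B^{-1}\K = \K^2 - (\gamma + \bell^\T\K^2\bell)^{-1}\K^2\bell\bell^\T\K^2$. Sandwiching by $\H$ then gives the compact factorization $\A = \gamma\,\H\K\B^{-1}\K\H$. Combining this with the eigenrelation \eqref{eq:TCA_solution_eigen}, i.e. $\B^{-1}\K\H\K\,\W = \W\bLambda$, and using the projection identity $\H^2 = \H$, a one-line computation gives $\A\,(\H\K\W) = \gamma\,\H\K\,(\B^{-1}\K\H\K\,\W) = \gamma\,\H\K\,\W\bLambda = \gamma\,(\H\K\W)\,\bLambda$. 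Thus the columns of the transformed feature $\H\K\W$ are eigenvectors of $\A$ with eigenvalues $\gamma\lambda_1,\dots,\gamma\lambda_m$.

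It remains to argue that these are the \emph{top} $m$ eigenvectors of $\A$, and this is the step I expect to demand the most care. The clean route is to note that $\A/\gamma = \H\K\B^{-1}\K\H$ and the TCA matrix $\B^{-1}\K\H\K$ are of the form $PQ$ and $QP$ with $P = \H\K$ and $Q = \B^{-1}\K\H$ (invoking $\H^2 = \H$ once more to verify $QP = \B^{-1}\K\H\K$), so they share the same nonzero spectrum with matching multiplicities. Since $\gamma > 0$ makes $\B \succ 0$ and hence $\A = \gamma(\K\H)^\T\B^{-1}(\K\H) \succeq 0$, and since $\gamma > 0$ preserves the ordering of eigenvalues, the largest $m$ eigenvalues of $\A$ are exactly $\gamma\lambda_1 \geq \cdots \geq \gamma\lambda_m$, and the images $\H\K\W$ of the top eigenvectors of $\B^{-1}\K\H\K$ span precisely the corresponding top-$m$ eigenspace. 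The only point to watch is degeneracy (repeated or vanishing $\lambda_i$), which does not affect the conclusion that $\H\K\W$ lands in the correct invariant subspace; the $PQ$--$QP$ correspondence is what guarantees the map $\W \mapsto \H\K\W$ neither creates nor omits any large eigenvalue of $\A$.
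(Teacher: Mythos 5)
Your proposal is correct and takes essentially the same route the paper indicates for this lemma: apply the Sherman--Morrison formula (\Cref{lem:sherman-morrison}) to the rank-one update $\gamma \I_n + (\K\bell)(\K\bell)^\T$, then substitute the explicit inverse into the eigen-relation \eqref{eq:TCA_solution_eigen} and left-multiply by $\H\K$, using $\H^2 = \H$, to arrive at \eqref{eq:def_A}. The extra details you supply --- the determinant identity $\det(\gamma\I_n + \K\bell\bell^\T\K) = \gamma^{n-1}(\gamma + \bell^\T\K^2\bell)$ for the ``if and only if'' (correctly restricted to $\gamma \neq 0$), and the $PQ$ versus $QP$ spectral correspondence with $P = \H\K$, $Q = (\gamma\I_n + \K\bell\bell^\T\K)^{-1}\K\H$ to justify that the map $\W \mapsto \H\K\W$ carries the top-$m$ eigenspace to the top-$m$ eigenspace when $\gamma > 0$ --- are sound and fill in steps the paper leaves implicit.
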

The advantage of the formulation in \Cref{lem:equivalent_form} is that one no longer needs to invert the $n$-by-$n$ matrix (that takes $O(n^3)$ time in general) for vanilla TCA in \eqref{eq:TCA_solution_eigen}, but only to perform matrix additions and matrix-vector multiplication via, e.g., Lanczos iteration that takes $O(n^2)$ time \cite{golub2013matrix} to retrieve the top eigenvectors.

In \Cref{sec:appendix_fact_TCA} of the appendix, we provide further discussions and numerical results on vanilla TCA.

\subsection{Federated Averaging}
\label{subsec:FederatedAve}

Model aggregation is a key enabler in federated learning (FL) that allows for collaborative training of large-scale ML models by, e.g., combining the model parameters from different clients.
In this respect, FedSGD and federated averaging (FedAvg)~\cite{mcmahan2017communication} allows users to collaboratively benefit from shared models trained on diverse data without central storage.
FedMA~\cite{wang2020federated}, on the other hand, proposes to construct a shared ML model (e.g., a deep neural network) in a layer-by-layer manner by ``matching'' and averaging the hidden elements.

In the proposed FedRF-TCA protocol, we utilize federated averaging to update the model parameters in both the classifier $\C$ and the linear adaptive layer $\W_{\RF}$ in~\Cref{fig:multi_source_target_system}, due to its simplicity and low communication cost.
Federated averaging (FedAvg) was introduced in~\cite{mcmahan2017communication} to train a ML model in a decentralized way by iteratively aggregating locally computed updates.
Consider a FL scenario where the training data are available at $K$ clients, the FedAvg protocol proposes, in each iteration, to first locally update the model parameter at each client and then send the updated parameter to a central server. 
The server then aggregates the model parameters $\W_i, i \in \{1, \ldots, K\}$ from the selected clients. %$\W \leftarrow \sum_{i=1}^K {\frac{n_i}{n}\W_i }$.
FedAvg allows one to collectively reap the benefits of shared models trained on a large number of data partitioned over $K$ clients, without centrally storing these data in a server. 

\subsection{Federated Domain Adaptation}
\label{subsec:FederatedDA}

Federated Domain Adaptation (FDA) aims to resolve the non-i.i.d.~and domain shift issues of data in a FL context~\cite{peng2019federated, kang2022communicational}. 
Due to the different origin of datasets at different clients in a  FL system, there may exist domain shift between them~\cite{quinonero2022dataset}. 
FDA proposes to address this domain shift issue by transferring knowledge from the source clients to a novel target client, in an unsupervised fashion, as follows.
\begin{Definition}[Unsupervised Federated Domain Adaptation, UFDA]
  Consider a federated learning system having $K $ clients.
  Denote $\mathcal{D}_S^{(i)}, i \in \{1,\dots, K\}$ the source domains at these $K$ clients, in the sense that the data and label pairs satisfy $(\X_S^{(i)}, \Y_S^{(i)}) \sim \mathcal{D}_S^{(i)}$. 
  The problem of Unsupervised Federated Domain Adaptation (UFDA) aims to decide on (e.g., the label $\Y_T$ of) the unlabelled data $\X_T$ at a novel target client in the system from a different domain $(\X_T, \Y_T) \sim \mathcal{D}_T$, via federated information exchange in the system.
  \end{Definition}

\section{A Random Features Approach to TCA}
\label{sec:RF-TCA}

In this section, we propose RF-TCA, a random features-based computationally efficient approach to TCA.
Under standard assumptions, we demonstrate in \Cref{theo:perf_RF_TCA} that RF-TCA substantially mitigates the computational overhead associated with vanilla TCA, while exhibiting (virtually) no degradation in transfer learning performance compared to vanilla TCA.

\subsection{RF-TCA: A Random Features Approach to Efficient TCA}
\label{subsec:RF-TCA}

Here, we present RF-TCA, a random features-based approach to computationally efficient TCA, by focusing on random Fourier features (RFFs) and Gaussian kernel.
We first recall the definition of RFFs as follows.

\begin{Definition}[Random Fourier features for Gaussian kernels,~\cite{rahimi2008random}]\label{def:RFF}
For data matrix $\X = [\x_1, \ldots, \x_n] \in \RR^{p \times n}$ of size $n$, the random Fourier feature (RFF) matrix $\bSigma \in \RR^{2N \times n}$ of $\X$ is given by\footnote{Extensions of RFFs to other shift-invariant kernels (such as the Laplacian and Cauchy kernels) also exists, see~\cite{rahimi2008random}.}
\begin{equation}\label{eq:def_RFF}
  \bSigma = \frac{1}{\sqrt N} \begin{bmatrix} \cos(\bOmega \X) \\ \sin(\bOmega \X) \end{bmatrix} \in \RR^{2N \times n},
\end{equation}
with $N$ the number of random features, $\bOmega \in \RR^{N \times p}$ a random matrix having i.i.d.~standard Gaussian entries with mean zero and variance $1/\sigma^2$, i.e., $[\bOmega]_{ij} \overset{i.i.d.}{\sim} \mathcal N(0,1/\sigma^2)$, and nonlinear functions $\cos(\cdot), \sin(\cdot)$ applied entry-wise on $\bOmega \X$.
\end{Definition}
\noindent
Denote $\dim(\K) \equiv \tr \K/\| \K \|_2$ the intrinsic dimension of the Gaussian kernel matrix $\K = \K_{\Gauss}$, it is known, e.g., per \Cref{theo:RFF_approx_spectral} below, that a number of $N = O( \dim(\K) \log(n))$ RFFs suffices to well approximate the Gaussian kernel matrix $\K$ in a spectral norm sense $\bSigma^\T \bSigma \approx \K$.
This allows one to obtain an \emph{effective low-rank approximation} of $\K$, and can be further exploited to propose RF-TCA that significantly reduces the computational burden of vanilla TCA.

Precisely, following the idea of vanilla TCA in \eqref{eq:vanilla_TCA}, we aim to find a matrix $\W_{\RF} \in \RR^{2N \times m}$ that ``projects'' the RFFs $\bSigma$ onto an $m$-dimensional space, with $m \ll \min(2N,n)$, to obtain the ``transferred and aligned'' representations $\W_{\RF}^\T \bSigma \in \RR^{m \times n}$ with kernel matrix $\tilde \K = \bSigma^\T \W_{\RF} \W_{\RF}^\T \bSigma$ that minimizes the MMD loss.
Such projection $\W_{\RF} $ can be obtained by solving the following optimization problem:
\begin{align}
  \min_{\W \in \RR^{2N \times m}} &\quad \tr (\W^\T \bSigma \bell \bell^\T \bSigma^\T \W) + \gamma \tr (\W^\T \W)  \nonumber \\ 
  \text{s.t.} &\quad \W^\T \bSigma \H \bSigma^\T \W = \I_m, \label{eq:RF_TCA} \tag{RF-TCA}
\end{align}
with regularization $\gamma $ and $\H$ as for vanilla TCA in \eqref{eq:vanilla_TCA}.

Assume $\bSigma \H \bSigma^\T \in \RR^{2N \times 2N}$ is invertible and let $\tilde \W \equiv (\bSigma \H \bSigma^\T)^{\frac12} \W \in \RR^{2N \times m}$, the problem \eqref{eq:RF_TCA} then writes
\begin{align}
  \min &\quad \tr \left(\tilde \W^\T (\bSigma \H \bSigma^\T)^{-\frac12} (\bSigma \L \bSigma^\T + \gamma \I_{2N}) (\bSigma \H \bSigma^\T)^{-\frac12} \tilde \W \right) \nonumber \\ 
  \text{s.t.} & \quad \tilde \W^\T \tilde \W = \I_m,
\end{align}
which also takes the form of a trace minimization, see~\cite[Proposition~1]{pan2011domain} and \cite{von2007tutorial}. 
It then follows from the Rayleigh--Ritz theorem that the optimal solution $\W_{\RF}$ is given by the top $m$ eigenvectors of $ (\bSigma \bell \bell^\T \bSigma^\T + \gamma \I_{2N})^{-1} \bSigma \H \bSigma^\T$ associated to the $m$ largest eigenvalues. 

Also, as in \Cref{lem:equivalent_form} for vanilla TCA, $(\bSigma \bell \bell^\T \bSigma^\T + \gamma \I_{2N})^{-1}$ takes an explicit form per the Sherman--Morrison formula and can be computed with ease.
This leads to the RF-TCA approach in \Cref{alg:RF-TCA}. 

\begin{algorithm}[htb]
\caption{ Random features-based TCA (RF-TCA) }
\label{alg:RF-TCA}
\begin{algorithmic}[1]
\STATE {\bfseries Input:} Source data $ \X_S \in \RR^{p \times n_S}$, target data $ \X_T \in \RR^{p \times n_T}$, number of random features $N$, and projected dimension $m$.
\STATE {\bfseries Output:} ``Aligned'' features $\F_{\RF} = [\F_S,~\F_T]$ with $ \F_S \in \RR^{m \times n_S}, \F_T \in \RR^{m \times n_T}$ of $\X_S$ and $\X_T$, respectively.
\STATE\hspace{1em}Compute RFFs $\bSigma \in  \RR^{2N \times n}$ of the source and target data $\X = [\X_S,~\X_T] \in \RR^{p \times n}, n = n_S + n_T$ as in \eqref{eq:def_RFF}.
\STATE\hspace{1em}Compute $\W_{\RF} \in \RR^{2N \times m}$ as the $m$ dominant eigenvectors (that correspond to the largest $m$ eigenvalues) of
\begin{equation}
  \bSigma \H \bSigma^\T - \frac{ \bSigma \bell \bell^\T \bSigma^\T \bSigma \H \bSigma^\T }{ \gamma + \bell^\T \bSigma^\T \bSigma \bell },
\end{equation}
for $\H = \I_n -  \one_n \one_n^\T/n $ and $\bell \in \RR^n$ defined in \eqref{eq:def_ell}.
\STATE \textbf{return} RF-TCA ``transferred'' features as $\F_{\RF} = \W_{\RF}^\T \bSigma$.
\end{algorithmic}
\end{algorithm}

\subsection{Performance Guarantee of RF-TCA}
\label{subsec:R-TCA}

In the following result, we show that with the proposed RF-TCA approach in \Cref{alg:RF-TCA}, an order of $\dim(\K) \log(n)$ random features suffices to match the performance of vanilla TCA in the sense of MMD loss, up to a proper tuning of hyperparameter $\gamma$.

\begin{Theorem}[Performance guarantee for RF-TCA]\label{theo:perf_RF_TCA}
Let random Fourier features matrix $\bSigma \in \RR^{2N \times n}$ be defined in \Cref{def:RFF} with respect to Gaussian kernel matrix $\K = \K_{\Gauss} \in \RR^{n \times n}$ having maximum and minimum eigenvalue $\lambda_{\max}(\K) \geq \lambda_{\min}(\K) >0 0$. Let $\W_{\gamma} \in \RR^{n \times m}$ be the vanilla TCA solution to \eqref{eq:vanilla_TCA} with regularization $\gamma$, and let $\W_{\RF} \in \RR^{2N \times m}$ be the solution to \eqref{eq:RF_TCA}.
Then, for any given $\varepsilon \in (0, 1 )$, there exists $\W \in \RR^{n \times m}$ having MMD loss upper and lower bounded by that of $\W_{\gamma\lambda_{\min}}$ and $\W_{\gamma\lambda_{\max}}$, for which we have
\begin{equation}
  \EE \| \H \bSigma^\T \W_{\RF} - \H \K \W \|_F \leq \varepsilon,
\end{equation}
holds as long as $N \geq C \dim(\K) \log(n)/ \varepsilon^2 $ for some positive constant $C$ that depends on the \emph{relative eigen-gap} $\tilde{\Delta}_\lambda \equiv \min_{1 \leq i \leq m} \left|\lambda_i(\A_{\rm R})-\lambda_{i+1}(\A_{\rm R}) \right|/ \| \K \|_2 > 0$, with $\lambda_i(\A_{\rm R})$ the eigenvalues of $\A_{\rm R} \equiv \H \K (\gamma \K + \K \bell \bell^\T \K )^{-1} \K \H$ listed in a numerically decreasing order.

\end{Theorem}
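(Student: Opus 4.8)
The plan is to reduce the claimed feature-matching bound to a spectral approximation of the kernel, and then propagate that approximation through the relevant matrix function and its top-$m$ eigenspace, with a Davis--Kahan argument doing the heavy lifting. First I would put both sides into a common form. Applying the Sherman--Morrison reduction of \Cref{lem:equivalent_form} to \eqref{eq:RF_TCA} with $\tilde\K \equiv \bSigma^\T\bSigma$, and using the primal--dual substitution $u = \bSigma^\T w$ to transport the generalized eigenproblem from the $2N$-dimensional feature space back to the $n$-dimensional sample space, I would show that the RF-TCA transferred feature $\H\bSigma^\T\W_{\RF}$ spans the top-$m$ eigenspace of $\hat\A_{\rm R}$, the matrix obtained from $\A_{\rm R}$ by replacing $\K$ with $\tilde\K$. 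The theorem then becomes a statement about the stability of the top-$m$ eigenspace of the map $\K \mapsto \A_{\rm R}(\K)$ under the perturbation $\K \mapsto \tilde\K$.

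Next I would invoke the random-features spectral approximation \Cref{theo:RFF_approx_spectral}: for $N \geq C_0\,\dim(\K)\log(n)/\varepsilon_0^2$ one has $\EE\|\tilde\K - \K\|_2 \leq \varepsilon_0 \|\K\|_2$ (integrating the high-probability bound into the stated expectation form). I then push this through $\A_{\rm R}(\cdot)$. The only delicate factor is the inverse $(\gamma\K + \K\bell\bell^\T\K)^{-1}$; using a resolvent identity together with the lower bound $\gamma\K + \K\bell\bell^\T\K \succeq \gamma\lambda_{\min}(\K)\,\I_n$ yields $\EE\|\hat\A_{\rm R} - \A_{\rm R}\|_2 \leq C_1\varepsilon_0$, with $C_1$ depending on $\gamma,\lambda_{\min}(\K),\lambda_{\max}(\K)$. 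This step also pins down the comparison $\W$. The key algebraic fact is $\lambda_{\min}(\K)\,\K \preceq \K^2 \preceq \lambda_{\max}(\K)\,\K$: since the \emph{vanilla} reference matrix $\A$ in \eqref{eq:def_A} is built from $\K^2$ while $\A_{\rm R}$ is built from $\K$, replacing $\K^2$ by $\lambda\K$ with $\lambda \in [\lambda_{\min}(\K),\lambda_{\max}(\K)]$ matches $\A_{\rm R}$ (at penalty $\gamma$) with a genuine vanilla TCA instance at effective penalty $\gamma' = \gamma\lambda$. Monotonicity of the MMD objective in the penalty then sandwiches the loss of the resulting $\W$ between those of $\W_{\gamma\lambda_{\min}}$ and $\W_{\gamma\lambda_{\max}}$, which is exactly the asserted object.

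I would then apply a Davis--Kahan $\sin\Theta$ bound: with the $m$-th spectral gap of $\A_{\rm R}$ equal to $\tilde{\Delta}_\lambda\|\K\|_2 > 0$, the top-$m$ eigenprojections satisfy $\EE\|\hat P_m - P_m\|_F \leq C_2 \,\EE\|\hat\A_{\rm R}-\A_{\rm R}\|_2 / (\tilde{\Delta}_\lambda\|\K\|_2)$. Converting this projection bound into the feature-matrix bound requires one more observation: the eigenbasis is determined only up to a rotation within the top-$m$ eigenspace, so I would align the bases of $\hat P_m$ and $P_m$ by the orthogonal Procrustes rotation. This rotation freedom, together with the effective-regularization window above, is precisely the ``there exists $\W$'' latitude in the statement, and it yields $\EE\|\H\bSigma^\T\W_{\RF} - \H\K\W\|_F \leq C_3\,\varepsilon_0/(\tilde{\Delta}_\lambda\|\K\|_2)$. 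Choosing $\varepsilon_0 = \Theta(\varepsilon)$ and folding $C_1, C_2, C_3, \tilde{\Delta}_\lambda, \|\K\|_2$ into a single constant $C = C(\tilde{\Delta}_\lambda)$ gives the claimed sample size $N \geq C\,\dim(\K)\log(n)/\varepsilon^2$.

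The hard part will be the perturbation of the inverse $(\gamma\K + \K\bell\bell^\T\K)^{-1}$ with the correct dependence on $\gamma$, $\lambda_{\min}(\K)$, and $\lambda_{\max}(\K)$, and in particular verifying that $\hat\A_{\rm R}$ (and its limit $\A_{\rm R}$) genuinely corresponds to a vanilla TCA solution with an effective penalty inside $[\gamma\lambda_{\min}(\K),\gamma\lambda_{\max}(\K)]$ rather than to some matrix outside the TCA family; here the sandwich $\lambda_{\min}(\K)\K \preceq \K^2 \preceq \lambda_{\max}(\K)\K$ is essential. A secondary difficulty is the projection-to-feature conversion: Davis--Kahan controls eigenprojections, whereas the theorem asks for a bound on the feature matrices themselves, so the Procrustes alignment and the control of $\|\H\K\|_2$-type prefactors, as well as the invertibility of $\bSigma\H\bSigma^\T$ assumed in deriving \eqref{eq:RF_TCA}, must be handled with care.
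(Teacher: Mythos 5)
Your proposal reproduces the paper's overall architecture: both bridge vanilla TCA to RF-TCA through an intermediate kernel-regularized problem (the paper's R-TCA, \Cref{lem:TCA-VS-R-TCA}, whose loss is sandwiched between $L_{\gamma\lambda_{\min}}$ and $L_{\gamma\lambda_{\max}}$ --- your $\lambda_{\min}(\K)\K \preceq \K^2 \preceq \lambda_{\max}(\K)\K$ sandwich is the same fact in matrix form), both invoke the Tropp bound of \Cref{theo:RFF_approx_spectral} with $N \gtrsim \dim(\K)\log(n)/\varepsilon^2$, and both finish with Davis--Kahan. Your subspace-projection-plus-Procrustes version of the last step is a legitimate variant of the paper's eigenvector-by-eigenvector application of \Cref{theo:davis-kahan} (the rotation can indeed be absorbed into the choice of $\W$, since the R-TCA objective and constraint are invariant under right-multiplication of $\W$ by an orthogonal matrix).

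However, there is a genuine gap in your perturbation step, which you yourself flag as ``the hard part.'' You propose a resolvent identity on $(\gamma\K + \K\bell\bell^\T\K)^{-1}$, using $\gamma\K + \K\bell\bell^\T\K \succeq \gamma\lambda_{\min}(\K)\I_n$. But to compare $\A_{\rm R}$ with its RF analogue $\hat\A_{\rm R}$ via a resolvent identity you need the inverse of the \emph{perturbed} $n\times n$ matrix $\gamma\tilde\K + \tilde\K\bell\bell^\T\tilde\K$ with $\tilde\K = \bSigma^\T\bSigma$, and this matrix is singular whenever $\tilde\K$ is rank-deficient --- which is exactly the regime the theorem targets, since $\rank(\tilde\K) \leq 2N$ and $N = O(\dim(\K)\log n) \ll n$; the needed lower bound fails because $\lambda_{\min}(\tilde\K) = 0$. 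The paper avoids this by never forming an $n \times n$ inverse on the RF side: the RF-TCA eigenproblem lives in $\RR^{2N}$ with the always-invertible $(\bSigma\bell\bell^\T\bSigma^\T + \gamma\I_{2N})^{-1}$, and Sherman--Morrison puts \emph{both} matrices in the inverse-free form $\frac1\gamma\H\bigl(\Z - \Z\bell\bell^\T\Z/(\gamma + \bell^\T\Z\bell)\bigr)\H$ with $\Z \in \{\K, \bSigma^\T\bSigma\}$ (see \eqref{eq:R_TCA_solution_eigen_sym} and \eqref{eq:RF_TCA_1}). The perturbation analysis (\Cref{coro:RF_TCA_concentration}) is then elementary algebra on scalar denominators, using only $\bell^\T\Z\bell/(\gamma + \bell^\T\Z\bell) \leq 1$ and $\gamma/(\gamma + \bell^\T\Z\bell) \leq 1$, and yields a universal constant (a factor $5$) rather than your $C_1(\gamma, \lambda_{\min}, \lambda_{\max})$ --- which also matters because the theorem permits $C$ to depend only on the relative eigen-gap $\tilde{\Delta}_\lambda$. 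Note that your own opening move (Sherman--Morrison reduction plus the substitution $u = \bSigma^\T w$) already hands you this inverse-free form; if you carry out the perturbation directly on it, the resolvent identity is unnecessary and the gap disappears.
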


\begin{Remark}[Eigen-gap condition]\label{rem:eign-gap}\normalfont
Note that the bound in \Cref{theo:perf_RF_TCA} becomes vacuous when the \emph{relative} eigen-gap condition $\tilde{\Delta}_\lambda \equiv \min_{1 \leq i \leq m} \left|\lambda_i(\A)-\lambda_{i+1}(\A) \right|/\| \K \|_2 > 0$ is violated.
This condition is needed in the application of the Davis--Kahan theorem and is rather standard in the literature of spectral methods, see also~\cite{joseph2016impact,von2007tutorial}.
\end{Remark}
\noindent
\Cref{theo:perf_RF_TCA} states, under the eigen-gap condition in \Cref{rem:eign-gap}, that $N = O( \dim(\K) \log (n))$ random features are enough to obtain RF-TCA transferred features as an accurate ``proxy'' of the those from vanilla TCA, up to regularization.

\begin{proof}[Proof of \Cref{theo:perf_RF_TCA}]
To prove \Cref{theo:perf_RF_TCA}, we need to introduce another TCA-type method that can, as we shall see, connect the performance of vanilla to that of RF-TCA.
This is done by penalizing the ``projector'' $\tilde \W \colon \mathcal H \to \RR^p$ on the RKHS, as opposed to regularizing the Frobenius norm of the linear map $\W \in \RR^{n \times m}$ in the case of vanilla TCA in \Cref{subsec:MMD-TCA}.
This approach, referred to as Regularized TCA (R-TCA) in the subsequent section, formulates the following optimization problem:
\begin{align}
  \min_{\W} &\quad \tilde L_\gamma(\W)= \tr (\W^\T \K \bell \bell^\T \K \W) + \gamma \tr (\W^\T \K \W), \nonumber \\
  \text{s.t.}&\quad \W^\T \K \H \K \W = \I_m, \label{eq:alternative_TCA} \tag{R-TCA}
\end{align}
and differs from \eqref{eq:vanilla_TCA} \emph{only} in the regularization term.
An inspection reveals the following close relationship between the objective functions of the two approaches.
%%%
\begin{Lemma}[TCA versus R-TCA]\label{lem:TCA-VS-R-TCA}
Let the kernel matrix $\K$ be defined as in \eqref{eq:def_K} and let $\lambda_{\max}(\K) \geq \lambda_{\min}(\K) \geq 0$ be its maximum and minimum eigenvalue, respectively. 
Then, for any given $\W \in \RR^{n \times m}$, one has
\begin{align*}
 L_{\gamma \lambda_{\min}(\K) } (\W) \leq  \tilde L_{\gamma}(\W) \leq L_{\gamma \lambda_{\max}(\K) } (\W).
\end{align*}
\end{Lemma}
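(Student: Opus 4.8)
The plan is to exploit the observation, emphasized already in the statement, that $L_\gamma$ and $\tilde L_\gamma$ differ \emph{only} in their regularization term, while sharing the common data-fitting term $\tr(\W^\T \K \bell \bell^\T \K \W)$. First I would write out the two definitions and subtract, so that the common term cancels and one is left with
\[
  \tilde L_{\gamma}(\W) - L_{\gamma'}(\W) = \gamma\, \tr(\W^\T \K \W) - \gamma'\, \tr(\W^\T \W).
\]
Taking $\gamma' = \gamma\lambda_{\min}(\K)$ and $\gamma' = \gamma\lambda_{\max}(\K)$ in turn, and recalling $\gamma \geq 0$, the claimed two-sided bound reduces to the single sandwiched estimate
\[
  \lambda_{\min}(\K)\, \tr(\W^\T \W) \;\leq\; \tr(\W^\T \K \W) \;\leq\; \lambda_{\max}(\K)\, \tr(\W^\T \W).
\]

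To establish this estimate I would use that $\K$ is symmetric positive semi-definite (being a Gram matrix in $\mathcal H$, cf.\ \eqref{eq:def_K}), so its spectral bounds yield the Loewner-order inequalities $\lambda_{\min}(\K)\,\I_n \preceq \K \preceq \lambda_{\max}(\K)\,\I_n$. Congruence by $\W$ preserves the Loewner order, giving $\lambda_{\min}(\K)\,\W^\T \W \preceq \W^\T \K \W \preceq \lambda_{\max}(\K)\,\W^\T \W$; taking traces and invoking monotonicity of the trace on the positive semi-definite cone produces exactly the sandwiched estimate. (Equivalently, one may argue column-by-column, bounding each Rayleigh quotient of $\K$ by $\lambda_{\min}(\K)$ and $\lambda_{\max}(\K)$ and summing.) Multiplying through by $\gamma \geq 0$ and adding back the common term $\tr(\W^\T \K \bell \bell^\T \K \W)$ to all three sides recovers the stated chain $L_{\gamma\lambda_{\min}(\K)}(\W) \leq \tilde L_{\gamma}(\W) \leq L_{\gamma\lambda_{\max}(\K)}(\W)$.

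I do not expect a substantial obstacle here: the result is an elementary consequence of the spectral bounds on $\K$ together with congruence-invariance and monotonicity of the trace. The only points deserving a line of care are (i) checking that $\gamma \geq 0$, so that multiplying the sandwiched estimate by $\gamma$ preserves the direction of the inequalities, and (ii) bookkeeping the rescaled parameters, so that the $\lambda_{\min}(\K)$ endpoint is correctly matched to the lower bound $L_{\gamma\lambda_{\min}(\K)}$ and the $\lambda_{\max}(\K)$ endpoint to $L_{\gamma\lambda_{\max}(\K)}$. Both become immediate once the data-fitting term is identified as common to $L$ and $\tilde L$.
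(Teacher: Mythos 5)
Your proposal is correct and follows essentially the same route as the paper, which proves the lemma directly from the sandwiched estimate $0 \leq \lambda_{\min}(\K)\,\tr(\W^\T \W) \leq \tr(\W^\T \K \W) \leq \lambda_{\max}(\K)\,\tr(\W^\T \W)$, then multiplies by $\gamma \geq 0$ and adds the common data-fitting term. Your Loewner-order/Rayleigh-quotient justification simply fills in the detail that the paper leaves as ``straightforward.''
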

%%%
\noindent
The proof of \Cref{lem:TCA-VS-R-TCA} follows straightforwardly form the fact that $ 0 \leq \lambda_{\min}(\K) \tr (\W^\T \W) \leq \tr (\W^\T \K \W) \leq \lambda_{\max} (\K) \tr (\W^\T \W)$.
It thus becomes evident that the two optimization problems in \eqref{eq:vanilla_TCA} and \eqref{eq:alternative_TCA} are ``equivalent'' concerning their loss functions, differing only by a scaling factor on the regularization parameter $\gamma$.

As for vanilla TCA in \eqref{eq:TCA_solution_eigen}, the R-TCA approach in \eqref{eq:alternative_TCA} also admits an \emph{explicit} solution $\W_{\rm R}$ , given by the top $m$ eigenvector of $(\gamma \K + \K \bell \bell^\T \K )^{-1} \K \H \K$ (the proof of which is the same as \cite[Proposition~1]{pan2011domain} and is omitted), that is
\begin{equation}\label{eq:TCA_solution_eigen_sym}
  \H \K (\gamma \K + \K \bell \bell^\T \K )^{-1} \K \H \cdot \H \K \W_{\rm R} =  \H \K \W_{\rm R} \cdot \bLambda_{\rm R},
\end{equation}
where we denote $\bLambda_{\rm R}$ the diagonal matrix containing the $m$ largest eigenvalues of $(\gamma \K + \K \bell \bell^\T \K )^{-1} \K \H \K$.

To finish the proof of \Cref{theo:perf_RF_TCA}, it remains to compare the solution of R-TCA and RF-TCA, for an order of $N = O( \dim(\K) \log(n))$ RFFs.
We need the following result.

\begin{Theorem}[RFFs approximation of Gaussian kernels,~{\cite[Section~6.5]{tropp2015matrix}}] %{\cite[Theorem~7]{avron17random}}
\label{theo:RFF_approx_spectral}
For random Fourier features $\bSigma \in \RR^{2N \times n}$ of data $\X \in \RR^{p \times n}$ as defined in \Cref{def:RFF}, one has
\begin{equation*}
  \EE \| \bSigma^\T \bSigma - \K \|_2 \leq C \left( \sqrt{ \frac{n \log(n)}N  \| \K  \|_2 } +  \frac{n \log(n)}N \right),
\end{equation*}
holds for some universal constant $C > 0$ independent of $N$ and $n$, with $\K = \K_{\Gauss} = \{ \exp(- \| \x_i - \x_j \|^2/ (2\sigma^2) ) \}_{i,j=1}^n$ the Gaussian kernel matrix of $\X$.
\end{Theorem}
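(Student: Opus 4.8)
The plan is to view $\bSigma^\T \bSigma$ as an empirical average of $N$ independent, identically distributed positive-semidefinite random matrices, and then invoke the matrix Bernstein inequality in expectation. Writing $\omega_1, \dots, \omega_N \in \RR^p$ for the independent rows of $\bOmega$ and setting $\mathbf{c}_k = [\cos(\omega_k^\T \x_i)]_{i=1}^n$ and $\mathbf{s}_k = [\sin(\omega_k^\T \x_i)]_{i=1}^n$ in $\RR^n$, the identity $\cos a \cos b + \sin a \sin b = \cos(a-b)$ gives
\begin{equation}
\bSigma^\T \bSigma = \frac1N \sum_{k=1}^N \M_k, \qquad \M_k \equiv \mathbf{c}_k \mathbf{c}_k^\T + \mathbf{s}_k \mathbf{s}_k^\T \succeq 0,
\end{equation}
so each summand $\M_k$ is p.s.d.\ of rank at most two and $\bSigma^\T \bSigma - \K = \frac1N \sum_{k=1}^N (\M_k - \K)$ is a sum of i.i.d.\ matrices.

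First I would identify the mean. By Bochner's theorem, for $\omega \sim \NN(0, \sigma^{-2}\I_p)$ one has $\EE[\cos(\omega^\T(\x_i - \x_j))] = \exp(-\|\x_i - \x_j\|^2/(2\sigma^2)) = \K_{ij}$, whence $\EE[\M_k] = \K$ and the summands $\M_k - \K$ are centered. Next I would control the two quantities that drive matrix Bernstein. For the uniform norm bound, $\cos^2 + \sin^2 = 1$ yields $\tr \M_k = \|\mathbf{c}_k\|_2^2 + \|\mathbf{s}_k\|_2^2 = n$, and since $\M_k \succeq 0$ this forces $\|\M_k\|_2 \le \tr \M_k = n$; together with $\|\K\|_2 \le \tr \K = n$ this gives $\|\frac1N(\M_k - \K)\|_2 \le 2n/N =: L$. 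For the matrix variance, I would use $\M_k^2 \preceq \|\M_k\|_2\, \M_k \preceq n\, \M_k$ and $\EE[(\M_k - \K)^2] = \EE[\M_k^2] - \K^2 \preceq \EE[\M_k^2] \preceq n \K$, so that the variance proxy obeys
\begin{equation}
v \equiv \Big\| \sum_{k=1}^N \EE \big[ \tfrac1{N^2}(\M_k - \K)^2 \big] \Big\|_2 \le \frac1N \big\| \EE[(\M_1 - \K)^2] \big\|_2 \le \frac{n}{N}\|\K\|_2.
\end{equation}

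Finally I would apply the matrix Bernstein inequality (in expectation) to the independent centered Hermitian matrices $\frac1N(\M_k - \K)$, with ambient dimension $n$, to obtain
\begin{equation}
\EE \|\bSigma^\T \bSigma - \K\|_2 \le C \Big( \sqrt{v \log n} + L \log n \Big) = C \Big( \sqrt{\tfrac{n \log n}{N}\|\K\|_2} + \tfrac{2 n \log n}{N} \Big),
\end{equation}
which is exactly the claimed bound after absorbing absolute constants into $C$. I expect the main obstacle to be the variance estimate: the bound $\EE[\M_k^2] \preceq n\K$ must be established in the \emph{operator} (p.s.d.) order rather than entrywise or in Frobenius norm, since only the operator-norm variance proxy feeds correctly into matrix Bernstein and produces the sharp $\sqrt{\|\K\|_2}$ scaling rather than a crude $\sqrt{n}$. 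The remaining work is bookkeeping: confirming $\EE[\M_k] = \K$ through the Bochner computation and selecting the form of matrix Bernstein whose dimensional factor is $\log n$, matching the stated bound.
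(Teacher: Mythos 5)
Your proof is correct and is essentially the same argument as the paper's source for this statement: the paper offers no proof of its own but cites Tropp (Section~6.5), where the bound is derived exactly as you do---expressing $\bSigma^\T\bSigma$ as an i.i.d.\ average of bounded p.s.d.\ matrices, identifying the mean $\K$ via Bochner's identity, bounding the uniform term through $\tr \M_k = n$ and the variance proxy through $\EE[\M_k^2] \preceq n\K$, and applying the expectation form of the matrix Bernstein inequality. Your only (immaterial) deviation is working with rank-two summands $\mathbf{c}_k\mathbf{c}_k^\T + \mathbf{s}_k\mathbf{s}_k^\T$ adapted to the paper's $\cos/\sin$ definition of RFFs, rather than Tropp's rank-one features built from a random phase shift.
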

\noindent
Given $\varepsilon \in (0,1)$, taking $N \geq C' \dim(\K) \log(n)/ \varepsilon^2 $ for some universal constant $C' > 0$ independent of $n$, with $\dim(\K) \equiv \tr \K/\| \K \|_2  = n/ \| \K \|_2 $ the \emph{intrinsic dimension} of Gaussian kernel matrix $\K = \K_{\Gauss}$, it follows from \Cref{theo:RFF_approx_spectral} that the \emph{expected} relative error satisfies
\begin{equation}
   \EE  \| \bSigma^\T \bSigma - \K \|_2 / \| \K \|_2  \leq \varepsilon,
\end{equation}
and it suffices to have an order of $\dim(\K) \log(n)$ RFFs so that the RFF Gram matrix $\bSigma^\T \bSigma $ is a good approximation of the Gaussian kernel matrix in a \emph{spectral norm} sense.

With the spectral norm approximation in \Cref{theo:RFF_approx_spectral}, the proof of \Cref{theo:perf_RF_TCA} follows from basic algebraic manipulations.
We refer the readers to \Cref{sec:proof_of_theo_RF_TCA} for details.
\end{proof}

\medskip

We have seen in this section that the proposed RF-TCA approach in \Cref{alg:RF-TCA} enjoys the numerical (and thus communicational in a FL context) advantage of operating on a much lower dimensional space and at the same time, obtains ``transferred'' features close to those of TCA per \Cref{theo:perf_RF_TCA}.
Following our framework in \Cref{fig:multi_source_target_system} of \Cref{subsec:our_approach}, we discuss next how the proposed RF-TCA serves as a cornerstone to the robust and communication-efficient FedRF-TCA approach, and, at the same time, provides additional privacy guarantee.

\section{Robust and Communication-efficient Federated Domain Adaptation via RF-TCA}
\label{sec:UFDA}

The majority, if not all, of the DA methods based on MMD, while known for their effectiveness in aligning feature distributions between source and target data, impose a substantial computational burden. 
This computational complexity primarily arises from the necessity of operations involving the \emph{large} kernel matrices of both source and target data, as highlighted in prior works~\cite{long2015learning,tzeng2014deep}.
This makes them unsuitable for use in an FL context, for which low communication and computational costs are desired~\cite{sun2023feature}.

In response to this challenge, we propose Federated Random Features-based Transfer Component Analysis (FedRF-TCA), a novel FDA approach designed to alleviate the computational and communicational demands and enhance privacy protection. 
FedRF-TCA achieves this by employing compression and randomization techniques during the (asynchronous) training procedure, leveraging the RF-TCA methodology introduced in \Cref{sec:RF-TCA}, to optimize information exchange within the MMD-based FDA scheme. 
In comparison to existing MMD-based FDA protocols, FedRF-TCA offers significant reductions in both communication and computational overheads while maintaining commendable performance.

Specifically, we are primarily focused on a federated multi-source classification task, which involves multiple source clients coexisting alongside one target client. 
Each source client is characterized by a distinct domain, comprising its own dataset and corresponding labels. 
In contrast, the target client solely possesses data without accompanying labels. 
The overarching goal of FDA is to train a model that can effectively classify the target data while also addressing concerns such as data leakage.

The FedRF-TCA protocol comprises a structured framework for each client, encompassing three distinct local components, as illustrated in \Cref{fig:multi_source_target_system}:
\begin{itemize}
    \item[(i)] \textbf{Feature Extractor}: For each source and target domain, an dedicated  local feature extractor is trained by fine-tuning a pretrained model, such as ResNet-50. 
    These feature extractors are denoted as $\G_S^{(1)}, \ldots , \G_S^{(K)}$ for the $K$ source domains $\mathcal{D}_S^{(1)}, \ldots, \mathcal{D}_S^{(K)}$, and $\G_T$ for the target domain $\mathcal{D}_T$ in \Cref{fig:multi_source_target_system}.
    This training process yields local and ``unaligned'' source features $\X_S^{(1)} \in \RR^{p \times n_S^{(1)}}, \ldots, \X_S^{(K)} \in \RR^{p \times n_S^{(K)}}$ and target features $\X_T \in \RR^{p \times n_T}$, respectively.
    \item[(ii)] \textbf{RF-TCA Transfer Module}: In this module, the random feature map in \Cref{def:RFF} is applied on $\X_S^{(1)}, \ldots, \X_S^{(K)}$ and $\X_T$ to intermediate representations $\bSigma_S^{(1)}, \ldots, \bSigma_S^{(K)}$ and $\bSigma_T$. 
    Subsequently, an \emph{adaptive} linear layer $\W_{\RF}$ is applied to these RFFs, aligning the source and target feature distributions, by minimizing their MMD distance.
    \item[(iii)] \textbf{Classifier}: Finally, source classifiers $\C_S^{(i)}$ are trained on the ``aligned'' source features from RF-TCA module, and their outputs are aggregated to obtain the target classifier, serving the desired classification purpose.
\end{itemize}

In the remainder of section, we discuss in \Cref{subsec:sec_arch_RF} how, for a given pair of source and target domain $(\mathcal{D}_S^{(i)}, \mathcal{D}_T)$, features are extracted from data in distinct domains, and ``transferred'' into a common domain via random feature maps and a trainable linear layer $\W_{\RF}$; 
and then in \Cref{subsec:sec_arch_net} how, in a multi-source FDA setting, the local models of source clients are updated and subsequently aggregated to construct the target model.
Finally, in \Cref{subsec:advanteges}, we undertake a comprehensive analysis of the advantages conferred by the proposed FedRF-TCA protocol, in terms of its computational and communication overhead, in addition to its enhanced robustness as well as privacy protection mechanisms.

\subsection{MMD-based Random Features Alignment} 
\label{subsec:sec_arch_RF}

To minimize the MMD distance between source and target feature distributions, FedRF-TCA proposes to use 
\begin{itemize}
  \item[(i)] a (local) feature extractor $\G_S$; followed by
  \item[(ii-i)] a random features ``compressor'' as in \Cref{def:RFF}; and 
  \item[(ii-ii)] a linear layer $\W_{\RF} \in \mathbb{R}^{2N \times m}$ that aligns features from source and target domains. 
\end{itemize}
Specifically, for a given pair of source-target domain $(\mathcal{D}_S^{(i)}, \mathcal{D}_T)$, $i \in \{ 1, \ldots, K\}$, we define the following MMD loss between the $i$th source domain $\mathcal{D}_S^{(i)}$ and the target domain $\mathcal{D}_T$ as
\begin{align}
    &L_{\rm MMD}^{(i)} (\G_S^{(i)}, \G_T, \W_{\RF}) \label{eq:def_mmd_loss} \\ 
    &= ( \bSigma_S^{(i)} \bell_S^{(i)} + \bSigma_T \bell_T )^\T \W_{\RF} \W_{\RF}^\T ( \bSigma_S^{(i)} \bell_S^{(i)} + \bSigma_T \bell_T ), \nonumber 
\end{align}
in the same spirit as in \eqref{eq:RF_TCA} for RF-TCA but without the regularization term, where the RF matrices $\bSigma_S^{(i)}$ and $\bSigma_T$ are computed as in \Cref{def:RFF} with input $\X_S^{(i)}$ and $\X_T$, respectively, $\bell_S^{(i)} = \one_{n_S^{(i)}}/n_S^{(i)}$ for data from the $i$th source domain and $\bell_T = - \one_{n_T}/n_T$ for target from target domain as in \eqref{eq:def_ell}, with $N$ the number of random features. 

Note that the MMD loss in \eqref{eq:def_mmd_loss} is defined for the $i$th source domain $\mathcal{D}_S^{(i)}$ and the target domain $\mathcal{D}_T$, and depends on the source and target \emph{local} features extractors $\G_S^{(i)}$ and $ \G_T$, as well as the local linear (RF-based) feature ``aligner'' $\W_{\RF}$ at source or target client.

The proposed MMD formulation in \eqref{eq:def_mmd_loss} extends the RF-TCA thoroughly approach disused in \Cref{sec:RF-TCA}, by relaxing the second-order constraint $\W_{\RF}^\T \bSigma \H \bSigma^\T \W_{\RF} = \I_m$ in \eqref{eq:RF_TCA}, and thereby allows for \emph{learnable} linear feature aligner $\W_{\RF}$ \emph{and} local feature extractors $\G_S, \G_T$.
This makes the MMD minimization in \eqref{eq:def_mmd_loss} more flexible, e.g., by avoiding the need of simultaneous access to \emph{all} source and target data.
Notably, the linear layer $\W_{\RF}$ can now be improved (e.g., via backpropagation) by exploiting the information in a mini-batch of source and target data, a property of particular interest to federated learning.

In the context of multi-source federated domain adaptation, the frequent message passing between clients results in communication overhead and imposes robustness requirements on the network.
In this respect, the proposed formulation in \eqref{eq:def_mmd_loss} is both communication-efficient and robust in that:

In the context of multi-source federated domain adaptation, the frequent exchange of messages between clients can lead to significant communication overhead and impose robustness requirements on the network infrastructure.
In this context, our proposed formulation in \eqref{eq:def_mmd_loss} offers notable advantages in terms of communication efficiency and robustness:
\begin{enumerate}
  \item \textbf{Efficient Communication}: Messages shared between source and target clients are highly compressed and take the form of summed features as $\bSigma \bell \in \RR^{2N}$. 
  Importantly, the size of such messages is solely dependent on the number of random features $N$, and is \emph{independent} of the (source and/or target) sample size.
  \item \textbf{Robust Asynchronous Training}: The federated domain adaptation process can be performed in an asynchronous manner.
  This is made possible due to the proposed decomposable MMD loss in \eqref{eq:def_mmd_loss}, for a specific pair of source-target domains $(\mathcal{D}_S^{(i)}, \mathcal{D}_T)$. Consequently, the FDA system can continue to improve during the training/transferring procedure, even in the presence of stragglers or irregularities in client participation.
\end{enumerate}

\subsection{FedRF-TCA: multi-source FDA via RF-TCA and FedAvg}
\label{subsec:sec_arch_net}

While most existing DA methods are designed for a single pair of source-target domain, here we show how the proposed FedRF-TCA protocol naturally extends to a multi-source FDA scenario. 

Consider $K$ source domains $\mathcal{D}_S^{(i)}$ with associated data and labels $( \X_S^{(i)}, \Y_S^{(i)} ),  i \in \{1, \ldots, K \}$, and a single target domain $\mathcal{D}_T$ with unlabelled data $\X_T$, upon which some decision (e.g., classification) should be made in an FDA fashion.

The FedRF-TCA approach proposes to perform FDA in two steps:

\paragraph{Local Domain Alignment via RF-TCA}
Here, the source and target model are trained respectively by minimizing the MMD-loss $L_T = L_{\rm MMD}$ as in \eqref{eq:def_mmd_loss} on the target client and the following hybrid loss on the $i$th source client:
\begin{equation}\label{eq:source_loss}
    L_S^{(i)} = L_C^{(i)} + \lambda L_{\rm MMD}^{(i)},
\end{equation}
where $L_C^{(i)}$ denotes the classification loss computed on (a min-batch of) $(\X_S^{(i)}, \Y_S^{(i)})$, $L_{\rm MMD}^{(i)}$ the MMD-loss in \eqref{eq:def_mmd_loss}, and $\lambda > 0$ some trade-off hyperparameter.

Notably, during the training process, the target client disseminates $\bSigma_T \bell_T \in \RR^{2N}$ to \emph{all} source clients.
In contrast, each individual source client transmits \emph{solely} $\bSigma_S^{(i)} \bell_S^{(i)} \in \RR^{2N}$ to the target client, and \emph{no} messages are exchanged between source clients themselves.

Also, the Gaussian random matrix $\bOmega$ to compute the random Fourier features in \Cref{def:RFF} is generated by some predefined random seed $\mathfrak S$ shared by \emph{all} source and target clients, so that no additional communication and/or storage is needed.

This local training procedure is summarized for source and target models in \Cref{alg:souce_client} and \Cref{alg:target_client}, respectively.

\paragraph{Global Parameter Aggregation via FedAvg}

The collection of linear adaptive layers $\W_{\RF}$ and/or the source classifiers $\C$, originating from a potentially unordered set of source clients and the target client, is aggregated using the federated averaging (FedAvg) protocol~\cite{mcmahan2017communication}. 
Precisely, for a given around number $t$, we randomly draw from the (discrete) uniform distribution ${\rm Unif}\{0,K\}$ to determine the cardinality $|\mathcal{S}_t|$ of the set $\mathcal{S}_t$, then to get $\mathcal S_t$ by randomly sampling without replacement $|\mathcal{S}_t|$ elements from the index set $\{1, \ldots, K\}$.

This random sampling procedure emulates potential client and/or message drops during training, which can occur due to factors such as limited network reliability. 
It plays a crucial role in enhancing the robustness of the proposed FedRF-TCA approach.
The resulting global parameter aggregation protocol is described in~\Cref{alg:aggregating}. 

\begin{algorithm}[htb]  
    \caption{Local training of the $i$th source model}\label{alg:souce_client}    
    \begin{algorithmic}[1]
    \STATE {\bfseries Input:} Training set $ (\X_S^{(i)} , \Y_S^{(i)} ) $, local feature extractor $\G_S^{(i)}$ and classifier $\C_S^{(i)}$ at the $i$th source domain $\mathcal D_{S}^{(i)}$, (summarized) target message $\bSigma_T \bell_T$, random index set $\mathcal S_t$, some predefined random seed $\mathfrak S$, and number of random features $N$.

    \STATE {\bfseries Output:} Source message $\bSigma_S^{(i)} \bell_S^{(i)}$, updated linear layer $\W_{\RF_S}^{(i)}$ and classifier $\C_S^{(i)}$.

    \STATE\hspace{1em}Sample a mini-batch from  $(\X_S^{(i)}, \Y_S^{(i)})$. 

    \STATE\hspace{1em}Compute source message $\bSigma_S^{(i)} \bell_S^{(i)}$ as in \Cref{def:RFF} with random seed $\mathfrak S$ and number of random features $N$.
    
    \STATE\hspace{1em} \algorithmicif \ {$i \in \mathcal{S}_t$}\  \algorithmicthen  
      \STATE\hspace{2em} Update $\G_S^{(i)}, \W_{\RF_S}^{(i)}, \C_S^{(i)} $ by  minimizing $L_S^{(i)}$ in \eqref{eq:source_loss} via back-propagation.
    \STATE\hspace{1em} \algorithmicelse
      \STATE\hspace{2em} Update $\G_S^{(i)}, \W_{\RF_S}^{(i)}, \C_S^{(i)} $ only by  minimizing $L_C^{(i)}$ in \eqref{eq:source_loss} via back-propagation.
    \STATE\hspace{1em} \algorithmicend\ \algorithmicif
    
    \STATE {\bfseries return} $\bSigma_S^{(i)} \bell_S^{(i)}$ ,$\W_{\RF_S}^{(i)}$ and $ \C_S^{(i)} $.
    \end{algorithmic}
\end{algorithm}
\begin{algorithm}[htb]  
    \caption{Local training of the target model}\label{alg:target_client}    
    \begin{algorithmic}[1]
    \STATE {\bfseries Input:} Unlabelled data $ \X_T $ and local feature extractor $\G_T$ at the target domain $\mathcal{D}_T$, (summarized) source messages $\{ \bSigma_S^{(i)} \bell_S^{(i)} \}_{i \in \mathcal S_t}$, some predefined random seed $\mathfrak S$, and number of random features $N$.

    \STATE {\bfseries Output:} Target feature $\bSigma_T \bell_T$ and linear layer $\W_{\RF_T}$.

    \STATE\hspace{1em}Sample a mini-batch from $ \X_T $.

    \STATE\hspace{1em}Compute target message $\bSigma_T \bell_T$ as \Cref{def:RFF} with random seed $\mathfrak S$ and number of random features $N$.
    
    \STATE\hspace{1em}Update $\G_T, \W_{\RF_T}$ to minimize the MMD loss in \eqref{eq:def_mmd_loss}.

    \STATE {\bfseries return} $\bSigma_T \bell_T$ and $\W_{\RF_T}$.
    \end{algorithmic}
\end{algorithm}

\begin{algorithm}[htb]
  \caption{Global parameter aggregation}\label{alg:aggregating}    
  \begin{algorithmic}[1]
  \STATE {\bfseries Input:} round number $t$, time interval for classifier aggregation $T_C$, source parameters $\{ \W_{\RF_S}^{(i)} \}_{i \in \mathcal S_t}$ and $\{ \C_S^{(i)} \}_{i \in \mathcal S_t}$, target parameters $\W_{\RF_T}$.
  \STATE {\bfseries Output:} Averaged weight $\W_{\RF}$ or averaged classifier $\C$.

  \STATE\hspace{1em} Compute the aggregated weight $\W_{\RF}$ as the average of source weights $\{ \W_{\RF_S}^{(i)} \}_{i \in \mathcal S_t}$ in the set $\mathcal S_t$ and target weight $\W_{\RF_T}$.
  \STATE\hspace{1em} \algorithmicif \ {$t \hspace{0.1cm} \% \hspace{0.1cm} T_C = 0$}\  \algorithmicthen 
  \STATE\hspace{2em} Compute the aggregated $\C$ as the average of the source classifiers $\{ \C_S^{(i)} \}_{i \in \mathcal S_t}$ in the set $\mathcal S_t$.
  \STATE\hspace{1em} \algorithmicend\ \algorithmicif
  \STATE {\bfseries return} aggregated $\W_{\RF}$ and $\C$.
  \end{algorithmic}
\end{algorithm}

To further alleviate the communication overhead associated with global parameter aggregation in FedRF-TCA, we introduce a modification outlined in \Cref{alg:aggregating}. 
Under this scheme, the classifier $\C$ is aggregated \emph{only once} every $T_C \gg 1$ time intervals, while the weight $\W_{\RF}$ is exchanged in each round of communication. 
This adjustment results in a substantial reduction in communication overhead during training, without any significant compromise in performance.

The whole FedRF-TCA training procedure is summarized in \Cref{alg:FedRF-TCA} and illustrated in \Cref{fig:multi_source_target_system}.

\begin{algorithm}[htb]
    \caption{Federated RF-TCA (FedRF-TCA) }\label{alg:FedRF-TCA}    
    \begin{algorithmic}[1]
    \STATE Model initialization.
    \STATE Generate a random seed $\mathfrak S$ and send to all clients.
    \STATE Determine $T_C$ the time interval to aggregate the classifiers.
    \FOR{each round $t=1,2,\ldots$}
        \STATE Sample a random subset $\mathcal S_t$ from the source index set $\{ 1, \ldots, K \}$ as in \Cref{subsec:sec_arch_net}. 
        \STATE \textbf{Local Domain Alignment}:       
        \STATE\hspace{1em}\parbox[t]{\dimexpr\linewidth-\algorithmicindent}{Compute $\{\bSigma_S^{(i)} \bell_S^{(i)}, \W_{\RF_S}^{(i)}, \C_S^{(i)}\}_{i=1}^K$ at source clients  as in~\Cref{alg:souce_client}.}
        \STATE\hspace{1em}Compute $\W_{\RF_T}, \bSigma_T \bell_T$ at the target client as in~\Cref{alg:target_client}.
        \STATE \textbf{Global Parameter Aggregation}:        
        \STATE\hspace{1em} For each client $i \in \mathcal S_t$, aggregate
        $\W_{\RF_T}$ and $ \W_{\RF_S}^{(i)}$ to get $\W_{\RF}$ as in~\Cref{alg:aggregating}.
        \STATE\hspace{1em} \algorithmicif \ {$t \hspace{0.1cm} \% \hspace{0.1cm} T_C = 0$}\  \algorithmicthen 
        \STATE\hspace{2em} For each client $i \in \mathcal S_t$, aggregate $\C_S^{(i)}$ to get $\C$ as in~\Cref{alg:aggregating}.
        \STATE\hspace{1em} \algorithmicend\ \algorithmicif
        \STATE \textbf{Update Client Models}: 
        \STATE\hspace{1em} Assign $\W_{\RF}$ to both $\W_{\RF_T}$ and $\{ \W_{\RF_S}^{(i)} \}_{i \in \mathcal S_t}$, 
        \STATE\hspace{1em} \algorithmicif \ {$t \hspace{0.1cm} \% \hspace{0.1cm} T_C = 0$}\  \algorithmicthen 
        \STATE\hspace{2em} Assign $\C$ to $\C_T$ and $\{ \C_S^{(i)} \}_{i \in \mathcal S_t}$.
        \STATE\hspace{1em} \algorithmicend\ \algorithmicif
    \ENDFOR
    \STATE {\bfseries return} target classifier $\C_T$.
    \end{algorithmic}
\end{algorithm}

\subsection{Advantages of FedRF-TCA}
\label{subsec:advanteges}

In this section, we discuss the advantages of FedRF-TCA in terms of its communication overhead, robustness, and additional privacy guarantee.
Precisely, when compared to popular federated DA methods such as FADA~\cite{peng2019federated}, FedKA~\cite{sun2023feature}, and FDA~\cite{kang2022communicational}, the communication complexity of FedRF-TCA is \emph{independent} of the sample size $n$ (but depends \emph{only} on $N$ the dimension of random features that can be significantly smaller than $n$, see \Cref{theo:perf_RF_TCA}), and thus (up to) a factor of $n$ times smaller than existing federated DA approaches. 
In addition to the extremely low communication overhead, FedRF-TCA also offers robustness (to both clients dropouts and messages drops due to, e.g., unstable network connection) and additional privacy preserving, while achieving comparably good performance to state-of-the-art federated DA methods.

\Cref{tab:comm_pro} compares the proposed FedRF-TCA protocol to popular federated DA methods, in terms of their communication complexity, robustness (flagged by whether asynchronous training is supported), and additional privacy guarantee\footnote{By ``additional privacy guarantee'' we mean that the federated DA protocol adopts privacy preserving technique in addition to the (innate) data isolation of FL.  For example, the Paillier homomorphic encryption method~\cite{paillier1999public} is used in the training of FDA~\cite{kang2022communicational} as additional privacy protection. }.

\medskip

In the remainder of this section, we analyze the communication complexity and robustness, as well as additional privacy guarantee of FedRF-TCA, in \Cref{subsubsec:communication_overhead}~and~\Cref{subsubsec:privacy}, respectively.

\subsubsection{Analysis of communication overhead and robustness}
\label{subsubsec:communication_overhead}

The communication overhead of the proposed FedRF-TCA protocol depends on the dimension $N$ of the random features (which, according to \Cref{theo:perf_RF_TCA}, suffices to be of order $\log(n)$ and thus depends on the sample size $n$, but \emph{only} in a logarithmic fashion), the dimension of the projected feature space $m$, and the number of the source clients $K$. 
During training, each pair of source and target client will need to exchange the compressed features $\bSigma \bell \in \RR^{2N}$ to compute the MMD loss in \eqref{eq:def_mmd_loss}, the communication overhead of which is $O(K N)$. 
Then, the server collects an aggregation of the linear layer weights $\W_{\RF} \in \mathbb{R}^{2N \times m}$ from all source and target clients.
% To reduce the dimension of the latent space, the weights $\W_{\RF} $ in \Cref{fig:multi_source_target_system} are sent by each client. 
This leads to a communication overhead of $O(K N m)$. 
As such, the total communication complexity of the proposed FedRF-TCA is $O(KN + K N m)$, and is, in particular, \emph{independent} of the sample size $n$. 

\begin{table}[!htbp]
    \caption{Comparison between different federated DA methods,
    with $K$ the number of clients, $n$ the sample size, $N$ the dimension of features (i.e., the number of random features in FedRF-TCA, the dimension of feature generator in FADA~\cite{peng2019federated}, of feature representation in FedKA~\cite{sun2023feature}, and the latent feature length $L$ in FDA~\cite{kang2022communicational}), 
    and $P \geq 1$ is the ciphertext size of Paillier encryption used in FDA. }
    \centering
    \label{tab:comm_pro}
    \resizebox{\linewidth}{!}{ 
    \begin{tabular}{ccccc}
    \toprule
    Federated DA methods                             &  \makecell{Communication\\ overhead}    & \makecell{Asynchronous\\ training}  & \makecell{Added privacy} \\ \midrule
    FADA~\cite{peng2019federated}       &   $O(K n N)$                   &      \XSolidBrush  &   \XSolidBrush     \\
    FedKA~\cite{sun2023feature}         &   $O(K n N)$                  &       \XSolidBrush &   \XSolidBrush       \\
    FDA~\cite{kang2022communicational}  &   $O(K n N P)$           &     \XSolidBrush    &   \Checkmark      \\
    FedRF-TCA (ours) &   $O(K N)$           &     \Checkmark    &   \Checkmark     \\
    \bottomrule
    \end{tabular}
    }
    
\end{table}

In contrast, other federated DA protocols in \Cref{tab:comm_pro} show a much higher communication overhead, with a complexity growing (at least) linearly in both the feature dimension $N$ \emph{and} the sample size $n$. 
As a concrete example, the FDA protocol~\cite{kang2022communicational} proposes to exchange the encrypted source features of size $O(MP + n_S N P)$ (with encryption cost $M P$ and encrypted source features cost $n_S N P$, $P \geq 1$ the ciphertext size of Paillier encryption), as well as target features of size $n_T N$~\cite{kang2022communicational}, yielding an overall communication overhead of $O(n N P)$, for a single source-target pair.

The proposed FedRF-TCA protocol offers not only a \emph{sample-size-independent} communication overhead, but also robustness to client dropouts and/or messages drops, an important feature for cross-device FL.
In practical FL scenarios, clients may be temporarily unavailable, dropping out or joining during the training procedure, and the sent messages may be lost due to the limited reliability of the network. 
We show with extensive experiments in \Cref{sec:exper} below that the proposed FedRF-TCA is robust to:
\begin{enumerate}
    \item \textbf{clients dropout}, see FedRF-TCA model (\uppercase\expandafter{\romannumeral2}) in Tables~\ref{Office_10_Fedrated}~and~\ref{DigitFive}, where only a random subset of the clients are involved in FDA training; and
    \item \textbf{less frequent aggregation}, see FedRF-TCA model (\uppercase\expandafter{\romannumeral3}) in Tables~\ref{Office_10_Fedrated}~and~\ref{DigitFive}, where classifier is aggregated only every $T_C$ time intervals as in \Cref{alg:aggregating}; and
    \item \textbf{partial dropout of messages}, e.g., randomly dropping out the linear $\W_{\RF_S}$ or classifier $\C_S$ in~\Cref{DigitFive_Ablation}.
\end{enumerate}

\subsubsection{Additional privacy guarantee }
\label{subsubsec:privacy}

Note from \Cref{fig:multi_source_target_system} and the discussion above in \Cref{alg:souce_client}--\ref{alg:RF-TCA} that with the proposed FedRF-TCA protocol, the messages exchanged between different clients are either in form of ``compressed'' random features $\bSigma \bell \in \RR^{2N}$ of linear layer weights $\W_{\RF} \in \RR^{2N \times m}$ and/or the classifier $\C$.
In the following, we discuss the additional privacy guarantee offered by FedRF-TCA.

\begin{Remark}[Privacy protection via random features]\label{rem:feature_protection}\normalfont
FL and federated DA methods are originally proposed to perform decentralized ML so as to mitigate many of the systemic privacy risks~\cite{kairouz2021Advances}.
In addition to the decentralized storage of data at different local clients, many other techniques such as secure aggregation, noise addition, and update clipping have been proposed for additional privacy consideration, against, e.g., a malicious server.
We argue that the proposed FedRF-TCA protocol offers additional privacy protection on top of that of FL and FDA, in the following sense:
\begin{itemize}
  \item[(i)] it is \emph{impossible} for a malicious server to infer the sample size $n_{S,T}$, nor the (random) features $\bSigma_{S,T}$ involved in training from a mere observation of the messages exchanged between clients during FDA training, since $\bSigma_{S,T} \bell_{S,T} = \pm\bSigma_{S,T} \one_{n_{S,T}}/ n_{S,T} \in \RR^{2N}$, the dimension of which is \emph{independent} of the sample size $n_{S,T}$; and 
  \item[(ii)] it is \emph{impossible} to reveal the raw data $\X$ from a mere observation of $\bSigma$, since $\bSigma = \sigma(\bOmega \X)$ by \Cref{def:RFF} for \emph{periodic} function $\sigma(t) = \cos(t)$ or $\sin(t)$ and Gaussian random matrix $\bOmega$, and the solution to infer $\X$ is \emph{not} unique, unless additional constraints are imposed, see also a formal argument in~\cite{zong2023Privacy}.
\end{itemize}
\end{Remark}

\section{Experiments}
\label{sec:exper}

In this section, we provide extensive numerical results on the proposed RF-TCA and FedRF-TCA approaches on various datasets, showing their advantageous performance as well as computational/communicational efficiency and robustness. 
In~\Cref{subsec:exp_detail}, we discuss experimental details on datasets and baselines. 
The computational efficiency and robustness of FedRF-TCA are testified in~Sections~\ref{subsec:efficient}~and~\ref{exp:robustness}, respectively. 
In~\Cref{exp:FedRF_TCA}, we provide additional experimental results demonstrating the advantageous performance of FedRF-TCA with respect to SoTA federated DA methods.
In~\Cref{exp:adlation}, we provide ablation experiments to show the effectiveness of FedRF-TCA.
The optimal performance is shown in \textbf{boldface}, and the second optimal in \underline{underlined}.
Code to reproduce the results in this section are publicly available at \url{https://github.com/SadAngelF/FedRF-TCA}.

\begin{figure*}[!htbp]
  \centering
  \begin{subfigure}[c]{0.43\linewidth}
  \centering
  \captionsetup{skip=0.1pt}
  \begin{tikzpicture}[font=\footnotesize]
    \pgfplotsset{every major grid/.style={style=densely dashed}}
    \begin{axis}[
      %ybar,
      height=.55\linewidth,
      width=\linewidth,
      xmin=-1,
      xmax=3.2,
      ymin=0.65,
      ymax=1.0,
      xtick={-1,0,1,2,3},
      xticklabels = {$10^{-1}$,$10^0$, $10^1$, $10^2$,$10^3$},
      ytick={0.65,0.7,0.8,0.9,1.0},
      grid=major,
      scaled ticks=true,
      %xmode=log,
      %scale ticks above={4},
      xlabel={ Running time (s) },
      ylabel={ Accuracy },
      xlabel style={label distance=0.1pt},
      ylabel style={label distance=0.1pt},
      legend style = {at={(0.02,0.98)}, anchor=north west, font=\footnotesize}
      ]
      %%% N =512
      \addplot[BLUE,mark=o,only marks,line width=1pt] coordinates{
      (-0.4296, 0.6967)(-0.2909, 0.7718)(0.3716, 0.7857)(1.224,0.7925)
      };
      \addplot[mark=*,only marks,line width=1pt] coordinates{(1.090, 0.7495)} node[]{};
      \addplot[RED,mark=*,only marks,line width=1pt] coordinates{(1.090, 0.7495)} node[anchor=west]{TCA};
      %\addplot[ORANGERED,mark=*,only marks,line width=1pt] coordinates{(0.006894, 0.7426)} node[anchor=north west]{Vanilla TCA};
      \addplot[PURPLE,mark=*,only marks,line width=1pt] coordinates{(2.416, 0.7252)} node[anchor=south]{JDA};
      \addplot[GREEN,mark=*,only marks,line width=1pt] coordinates{(1.949, 0.8429)} node[anchor=east]{CORAL};
      \addplot[BROWN,mark=*,only marks,line width=1pt] coordinates{(2.187, 0.8454)} node[anchor=south]{GFK};
      \addplot[ORANGE,mark=*,only marks,line width=1pt] coordinates{(2.983, 0.8324)} node[anchor=north east]{DaNN};

      \addplot[BLUE,solid,line width=1pt] coordinates{(-0.2909, 0.7718)(0.3716, 0.7857)} node[above=0.1cm]{RF-TCA};
       \addplot[BLUE,solid,line width=1pt] coordinates{(0.3716, 0.7857)(1.224,0.7925)} ;
      \addplot[BLUE,solid,line width=1pt] coordinates{(-0.4296, 0.6967)(-0.2909, 0.7718)};
       \addlegendentry{{ RF-TCA }};
      \addlegendentry{{ Baseline methods }};
    \end{axis}
  \end{tikzpicture}
  \caption{ Office-Caltech } \label{fig:office10}
  \end{subfigure}
  \begin{subfigure}[c]{0.43\linewidth}
  \centering
  \captionsetup{skip=0.1pt}
  \begin{tikzpicture}[font=\footnotesize]
    \pgfplotsset{every major grid/.style={style=densely dashed}}
    \begin{axis}[
      %ybar,
      height=.55\linewidth,
      width=\linewidth,
      xmin=-1,
      xmax=4,
      ymin=0.5,
      ymax=0.7,
      xtick={-1,0,1,2,3,4},
      xticklabels = {$10^{-1}$,$10^0$, $10^1$, $10^2$,$10^3$,$10^4$},
      ytick={0.5,0.6,0.7},
      grid=major,
      scaled ticks=true,
      %xmode=log,
      %scale ticks above={4},
      xlabel={ Running time (s) },
      ylabel={ Accuracy },
      xlabel style={label distance=0.1pt},
      ylabel style={label distance=0.1pt},
      % egend style = {at={(1.1,0.5)}, anchor=west, font=\footnotesize}
      legend style = {at={(0.02,0.98)}, anchor=north west, font=\footnotesize}
      ]
      %%% N =512
      \addplot[BLUE,mark=o,only marks,line width=1pt] coordinates{(-0.5106, 0.5377)(-0.1148, 0.5836)(0.4836, 0.5861)(1.272,0.5892)(2.344, 0.5928)};
      
      \addplot[mark=*,only marks,line width=1pt] coordinates{(2.208, 0.6342)} node[]{};
      \addplot[RED,mark=*,only marks,line width=1pt] coordinates{(2.208, 0.6342)} node[anchor=south west]{TCA};
      %\addplot[ORANGERED,mark=*,only marks,line width=1pt] coordinates{(1.023, 0.6280)} node[anchor=north east]{Vanilla TCA};
      \addplot[PURPLE,mark=*,only marks,line width=1pt] coordinates{(3.247, 0.6134)} node[anchor=west]{JDA};
      \addplot[GREEN,mark=*,only marks,line width=1pt] coordinates{(1.964, 0.5924)} node[anchor=south]{CORAL};
      \addplot[BROWN,mark=*,only marks,line width=1pt] coordinates{(2.100, 0.5900)} node[anchor=north]{GFK};
      \addplot[ORANGE,mark=*,only marks,line width=1pt] coordinates{(3.766, 0.5847)} node[anchor=east]{DaNN};
      \addplot[BLUE,solid,line width=1pt] coordinates{(-0.1148, 0.5836)(0.4836, 0.5861)} node[below=0.1cm]{RF-TCA};
      \addplot[BLUE,solid,line width=1pt] coordinates{(0.4836, 0.5861)(1.272,0.5892)};
      \addplot[BLUE,solid,line width=1pt] coordinates{(1.272,0.5892)(2.344, 0.5928)};
      \addplot[BLUE,solid,line width=1pt] coordinates{(-0.5106, 0.5377)(-0.1148, 0.5836)};
      \addlegendentry {{ RF-TCA }};
      \addlegendentry{{ Baseline methods }};
      % \legend{TCA,Vanilla TCA,JDA,CORAL,GFK,DaNN}
    \end{axis}
  \end{tikzpicture}
  \caption{ Office-31 } \label{fig:office31}
  \end{subfigure}
  
  \caption{
    Classification accuracy and running time of the proposed RF-TCA versus baseline DA methods on DeCAF6 features of Office-Caltech and Office-31 datasets. 
    {\BLUE \bf Blue} circles for RF-TCA approach with a different number of random features $ N \in \{ 100, 500, 1\,000, 2\,000, 5\,000 \}$, the {\RED \bf red}, {\PURPLE \bf purple}, {\GREEN \bf green}, {\BROWN \bf brown} and {\ORANGE \bf orange} for {\RED \bf TCA}~\cite{pan2010domain}, {\PURPLE \bf JDA}~\cite{long2013transfer}, {\GREEN \bf CORAL}~\cite{sun2016return}, {\BROWN \bf GFK}~\cite{gong2012geodesic}, and {\ORANGE \bf DaNN}~\cite{ghifary2014domain} approach, respectively. 
    The results are obtained by averaging over \emph{all} source-target domain pairs ($12$ for Office-Caltech and $6$ for Office-31), see \Cref{app_sec:exp} in the appendix for a detailed exposition of these results. 
  }
  \label{fig:rf-tca}
\end{figure*}
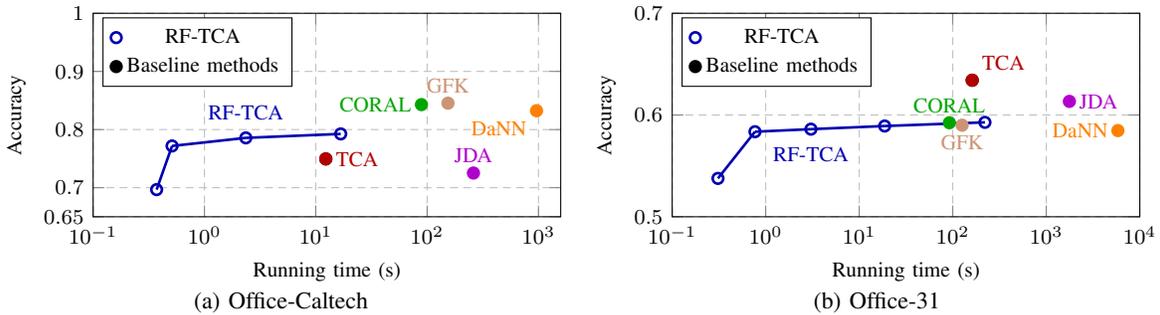

\subsection{Experimental details}
\label{subsec:exp_detail}

Here, we present the datasets and baseline methods to be compared with throughout this section.
\subsubsection{Datasets}
\begin{itemize} 
    \item \textbf{Office-Caltech~\cite{gong2012geodesic}:} This dataset is made up by $10$ common classes shared by Office-31 and Caltech-256 datasets. It contains four domains: Amazon (A), Webcam (W), DLSR (D), and Caltech (C).
    \item \textbf{Office-31~\cite{saenko2010adapting}:} This dataset is a standard benchmark for domain adaptation, with $31$ classes from three different domains: Amazon (A), Webcam (W), and DLSR~(D).
    \item \textbf{Digit-Five~\cite{ganin2016domain}:} This dataset consists handwritten digits from five domains: MNIST~(mn), MNIST-M~(mm), SVHN~(sv), USPS~(up), and Synthetic Digits~(sy) dataset. 
    \item \textbf{Visda-C~\cite{peng2017visda}:} This is a challenging and large-scale dataset with $152\,396$ synthetic 3D model renderings as source and $55\,388$ real-world images as target. 
\end{itemize}

\subsubsection{Baselines}

In terms of computational efficiency, the proposed RF-TCA approach is compared against popular DA methods such as TCA~\cite{pan2010domain}, JDA~\cite{long2013transfer}, CORAL~\cite{sun2016return}, GFK~\cite{gong2012geodesic}, and DaNN~\cite{ghifary2014domain}.
See \Cref{subsec:review} above for a brief review of these methods.
In a federated DA context, the proposed FedRF-TCA scheme is compared against FADA~\cite{peng2019federated} and FADE~\cite{hong2021federated}. 
The baseline results in \Cref{Office_10_Fedrated}, \Cref{DigitFive}, and \Cref{Visda_C} are repeated from \cite{peng2019federated,hong2021federated}.

\subsection{Computational efficiency of RF-TCA}
\label{subsec:efficient}

We compare, in \Cref{fig:rf-tca}, the running time and classification accuracy of the proposed RF-TCA approach against popular DA methods, on both Office-Caltech~\cite{gong2012geodesic} and Office-31~\cite{saenko2010adapting} datasets. 
The ``transferred'' features obtained are classified using a fully-connected neural network with two hidden layers with $100$ neurons per layer.

Analysis of \Cref{fig:rf-tca} clearly demonstrates that our proposed RF-TCA approach achieves the most favorable trade-off between computational complexity and performance when compared to other widely recognized DA methods. 
It consistently exhibits at least a \emph{tenfold} reduction in running time while delivering equally commendable performance.

Notably, our empirical observations indicate that RF-TCA performs exceptionally well with a relatively modest number of random features $N$, aligning with the theoretical insights presented in \Cref{theo:perf_RF_TCA}. 

\subsection{Communication efficiency and robustness of FedRF-TCA}
\label{exp:robustness}

Here, we conduct experiments to demonstrate the communication efficiency and robustness of the proposed FedRF-TCA approach in  a federated DA context over unreliable networks.

Recall that during the training of FedRF-TCA, messages containing data-dependent features $\bSigma_S^{(i)} \bell_S^{(i)}$ and model parameters $ \W_{\RF_S}^{(j)}, \C_S^{(k)}$ are exchanged among participating clients. 
As such, the size of communicated messages of FedRF-TCA is significantly smaller than other FL or FDA algorithms that exchange the whole model (e.g., FedAvg).
\Cref{tab:transfer number} compares the size of communicated messages in FDA training under FedRF-TCA and FedAvg.
We observe that the proposed FedRF-TCA benefits from a significantly less communication overhead (as shown in \Cref{tab:transfer number} and \Cref{tab:comm_pro} above), and at the same time, better FDA performance (as shown in \Cref{fig:FedRF-TCA_frequency} and the results in \Cref{exp:FedRF_TCA}).

\begin{table}
  \centering
  \caption{The size of communicated messages (in million) in each update of FedAvg and FedRF-TCA (with $N = 1\,000$).}
  \label{tab:transfer number}
  \begin{tabular}{cccc}
  \toprule
                      & Data-dependent features & Model parameters    & Sum \\ 
  \midrule                    
   FedAvg             &     0               &  25.637  & 25.637       \\
   FedRF-TCA          &     0.001          &  1.691   &  1.692      \\ 
   \bottomrule
  \end{tabular}
  
  \end{table}

Moreover, in cases where the network experiences instability, messages and/or clients may undergo random dropouts during training. 
Such dropouts could potentially result in a significant deterioration in the model's performance.
In~\Cref{DigitFive_Ablation}, we provide empirical evidence that reassuringly demonstrates this is \emph{not} the case for FedRF-TCA. 
Remarkably, the proposed FedRF-TCA exhibits excellent robustness to network reliability, maintaining its performance even under challenging network conditions.
Precisely, 
\begin{enumerate}
    \item in setting (\uppercase\expandafter{\romannumeral1}), for a given random subset $\mathcal A$, all parameters of $\bSigma_S^{(i)} \bell_S^{(i)}$, $ \W_{\RF_S}^{(j)} $ and $\C_S^{(k)}$ are exchanged; 
    \item in setting (\uppercase\expandafter{\romannumeral2}) a (random subset) of classifiers $ \C_S^{(k)}$ are not involved in training; and
    \item in setting (\uppercase\expandafter{\romannumeral3}) both $ \W_{\RF_S}^{(j)} $ and $\C_S^{(k)}$ are randomly dropped and only $\bSigma_S^{(i)} \bell_S^{(i)}$ are fully exchanged.
\end{enumerate}
We particularly observe from settings (\uppercase\expandafter{\romannumeral2}) and (\uppercase\expandafter{\romannumeral3}) versus (\uppercase\expandafter{\romannumeral1}) that FedRF-TCA is robust against (random) message drops of $ \W_{\RF_S}^{(j)} $ and $ \C_S^{(k)}$, and demonstrates similar performance despite different types of message drops.
This robustness is also consistently observed across different TL tasks.

\begin{table}[htbp]
    \centering
    \caption{ Classification accuracy ($\%$) of FedRF-TCA approach over unreliable networks, on Digit-Five dataset~\cite{ganin2016domain} with communication interval $T_C = 50$. 
    $\mathcal A$ represents a random subset of the (source) index set $\{ 1, \ldots, K \}$ (and corresponds to $\mathcal S_t$ in \Cref{alg:FedRF-TCA}), $\mathcal B$ is a random subset of $\mathcal A$, and $\mathcal C$ is a random subset of $\mathcal B$ obtained as in \Cref{subsec:sec_arch_net}. 
    Settings (\uppercase\expandafter{\romannumeral2}) and (\uppercase\expandafter{\romannumeral3}) model the cases where some messages are dropped due to unstable network connection. 
    For example, in setting (\uppercase\expandafter{\romannumeral3}) only $\bSigma_S^{(i)} \bell_S^{(i)} \,$/$\, \W_{\RF_S}^{(j)} \,$/$\, \C_S^{(k)}$, $i \in \mathcal A, j \in \mathcal B, k \in \mathcal C$ are exchanged during the training of FedRF-TCA. }   
    \label{DigitFive_Ablation}
    \begin{tabular}{ccccccc}
    \toprule
    Task/Setting  & (\uppercase\expandafter{\romannumeral1}) $\mathcal A$ / $\mathcal A$ / $\mathcal A$ & (\uppercase\expandafter{\romannumeral2}) $\mathcal A$ / $\mathcal A$ / $\mathcal B$ & (\uppercase\expandafter{\romannumeral3}) $\mathcal A$ / $\mathcal B$ / $\mathcal C$ \\
    \midrule
    mm,sv,sy,up$\to$mt & \textbf{97.60}  & \underline{97.44} & 97.43 \\
    mt,sv,sy,up$\to$mm & \textbf{65.16}  & 64.03 & \underline{64.34} \\
    mt,mm,sv,sy$\to$up & 89.99 & \underline{90.19} & \textbf{90.45}\\
    \bottomrule
    \end{tabular}
\end{table}
\begin{figure}[htbp]
  \centering
  \begin{tikzpicture}
  \renewcommand{\axisdefaulttryminticks}{4} 
  \pgfplotsset{every major grid/.style={densely dashed}}       
  \tikzstyle{every axis y label}+=[yshift=-10pt] 
  \tikzstyle{every axis x label}+=[yshift=5pt]
  \pgfplotsset{every axis legend/.append style={cells={anchor=east},fill=white, at={(0.02,0.98)}, anchor=north west, font=\tiny}}
  \begin{axis}[
    %ybar,
    % hei
    xmin=2.19,
    xmax=6.8,
    ymin = -.35,
    width=.8\linewidth,
    height=.65\linewidth,
    ytick={0,0.5,1},
    xtick={2.5257, 3.21887, 3.912, 4.6052, 5.2983, 5.9915, 6.6846},
    xticklabels = {$12$, $25$, $50$, $100$,$200$,$400$,$800$},
    grid=major,
    scaled ticks=true,
    %xmode=log,
    %scale ticks above={4},
    xlabel={ Communication interval of classifiers $T_{C}$},
    ylabel={ Accuracy },
    xlabel style={label distance=0.1pt},
    ylabel style={label distance=0.1pt},
    legend style = {at={(0.02,0.02)}, anchor=south west, font=\footnotesize}
    ]
  %\addlegendentry{{ IHT $N = 2\,048$ }};

  % mnist-m:
  % \addplot[
  % mark=+,color=RED!60!white,line width=1pt,
  % error bars/.cd,
  % y dir=both, y explicit,
  % error bar style={color=gray}
  % ] 
  % coordinates{
  % (2.304, 0.6495) +- (0,0.0166)
  % (2.996, 0.6549) +- (0,0.0079)
  % (3.91, 0.6516) +- (0,0.0184)
  % (4.317, 0.6407) +- (0,0.0137)
  % (4.605, 0.6463) +- (0,0.0084)
  % (5.298, 0.6287) +- (0,0.0171)
  % (5.991, 0.6393) +- (0,0.0147)
  % (6.685, 0.6287) +- (0,0.0253)
  % };
  % \addlegendentry{mt,sv,sy,up→mm}

  \addplot[
  mark=+,color=BLUE!60!white,line width=1pt,
  error bars/.cd,
  y dir=both, y explicit,
  error bar style={color=gray}
  ] 
  coordinates{
  (2.304, 0.9763) +- (0,0.0018)
  (2.996, 0.9763) +- (0,0.0007)
  (3.91, 0.9744) +- (0,0.0023)
  % (4.317, 0.6407) +- (0,0.0137)
  (4.605, 0.9742) +- (0,0.0032)
  (5.298, 0.9757) +- (0,0.0016)
  (5.991, 0.9734) +- (0,0.0019)
  (6.685, 0.9632) +- (0,0.0284)
  };
  \addlegendentry{FedRF-TCA: mm,sv,sy,up→mt}

  % fedavg
  \addplot[
  dashed, mark=+,color=BLUE!60!white,line width=1pt,
  error bars/.cd,
  y dir=both, y explicit,
  error bar style={color=gray}
  ] 
  coordinates{
  (2.304, 0.5925) +- (0,0.0210)
  (2.996, 0.5922) +- (0,0.0135)
  (3.91, 0.5715) +- (0,0.0123)
  (4.605, 0.5890)  +- (0,0.0204)
  (5.298, 0.5791) +- (0,0.0186)
  (5.991, 0.5459) +- (0,0.0168)
  (6.685, 0.1051) +- (0,0.0220)
  };
  \addlegendentry{FedAvg: mm,sv,sy,up→mt}

  % usps
  \addplot[
  mark=+,color=GREEN!60!white,line width=1pt,
  error bars/.cd,
  y dir=both, y explicit,
  error bar style={color=gray}
  ] 
  coordinates{
  (2.304, 0.8995) +- (0,0.0113)
  % (2.996, 0.9763) +- (0,0.0007)
  % (3.91, 0.9744) +- (0,0.0023)
  (4.605, 0.8984) +- (0,0.0086)
  (5.298, 0.8999) +- (0,0.0098)
  (5.991, 0.8848) +- (0,0.0351)
  (6.685, 0.8941) +- (0,0.0085)
  };
  \addlegendentry{FedRF-TCA: mm,mt,sv,sy→up}

  % fedavg
  \addplot[
  dashed, mark=+,color=GREEN!60!white,line width=1pt,
  error bars/.cd,
  y dir=both, y explicit,
  error bar style={color=gray}
  ] 
  coordinates{
  (2.304, 0.4281) +- (0,0.0213)
  (2.996, 0.4171) +- (0,0.0207)
  (3.91, 0.3987)  +- (0,0.0223)
  (4.605, 0.3761) +- (0,0.0186)
  (5.298, 0.3483) +- (0,0.0198)
  (5.991, 0.2975) +- (0,0.0351)
  (6.685, 0.2601) +- (0,0.0085)
  };
  \addlegendentry{FedAvg: mm,mt,sv,sy→up}

  \end{axis}
  \end{tikzpicture}
  \caption{{Classification accuracy (mean $\pm$ standard deviation) of FedRF-TCA and FedAvg with different communication intervals $T_{C} \in \{ 10, 20, 50, 100, 200, 400, 800 \}$, with in total $1\,600$ rounds of communication, as in (\uppercase\expandafter{\romannumeral1}) of \Cref{DigitFive_Ablation}. }}
  \label{fig:FedRF-TCA_frequency}
\end{figure}

We further show in~\Cref{fig:FedRF-TCA_frequency} that this robustness to network reliability is consistent over different choices of the communication interval $T_C$ in~\Cref{alg:FedRF-TCA}.
By increasing the communication interval $T_C$ for aggregation, the performance of FedRF-TCA remains very stable (with a fluctuation less than $2\%$), while that of FedAvg is much more significant (at least $10\%$). 
Moreover, the FDA performance of FedRF-TCA is consistently better than that of FedAvg.

\subsection{Additional experiments of FedRF-TCA}
\label{exp:FedRF_TCA}

Here, we provide additional numerical results on Office-31~\cite{saenko2010adapting}, Office-Caltech~\cite{gong2012geodesic}, Digit-Five~\cite{ganin2016domain}, and Visda-C~\cite{peng2017visda}, showing the advantageous performance of the proposed FedRF-TCA approach.

\begin{table}[h!]
  \centering
  \caption{Classification accuracy ($\%$) of different DA methods on Office-31 data~\cite{saenko2010adapting} with single source domain. FedRF-TCA is applied on raw data while others on DeCAF6 features~\cite{donahue2014decaf}. } %Only one source domain is involved
  \label{fig:PerfomanceOffice31}
  \begin{tabularx}{\linewidth}{c*{6}{X}c}
  \toprule
  Methods & 
  \resizebox{2.5em}{!}{A$\rightarrow$D} & 
  \resizebox{2.5em}{!}{A$\rightarrow$W} & 
  \resizebox{2.5em}{!}{D$\rightarrow$A} & 
  \resizebox{2.5em}{!}{D$\rightarrow$W} & 
  \resizebox{2.5em}{!}{W$\rightarrow$A} & 
  \resizebox{2.5em}{!}{W$\rightarrow$D} & 
  % \resizebox{2em}{!}{Ave} \\
  Avg \\
  \midrule
  TCA~\cite{pan2010domain}                       & \underline{60.24} & 48.55 & \underline{40.75} & \textbf{93.08} & \underline{38.90} & \underline{98.99} &  \underline{63.42} \\
  JDA~\cite{long2013transfer}                    & 57.42 & 45.53 & 39.43 & 91.06 & 37.06 & 97.59 & 61.34 \\
  CORAL~\cite{sun2016return}                     & 54.81 & 49.93 & 35.88 & 89.81 & 33.65 & 91.36 & 59.24 \\
  GFK~\cite{gong2012geodesic}                    & 55.82 & \underline{52.41} & 34.63 & 87.04 & 33.95 & 90.16 & 59.00 \\
  DaNN~\cite{ghifary2014domain}                  & 49.80 & 50.63 & 35.76 & 87.25 & 36.76 & 90.60 & 58.47 \\
  \midrule
  \makecell{RF-TCA \\ ($N = 500$)}          & 55.62 & 51.06 & 29.42 & 87.92 & 32.58 & 93.57 & 58.36 \\
  \makecell{RF-TCA \\ ($N = 1\,000$)}       & 56.02 & 51.57 & 31.30 & 87.67 & 30.13 & 94.97 & 58.61 \\
  \makecell{FedRF-TCA \\ ($N = 1000$)}             & \textbf{80.46} & \textbf{76.56} & \textbf{56.25} & \underline{92.96} & \textbf{59.37} & \textbf{99.21} & \textbf{77.46} \\ 
  \bottomrule
  \end{tabularx}
\end{table}

We compare, in \Cref{fig:PerfomanceOffice31}, the classification accuracy of different (not necessarily federated) DA methods on the Office-31 dataset~\cite{saenko2010adapting}. 
We observe that:
\begin{enumerate}
    \item the performance of the proposed RF-TCA approach matches popular DA baselines, with a significance reduction in both complexity and storage; and
    \item the FedRF-TCA approach, when applied on raw data and on a non-federated setting, also establishes remarkably good performance.
\end{enumerate}

\begin{table}[htb]
  \centering
  \caption{Classification accuracy (\%) on Office-Caltech dataset~\cite{gong2012geodesic} with different federated DA methods. 
  Baseline results are repeated from \cite{peng2019federated}. 
  Setting (\uppercase\expandafter{\romannumeral1}): all clients average both $\W_{\RF}$ and $\C$ in each communication round; (\uppercase\expandafter{\romannumeral2}): only a random subset $\mathcal{S}_t$ of source clients are involved in training; (\uppercase\expandafter{\romannumeral3}): as for (\uppercase\expandafter{\romannumeral2}) with classifier aggregation interval $T_C = 50$. }
  \label{Office_10_Fedrated}
  \setlength{\tabcolsep}{1.6pt}{
  \begin{tabular}{cccccc} 
  \toprule
  Methods                       & C,D,W$\to$A & A,D,W$\to$C & A,C,W$\to$D & A,C,D$\to$W & Avg \\
  \midrule
  ResNet101~\cite{he2016deep}                         & 81.9  & 87.9  & 85.7   & 86.9  & 85.6    \\
  AdaBN~\cite{li2016revisiting}     & 82.2   & 88.2   & 85.9   & 87.4   & 85.7    \\
  AutoDIAL~\cite{maria2017autodial} & 83.3   & 87.7   & 85.6   & 87.1   & 85.9    \\
  f-DAN\footnotemark[3]~\cite{long2015learning,peng2019federated}     & 82.7   & 88.1   & 86.5   & 86.5   & 85.9    \\
  f-DANN\footnotemark[4]~\cite{ganin2015unsupervised,peng2019federated} & 83.5   & 88.5   & 85.9   & 87.1   & 86.3    \\
  FADA~\cite{peng2019federated}     & 84.2   & 88.7   & 87.1   & 88.1   & 87.1   \\
  \midrule
  FedRF-TCA (\uppercase\expandafter{\romannumeral1})        &94.1       &98.1   &\underline{98.9}       &88.9   & \underline{95.0}    \\ % \textbf{93.9} 
  FedRF-TCA (\uppercase\expandafter{\romannumeral2})        &\textbf{94.6}      &\textbf{98.7}  &\textbf{99.1}  &\underline{89.6}       & \textbf{95.5}
   \\
  FedRF-TCA (\uppercase\expandafter{\romannumeral3})        &\underline{94.5}   &\underline{98.6}       & 98.8  &\textbf{90.0}  & \textbf{95.5}
   \\
  \bottomrule
  \end{tabular}}
\end{table}
\begin{table}[htb]
  \centering
  \caption{Classification accuracy (\%) on Digit-Five dataset~\cite{ganin2016domain} with different federated DA methods. Baseline results repeated from \cite{peng2019federated}. ``$\to$ mt'' means ``mm,sv,sy,up$\to$mt.''  Settings (\uppercase\expandafter{\romannumeral1}), (\uppercase\expandafter{\romannumeral2}), and (\uppercase\expandafter{\romannumeral3}) as in \Cref{Office_10_Fedrated}. }
  \label{DigitFive}
  \begin{tabular}{ccccccc}
  \toprule
  Methods &  $\to$mt &  $\to$mm &  $\to$up &  $\to$sv &  $\to$sy & Avg  \\ \midrule
  Source Only                                         & 75.4    & 49.6     & 75.5       & 22.7      & 44.3        & 53.5 \\
  f-DANN\footnotemark[4]~\cite{ganin2015unsupervised}                                              & 86.1     & 59.5     & 89.7       & 44.3      & 53.4        & 66.6 \\
  f-DAN\footnotemark[3]~\cite{long2015learning}                                               & 86.4     & 57.5     & \underline{90.8}     & \underline{45.3}     & \underline{58.4}       & \underline{67.7} \\
  FADA~\cite{peng2019federated}                                                & 91.4       & 62.5    & \textbf{91.7}     & \textbf{50.5}      & \textbf{71.8}      & \textbf{73.6} \\ \midrule   
  FedRF-TCA (\uppercase\expandafter{\romannumeral1})  & 97.3    & \underline{64.6} & 89.2       & 41.5      & 43.9      & $67.3$    \\
  FedRF-TCA (\uppercase\expandafter{\romannumeral2})  & \textbf{97.5}   & \textbf{65.3}  & 89.2         & 41.9  & 43.8  & $67.5$        \\        
  FedRF-TCA (\uppercase\expandafter{\romannumeral3})  & \underline{97.4}        & 64.3  & 89.5  & 41.9  & 44.4  & $67.5$         \\
  \bottomrule
  \end{tabular}
\end{table}
\footnotetext[3]{Here, f-DAN is a federated DA method~\cite{peng2019federated} based on DAN~\cite{long2015learning}.}
\footnotetext[4]{Here, f-DANN is a federated DA method~\cite{peng2019federated} based on DANN~\cite{ganin2015unsupervised}.}

In~\Cref{Office_10_Fedrated}~and~\ref{DigitFive}, we present a series of experiments highlighting the exceptional performance of FedRF-TCA in comparison to state-of-the-art federated DA methods. 
Furthermore, we emphasize its robustness in scenarios involving message and/or client dropouts due to poor network conditions.%asynchronous training

Notably, our observations in settings (\uppercase\expandafter{\romannumeral2}) and (\uppercase\expandafter{\romannumeral3}) in relation to (\uppercase\expandafter{\romannumeral1}) in \Cref{Office_10_Fedrated}~and~\cref{DigitFive} reveal that exchanging information of only a random subset $\mathcal S_t$ of the source clients and asynchronously aggregating the classifier does \emph{not} adversely affect the performance of FedRF-TCA.

It is also worth highlighting that FedRF-TCA attains the optimal performance on the Office-Caltech dataset (\Cref{Office_10_Fedrated}), and consistently delivers comparable performance on Digit-Five (\Cref{DigitFive}) and Visda-C datasets (\Cref{Visda_C}), all while significantly reducing communication complexity.

\begin{table}[!htb]
    \centering
    \caption{ Classification accuracy (\%) on Visda-C dataset~\cite{peng2017visda}. Baseline results using the FADE~\cite{hong2021federated} strategy.}
    \label{Visda_C}
    \resizebox{\linewidth}{!}{
    \begin{tabular}{ccccc}
    \toprule
    Methods & Source only & DANN~\cite{ganin2015unsupervised}  & CDAN~\cite{long2018conditional} & FedRF-TCA \\
    \midrule
    Centralized & 46.6        & 57.6  & \textbf{73.9}        & \underline{64.5}  \\
    Federated    & 54.3   & 56.4  & \textbf{73.1}        & \underline{62.5}  \\     
    \bottomrule 
    \end{tabular}}
\end{table}

In \Cref{app_sec:exp} of the appendix, we provide further numerical experiments showing the advantageous performance and robustness of the proposed FedRF-TCA approach with classifier aggregation strategies different than FedAvg.

\subsection{Ablation Experiments}
\label{exp:adlation}

In this section, we provide ablation experiments to show the effective design of FedRF-TCA. 
To evaluate the effectiveness of FedRF-TCA, we perform ablation studies of each component of FedRF-TCA, on Digit-Five and Office-Caltech dataset in \Cref{tab:ablation_DigitFive}~and~\Cref{tab:ablation_officecaltech}, respectively.
The performance of FedRF-TCA is consistently better than other ablation settings updated using FedAvg.

\begin{table}[htb]
  \centering
  \caption{{Classification accuracy (\%) on Digit-Five dataset, for ResNet updated using FedAvg, ResNet + RF-TCA Module but updated using FedAvg, and FedRF-TCA.}}
  \label{tab:ablation_DigitFive}
  \resizebox{\linewidth}{!}{
  \begin{tabular}{lcccccc}
  \toprule
  Methods &  $\to$mt &  $\to$mm &  $\to$up &  $\to$sv &  $\to$sy & Avg  \\ \midrule
  ResNet  & \underline{54.2}       & \underline{26.5}    & \underline{48.7}     & \underline{22.3}      & \underline{20.6}      & \underline{34.5} \\ 
  ResNet + RF-TCA Module & 46.8 & 23.9 & 45.0 & 21.4 & 19.0 & 31.2 \\ 
  \midrule   
  FedRF-TCA   & \textbf{97.3}    & \textbf{64.6} & \textbf{89.2}       & \textbf{41.5}      & \textbf{43.9}      & \textbf{67.3}    \\
  \bottomrule
  \end{tabular}
  }
\end{table}

\begin{table}[htb]
  \centering
  \caption{{Classification accuracy (\%) on Office-Caltech dataset, for ResNet updated using FedAvg, ResNet + RF-TCA Module but updated using FedAvg, and  FedRF-TCA.}}
  \label{tab:ablation_officecaltech}
  \resizebox{\linewidth}{!}{
  \begin{tabular}{lccccc} 
  \toprule
  Methods                       & C,D,W$\to$A & A,D,W$\to$C & A,C,W$\to$D & A,C,D$\to$W & Avg \\
  \midrule
  ResNet                 & \underline{91.8} & 84.4    & 95.7     & 94.5     & 91.6      \\ 
  ResNet + RF-TCA Module & 90.4 & \underline{84.9} & \underline{95.9} & \underline{95.3} & \underline{91.6}  \\ 
  \midrule
  FedRF-TCA       & \textbf{94.1}       &\textbf{98.1}   &\textbf{98.9}       &\textbf{88.9}   & \textbf{95.0}    \\ 
  \bottomrule
  \end{tabular}}
\end{table}

\usetikzlibrary {patterns,patterns.meta}
\begin{figure}
  \centering
  \begin{tikzpicture}
  \begin{axis}
  [ybar , %柱状图 xbar 条形图
  grid=major,major grid style={dashed}, %显示背景网格
  ymin=0,  %Y轴刻度最小值
  ylabel=Test Accuracy (\%),  %Y轴标签
  % xlabel=Target Client Data Distribution,  %X轴标签
  % title={我的支出}, %图片标题
  bar width=.5cm, %柱子宽度
  width=14cm,
  height=5cm,  %图片的长和宽
  symbolic x coords={MNIST-M, Synthetic Digits},  %将x轴刻度设置为指定符号
  x=2.5cm, %设置x轴每个单元格的长度
  xtick=data, %根据数据设置x轴刻度
  % axis y line=left, %只显示左边y轴,即只显示左边框线，不显示右边框线
  % axis x line=bottom, %只显示底部的x轴
  nodes near coords, %柱子上显示数字
  nodes near coords style={font=\fontsize{8}{12}\selectfont}, %数字大小为8pt，行距为12pt
  enlarge x limits=0.5, %将x轴范围增加20%
  legend style={at={(1.22,1.1)},anchor=north, font=\scriptsize}, 
  legend cell align=right,
  % legend columns=-1
  ] 
  %添加数据
  \addplot[fill={rgb,255:red,0; green,129; blue,207},
            postaction ={pattern=north east lines}
        ] 
        coordinates {(MNIST-M, 65.3) (Synthetic Digits, 44.4)}; 
  \addplot[fill={rgb,255:red,124; green,187; blue,105},
            postaction ={pattern=horizontal lines}
        ] coordinates {(MNIST-M, 96.4) (Synthetic Digits, 92.5)}; 
  \addplot[fill={rgb,255:red,179; green,243; blue,158},
            postaction ={pattern=north west lines}
        ] coordinates {(MNIST-M, 46.3) (Synthetic Digits, 40.0)}; 
  \addplot[fill={rgb,255:red,94; green,194; blue,181},
            postaction={pattern=crosshatch}
        ] coordinates {(MNIST-M, 55.1) (Synthetic Digits, 48.2)};
  \legend{FedRF-TCA: explicit, FedRF-TCA: implicit, Without $\bSigma \bell$: implicit, FedAvg: implicit}; %标签
\end{axis} 
  \end{tikzpicture}
  \caption{{Performance of FedRF-TCA with and without $\bSigma \bell$, as well as of FedAvg in the case of \emph{explicit} and \emph{implicit} data heterogeneity. 
  For explicit data heterogeneity, we use the same setting as in \Cref{tab:ablation_DigitFive}; while for implicit data heterogeneity, we evenly divide the MNIST-M (or Synthetic Digits) of Digit-Five dataset into five subsets, so that each subset contains data from \emph{similar} local data distribution. 
  }}
  \label{fig:data_heterogeneity}
\end{figure}

To testify the performance of FedRF-TCA under \emph{implicit data heterogeneity} where different clients have similar local data distribution, we evenly divide the MNIST-M (or Synthetic Digits) of Digit-Five dataset into five subsets, so that each subset contains data from similar local data distribution. 
Then, one subset of data is used at the target client, and the four remaining subsets at the four source clients. 
This setting is referred to as ``implicit'' data heterogeneity in~\Cref{fig:data_heterogeneity}.
We observe from \Cref{fig:data_heterogeneity} that FedRF-TCA significantly outperforms the classical FedAvg approach, and that the transferred data-dependent features $\bSigma \bell$ play an critical role in such advantageous performance.
In particular, note that the performance under implicit data heterogeneity greatly exceeds that under explicit data heterogeneity: This is due to the fact that the local data domains in the implicit data heterogeneity scenario are more similar than in the explicit data heterogeneity (for which a more substantial effort is needed to perform DA).

\section{Conclusion and Perspectives}

In this paper, we propose FedRF-TCA, a robust and communication-efficient federated DA approach based on the improved random features-based TCA (RF-TCA) approach.
The proposed FedRF-TCA approach has a communication complexity that is (theoretically and practically) nearly \emph{independent} of the sample size, and is robust to messages and/or clients dropouts in the network.
We further provide extensive experiments demonstrating the numerical efficiency and advantageous performance of FedRF-TCA.

By leveraging the block matrix structure inherent in the random feature maps in \Cref{def:RFF}, FedRF-TCA can be readily extended vertical FL~\cite{gu2020federated}.
More generally, the FedRF-TCA strategy holds the potential for broader use in other federated MMD-based DA methods. 
Importantly, it accomplishes this while delivering substantial reductions in communication and computational complexity, all without compromising the performance typically associated with vanilla MMD-based methods.

\section*{Acknowledgments}

Z.~Liao would like to acknowledge the National Natural Science Foundation of China (NSFC-62206101) and the Guangdong Provincial Key Laboratory of Mathematical Foundations for Artificial Intelligence (2023B1212010001) for providing partial support.

R.~C.~Qiu would like to acknowledge the National Natural Science Foundation of China (NSFC-12141107) and the Key Research and Development Program of Guangxi (GuiKe-AB21196034) for providing partial support.

J.~Li's work has been has been partially supported by the National Key R\&D Program of China No.~2020YFB1710900 and No.~2020YFB1806700, NSFC Grants 61932014 and 62232011. 

{\footnotesize
\bibliographystyle{IEEEtran}
\bibliography{IEEEabrv,liao,wang,feng}

% Generated by IEEEtran.bst, version: 1.14 (2015/08/26)
\begin{thebibliography}{10}
\providecommand{\url}[1]{#1}
\csname url@samestyle\endcsname
\providecommand{\newblock}{\relax}
\providecommand{\bibinfo}[2]{#2}
\providecommand{\BIBentrySTDinterwordspacing}{\spaceskip=0pt\relax}
\providecommand{\BIBentryALTinterwordstretchfactor}{4}
\providecommand{\BIBentryALTinterwordspacing}{\spaceskip=\fontdimen2\font plus
\BIBentryALTinterwordstretchfactor\fontdimen3\font minus
  \fontdimen4\font\relax}
\providecommand{\BIBforeignlanguage}[2]{{%
\expandafter\ifx\csname l@#1\endcsname\relax
\typeout{** WARNING: IEEEtran.bst: No hyphenation pattern has been}%
\typeout{** loaded for the language `#1'. Using the pattern for}%
\typeout{** the default language instead.}%
\else
\language=\csname l@#1\endcsname
\fi
#2}}
\providecommand{\BIBdecl}{\relax}
\BIBdecl

\bibitem{brown2020language}
T.~Brown, B.~Mann, N.~Ryder, M.~Subbiah, J.~D. Kaplan, P.~Dhariwal,
  A.~Neelakantan, P.~Shyam, G.~Sastry, A.~Askell \emph{et~al.}, ``Language
  models are few-shot learners,'' \emph{Advances in neural information
  processing systems}, vol.~33, pp. 1877--1901, 2020.

\bibitem{rombach2022high}
R.~Rombach, A.~Blattmann, D.~Lorenz, P.~Esser, and B.~Ommer, ``High-resolution
  image synthesis with latent diffusion models,'' in \emph{Proceedings of the
  IEEE/CVF Conference on Computer Vision and Pattern Recognition}, 2022, pp.
  10\,684--10\,695.

\bibitem{ye2020mastering}
D.~Ye, Z.~Liu, M.~Sun, B.~Shi, P.~Zhao, H.~Wu, H.~Yu, S.~Yang, X.~Wu, Q.~Guo
  \emph{et~al.}, ``Mastering complex control in moba games with deep
  reinforcement learning,'' in \emph{Proceedings of the AAAI Conference on
  Artificial Intelligence}, vol.~34, no.~04, 2020, pp. 6672--6679.

\bibitem{park2023generative}
J.~S. Park, J.~C. O'Brien, C.~J. Cai, M.~R. Morris, P.~Liang, and M.~S.
  Bernstein, ``Generative agents: Interactive simulacra of human behavior,''
  \emph{arXiv preprint arXiv:2304.03442}, 2023.

\bibitem{saharia2022photorealistic}
C.~Saharia, W.~Chan, S.~Saxena, L.~Li, J.~Whang, E.~L. Denton, K.~Ghasemipour,
  R.~Gontijo~Lopes, B.~Karagol~Ayan, T.~Salimans \emph{et~al.},
  ``Photorealistic text-to-image diffusion models with deep language
  understanding,'' \emph{Advances in Neural Information Processing Systems},
  vol.~35, pp. 36\,479--36\,494, 2022.

\bibitem{openai2024gpt4technicalreport}
\BIBentryALTinterwordspacing
OpenAI, J.~Achiam, S.~Adler, S.~Agarwal, L.~Ahmad, I.~Akkaya, F.~L. Aleman,
  D.~Almeida, J.~Altenschmidt, S.~Altman, S.~Anadkat, R.~Avila, I.~Babuschkin,
  S.~Balaji, V.~Balcom, P.~Baltescu, H.~Bao, M.~Bavarian, J.~Belgum, I.~Bello,
  J.~Berdine, G.~Bernadett-Shapiro, C.~Berner, L.~Bogdonoff, O.~Boiko, M.~Boyd,
  A.-L. Brakman, G.~Brockman, T.~Brooks, M.~Brundage, K.~Button, T.~Cai,
  R.~Campbell, A.~Cann, B.~Carey, C.~Carlson, R.~Carmichael, B.~Chan, C.~Chang,
  F.~Chantzis, D.~Chen, S.~Chen, R.~Chen, J.~Chen, M.~Chen, B.~Chess, C.~Cho,
  C.~Chu, H.~W. Chung, D.~Cummings, J.~Currier, Y.~Dai, C.~Decareaux, T.~Degry,
  N.~Deutsch, D.~Deville, A.~Dhar, D.~Dohan, S.~Dowling, S.~Dunning,
  A.~Ecoffet, A.~Eleti, T.~Eloundou, D.~Farhi, L.~Fedus, N.~Felix, S.~P.
  Fishman, J.~Forte, I.~Fulford, L.~Gao, E.~Georges, C.~Gibson, V.~Goel,
  T.~Gogineni, G.~Goh, R.~Gontijo-Lopes, J.~Gordon, M.~Grafstein, S.~Gray,
  R.~Greene, J.~Gross, S.~S. Gu, Y.~Guo, C.~Hallacy, J.~Han, J.~Harris, Y.~He,
  M.~Heaton, J.~Heidecke, C.~Hesse, A.~Hickey, W.~Hickey, P.~Hoeschele,
  B.~Houghton, K.~Hsu, S.~Hu, X.~Hu, J.~Huizinga, S.~Jain, S.~Jain, J.~Jang,
  A.~Jiang, R.~Jiang, H.~Jin, D.~Jin, S.~Jomoto, B.~Jonn, H.~Jun, T.~Kaftan,
  L.~Kaiser, A.~Kamali, I.~Kanitscheider, N.~S. Keskar, T.~Khan, L.~Kilpatrick,
  J.~W. Kim, C.~Kim, Y.~Kim, J.~H. Kirchner, J.~Kiros, M.~Knight, D.~Kokotajlo,
  L.~Kondraciuk, A.~Kondrich, A.~Konstantinidis, K.~Kosic, G.~Krueger, V.~Kuo,
  M.~Lampe, I.~Lan, T.~Lee, J.~Leike, J.~Leung, D.~Levy, C.~M. Li, R.~Lim,
  M.~Lin, S.~Lin, M.~Litwin, T.~Lopez, R.~Lowe, P.~Lue, A.~Makanju,
  K.~Malfacini, S.~Manning, T.~Markov, Y.~Markovski, B.~Martin, K.~Mayer,
  A.~Mayne, B.~McGrew, S.~M. McKinney, C.~McLeavey, P.~McMillan, J.~McNeil,
  D.~Medina, A.~Mehta, J.~Menick, L.~Metz, A.~Mishchenko, P.~Mishkin,
  V.~Monaco, E.~Morikawa, D.~Mossing, T.~Mu, M.~Murati, O.~Murk, D.~Mély,
  A.~Nair, R.~Nakano, R.~Nayak, A.~Neelakantan, R.~Ngo, H.~Noh, L.~Ouyang,
  C.~O'Keefe, J.~Pachocki, A.~Paino, J.~Palermo, A.~Pantuliano,
  G.~Parascandolo, J.~Parish, E.~Parparita, A.~Passos, M.~Pavlov, A.~Peng,
  A.~Perelman, F.~de~Avila Belbute~Peres, M.~Petrov, H.~P. de~Oliveira~Pinto,
  Michael, Pokorny, M.~Pokrass, V.~H. Pong, T.~Powell, A.~Power, B.~Power,
  E.~Proehl, R.~Puri, A.~Radford, J.~Rae, A.~Ramesh, C.~Raymond, F.~Real,
  K.~Rimbach, C.~Ross, B.~Rotsted, H.~Roussez, N.~Ryder, M.~Saltarelli,
  T.~Sanders, S.~Santurkar, G.~Sastry, H.~Schmidt, D.~Schnurr, J.~Schulman,
  D.~Selsam, K.~Sheppard, T.~Sherbakov, J.~Shieh, S.~Shoker, P.~Shyam,
  S.~Sidor, E.~Sigler, M.~Simens, J.~Sitkin, K.~Slama, I.~Sohl, B.~Sokolowsky,
  Y.~Song, N.~Staudacher, F.~P. Such, N.~Summers, I.~Sutskever, J.~Tang,
  N.~Tezak, M.~B. Thompson, P.~Tillet, A.~Tootoonchian, E.~Tseng, P.~Tuggle,
  N.~Turley, J.~Tworek, J.~F.~C. Uribe, A.~Vallone, A.~Vijayvergiya, C.~Voss,
  C.~Wainwright, J.~J. Wang, A.~Wang, B.~Wang, J.~Ward, J.~Wei, C.~Weinmann,
  A.~Welihinda, P.~Welinder, J.~Weng, L.~Weng, M.~Wiethoff, D.~Willner,
  C.~Winter, S.~Wolrich, H.~Wong, L.~Workman, S.~Wu, J.~Wu, M.~Wu, K.~Xiao,
  T.~Xu, S.~Yoo, K.~Yu, Q.~Yuan, W.~Zaremba, R.~Zellers, C.~Zhang, M.~Zhang,
  S.~Zhao, T.~Zheng, J.~Zhuang, W.~Zhuk, and B.~Zoph, ``{GPT-4 Technical
  Report},'' 2024. [Online]. Available: \url{https://arxiv.org/abs/2303.08774}
\BIBentrySTDinterwordspacing

\bibitem{karamcheti2023language}
S.~Karamcheti, S.~Nair, A.~S. Chen, T.~Kollar, C.~Finn, D.~Sadigh, and
  P.~Liang, ``Language-driven representation learning for robotics,''
  \emph{arXiv preprint arXiv:2302.12766}, 2023.

\bibitem{sutton2022quest}
R.~S. Sutton, ``The quest for a common model of the intelligent decision
  maker,'' \emph{arXiv preprint arXiv:2202.13252}, 2022.

\bibitem{yang2019federated}
Q.~Yang, Y.~Liu, T.~Chen, and Y.~Tong, ``Federated machine learning: Concept
  and applications,'' \emph{ACM Transactions on Intelligent Systems and
  Technology (TIST)}, vol.~10, no.~2, pp. 1--19, 2019.

\bibitem{mcmahan2017communication}
B.~McMahan, E.~Moore, D.~Ramage, S.~Hampson, and B.~A. y~Arcas,
  ``Communication-efficient learning of deep networks from decentralized
  data,'' in \emph{Artificial intelligence and statistics}.\hskip 1em plus
  0.5em minus 0.4em\relax PMLR, 2017, pp. 1273--1282.

\bibitem{wang2018deep}
M.~Wang and W.~Deng, ``Deep visual domain adaptation: A survey,''
  \emph{Neurocomputing}, vol. 312, pp. 135--153, 2018.

\bibitem{smola2007algo}
A.~Smola, A.~Gretton, L.~Song, and B.~Schölkopf, ``{Algorithmic Learning
  Theory},'' \emph{Lecture Notes in Computer Science}, pp. 13--31, 2007.

\bibitem{gretton2006kernel}
\BIBentryALTinterwordspacing
A.~Gretton, K.~Borgwardt, M.~Rasch, B.~Schölkopf, and A.~Smola, ``{A Kernel
  Method for the Two-Sample-Problem},'' in \emph{Advances in Neural Information
  Processing Systems}, vol.~19.\hskip 1em plus 0.5em minus 0.4em\relax MIT
  Press. [Online]. Available:
  \url{https://proceedings.neurips.cc/paper/2006/file/e9fb2eda3d9c55a0d89c98d6c54b5b3e-Paper.pdf}
\BIBentrySTDinterwordspacing

\bibitem{peng2019federated}
X.~Peng, Z.~Huang, Y.~Zhu, and K.~Saenko, ``Federated adversarial domain
  adaptation,'' \emph{arXiv preprint arXiv:1911.02054}, 2019.

\bibitem{ghifary2014domain}
M.~Ghifary, W.~B. Kleijn, and M.~Zhang, ``Domain adaptive neural networks for
  object recognition,'' in \emph{Pacific Rim international conference on
  artificial intelligence}.\hskip 1em plus 0.5em minus 0.4em\relax Springer,
  2014, pp. 898--904.

\bibitem{sun2023feature}
Y.~Sun, N.~Chong, and H.~Ochiai, ``Feature distribution matching for federated
  domain generalization,'' in \emph{Asian Conference on Machine
  Learning}.\hskip 1em plus 0.5em minus 0.4em\relax PMLR, 2023, pp. 942--957.

\bibitem{kairouz2021Advances}
P.~Kairouz, H.~B. McMahan, B.~Avent, A.~Bellet, M.~Bennis, A.~N. Bhagoji,
  K.~Bonawitz, Z.~Charles, G.~Cormode, R.~Cummings, R.~G.~L. D'Oliveira,
  H.~Eichner, S.~E. Rouayheb, D.~Evans, J.~Gardner, Z.~Garrett, A.~Gasc{\'o}n,
  B.~Ghazi, P.~B. Gibbons, M.~Gruteser, Z.~Harchaoui, C.~He, L.~He, Z.~Huo,
  B.~Hutchinson, J.~Hsu, M.~Jaggi, T.~Javidi, G.~Joshi, M.~Khodak,
  J.~Konecn{\'y}, A.~Korolova, F.~Koushanfar, S.~Koyejo, T.~Lepoint, Y.~Liu,
  P.~Mittal, M.~Mohri, R.~Nock, A.~{\"O}zg{\"u}r, R.~Pagh, H.~Qi, D.~Ramage,
  R.~Raskar, M.~Raykova, D.~Song, W.~Song, S.~U. Stich, Z.~Sun, A.~T. Suresh,
  F.~Tram{\`e}r, P.~Vepakomma, J.~Wang, L.~Xiong, Z.~Xu, Q.~Yang, F.~X. Yu,
  H.~Yu, and S.~Zhao, ``Advances and {{Open Problems}} in {{Federated
  Learning}},'' \emph{Foundations and Trends\textregistered{} in Machine
  Learning}, vol.~14, no. 1\textendash 2, pp. 1--210, Jun. 2021.

\bibitem{li2020federated}
T.~Li, A.~K. Sahu, A.~Talwalkar, and V.~Smith, ``Federated learning:
  Challenges, methods, and future directions,'' \emph{IEEE signal processing
  magazine}, vol.~37, no.~3, pp. 50--60, 2020.

\bibitem{rahimi2008random}
\BIBentryALTinterwordspacing
A.~Rahimi and B.~Recht, ``{Random Features for Large-Scale Kernel Machines},''
  in \emph{Advances in Neural Information Processing Systems}, ser. NIPS‘08,
  vol.~20.\hskip 1em plus 0.5em minus 0.4em\relax Curran Associates, Inc.,
  2008, pp. 1177--1184. [Online]. Available:
  \url{https://proceedings.neurips.cc/paper/2007/file/013a006f03dbc5392effeb8f18fda755-Paper.pdf}
\BIBentrySTDinterwordspacing

\bibitem{pan2010domain}
S.~J. Pan, I.~W. Tsang, J.~T. Kwok, and Q.~Yang, ``Domain adaptation via
  transfer component analysis,'' \emph{IEEE transactions on neural networks},
  vol.~22, no.~2, pp. 199--210, 2010.

\bibitem{cormen2022introduction}
T.~H. Cormen, C.~E. Leiserson, R.~L. Rivest, and C.~Stein, \emph{Introduction
  to algorithms}.\hskip 1em plus 0.5em minus 0.4em\relax MIT press, 2022.

\bibitem{de1981asymptotic}
N.~G. De~Bruijn, \emph{Asymptotic methods in analysis}.\hskip 1em plus 0.5em
  minus 0.4em\relax Courier Corporation, 1981, vol.~4.

\bibitem{scholkopf2018kernel}
B.~Schölkopf and A.~J. Smola, \emph{{Learning with Kernels: Support Vector
  Machines, Regularization, Optimization, and Beyond}}.\hskip 1em plus 0.5em
  minus 0.4em\relax The MIT Press, 2018.

\bibitem{liao2020rff}
\BIBentryALTinterwordspacing
Z.~Liao, R.~Couillet, and M.~W. Mahoney, ``{A random matrix analysis of random
  Fourier features: beyond the Gaussian kernel, a precise phase transition, and
  the corresponding double descent},'' in \emph{Advances in Neural Information
  Processing Systems}, ser. NIPS'20, vol.~33.\hskip 1em plus 0.5em minus
  0.4em\relax Curran Associates, Inc., 2020, pp. 13\,939---13\,950. [Online].
  Available:
  \url{https://proceedings.neurips.cc/paper/2020/file/a03fa30821986dff10fc66647c84c9c3-Paper.pdf}
\BIBentrySTDinterwordspacing

\bibitem{vedaldi2012efficient}
A.~Vedaldi and A.~Zisserman, ``{Efficient Additive Kernels via Explicit Feature
  Maps},'' \emph{IEEE Transactions on Pattern Analysis and Machine
  Intelligence}, vol.~34, no.~3, pp. 480--492, 2012.

\bibitem{liu2021random}
F.~Liu, X.~Huang, Y.~Chen, and J.~A.~K. Suykens, ``{Random Features for Kernel
  Approximation: A Survey on Algorithms, Theory, and Beyond},'' \emph{IEEE
  Transactions on Pattern Analysis and Machine Intelligence}, vol.~PP, no.~99,
  pp. 1--1, 2021.

\bibitem{louart2018random}
\BIBentryALTinterwordspacing
C.~Louart, Z.~Liao, and R.~Couillet, ``{A random matrix approach to neural
  networks},'' \emph{Annals of Applied Probability}, vol.~28, no.~2, pp.
  1190--1248, 2018. [Online]. Available:
  \url{https://doi.org/10.1214/17-AAP1328}
\BIBentrySTDinterwordspacing

\bibitem{couillet_liao_2022}
R.~Couillet and Z.~Liao, \emph{{Random Matrix Methods for Machine
  Learning}}.\hskip 1em plus 0.5em minus 0.4em\relax Cambridge University
  Press, 2022.

\bibitem{gu2022Lossless}
\BIBentryALTinterwordspacing
L.~Gu, Y.~Du, Y.~Zhang, D.~Xie, S.~Pu, R.~Qiu, and Z.~Liao, ``"lossless"
  compression of deep neural networks: A high-dimensional neural tangent kernel
  approach,'' in \emph{Advances in Neural Information Processing Systems},
  vol.~35.\hskip 1em plus 0.5em minus 0.4em\relax Curran Associates, Inc.,
  2022, pp. 3774--3787. [Online]. Available:
  \url{https://proceedings.neurips.cc/paper_files/paper/2022/file/185087ea328b4f03ea8fd0c8aa96f747-Paper-Conference.pdf}
\BIBentrySTDinterwordspacing

\bibitem{pan2011domain}
S.~J. Pan, I.~W. Tsang, J.~T. Kwok, and Q.~Yang, ``{Domain Adaptation via
  Transfer Component Analysis},'' \emph{IEEE Transactions on Neural Networks},
  vol.~22, no.~2, pp. 199--210, 2011.

\bibitem{long2013transfer}
M.~Long, J.~Wang, G.~Ding, J.~Sun, and P.~S. Yu, ``Transfer feature learning
  with joint distribution adaptation,'' in \emph{Proceedings of the IEEE
  international conference on computer vision}, 2013, pp. 2200--2207.

\bibitem{gong2012geodesic}
B.~Gong, Y.~Shi, F.~Sha, and K.~Grauman, ``Geodesic flow kernel for
  unsupervised domain adaptation,'' in \emph{2012 IEEE conference on computer
  vision and pattern recognition}.\hskip 1em plus 0.5em minus 0.4em\relax IEEE,
  2012, pp. 2066--2073.

\bibitem{sun2016return}
B.~Sun, J.~Feng, and K.~Saenko, ``Return of frustratingly easy domain
  adaptation,'' in \emph{Proceedings of the AAAI Conference on Artificial
  Intelligence}, vol.~30, no.~1, 2016.

\bibitem{long2015learning}
M.~Long, Y.~Cao, J.~Wang, and M.~Jordan, ``Learning transferable features with
  deep adaptation networks,'' in \emph{International conference on machine
  learning}.\hskip 1em plus 0.5em minus 0.4em\relax PMLR, 2015, pp. 97--105.

\bibitem{tzeng2014deep}
E.~Tzeng, J.~Hoffman, N.~Zhang, K.~Saenko, and T.~Darrell, ``Deep domain
  confusion: Maximizing for domain invariance,'' \emph{arXiv preprint
  arXiv:1412.3474}, 2014.

\bibitem{ganin2015unsupervised}
Y.~Ganin and V.~Lempitsky, ``Unsupervised domain adaptation by
  backpropagation,'' in \emph{International conference on machine
  learning}.\hskip 1em plus 0.5em minus 0.4em\relax PMLR, 2015, pp. 1180--1189.

\bibitem{long2018conditional}
M.~Long, Z.~Cao, J.~Wang, and M.~I. Jordan, ``Conditional adversarial domain
  adaptation,'' \emph{Advances in neural information processing systems},
  vol.~31, 2018.

\bibitem{gu2020federated}
B.~Gu, Z.~Dang, X.~Li, and H.~Huang, ``Federated doubly stochastic kernel
  learning for vertically partitioned data,'' in \emph{Proceedings of the 26th
  ACM SIGKDD international conference on knowledge discovery \& data mining},
  2020, pp. 2483--2493.

\bibitem{smith2017federated}
V.~Smith, C.-K. Chiang, M.~Sanjabi, and A.~S. Talwalkar, ``Federated multi-task
  learning,'' \emph{Advances in neural information processing systems},
  vol.~30, 2017.

\bibitem{song2020privacy}
L.~Song, C.~Ma, G.~Zhang, and Y.~Zhang, ``Privacy-preserving unsupervised
  domain adaptation in federated setting,'' \emph{IEEE Access}, vol.~8, pp.
  143\,233--143\,240, 2020.

\bibitem{kang2022communicational}
H.~Kang, Z.~Li, and Q.~Zhang, ``Communicational and computational efficient
  federated domain adaptation,'' \emph{IEEE Transactions on Parallel and
  Distributed Systems}, vol.~33, no.~12, pp. 3678--3689, 2022.

\bibitem{ben-david2006Analysis}
S.~{Ben-David}, J.~Blitzer, K.~Crammer, and F.~Pereira, ``Analysis of
  {{Representations}} for {{Domain Adaptation}},'' in \emph{Advances in
  {{Neural Information Processing Systems}}}, vol.~19.\hskip 1em plus 0.5em
  minus 0.4em\relax {MIT Press}, 2006.

\bibitem{pan2008Transfer}
S.~J. Pan, J.~T. Kwok, Q.~Yang \emph{et~al.}, ``Transfer learning via
  dimensionality reduction.'' in \emph{AAAI}, vol.~8, 2008, pp. 677--682.

\bibitem{pan2010Survey}
S.~J. Pan and Q.~Yang, ``A {{Survey}} on {{Transfer Learning}},'' \emph{IEEE
  Transactions on Knowledge and Data Engineering}, vol.~22, no.~10, pp.
  1345--1359, Oct. 2010.

\bibitem{golub2013matrix}
\BIBentryALTinterwordspacing
G.~H. Golub and C.~F.~V. Loan, \emph{{Matrix Computations}}, 3rd~ed., ser.
  Johns Hopkins Studies in the Mathematical Sciences.\hskip 1em plus 0.5em
  minus 0.4em\relax The Johns Hopkins University Press, 2013. [Online].
  Available: \url{https://jhupbooks.press.jhu.edu/title/matrix-computations}
\BIBentrySTDinterwordspacing

\bibitem{wang2020federated}
H.~Wang, M.~Yurochkin, Y.~Sun, D.~Papailiopoulos, and Y.~Khazaeni, ``Federated
  learning with matched averaging,'' \emph{arXiv preprint arXiv:2002.06440},
  2020.

\bibitem{quinonero2022dataset}
J.~Qui{\~n}onero-Candela, M.~Sugiyama, A.~Schwaighofer, and N.~D. Lawrence,
  \emph{Dataset shift in machine learning}.\hskip 1em plus 0.5em minus
  0.4em\relax Mit Press, 2022.

\bibitem{von2007tutorial}
\BIBentryALTinterwordspacing
U.~V. Luxburg, ``{A tutorial on spectral clustering},'' \emph{Statistics and
  Computing}, vol.~17, no.~4, pp. 395--416, 2007. [Online]. Available:
  \url{https://doi.org/10.1007/s11222-007-9033-z}
\BIBentrySTDinterwordspacing

\bibitem{joseph2016impact}
A.~Joseph and B.~Yu, ``{Impact of regularization on spectral clustering},''
  \emph{The Annals of Statistics}, vol.~44, no.~4, pp. 1765--1791, 2016.

\bibitem{tropp2015matrix}
J.~A. Tropp, ``{An Introduction to Matrix Concentration Inequalities},''
  \emph{Foundations and Trends® in Machine Learning}, vol.~8, no. 1-2, pp.
  1--230, 2015.

\bibitem{paillier1999public}
P.~Paillier, ``Public-key cryptosystems based on composite degree residuosity
  classes,'' in \emph{Advances in Cryptology—EUROCRYPT’99: International
  Conference on the Theory and Application of Cryptographic Techniques Prague,
  Czech Republic, May 2--6, 1999 Proceedings 18}.\hskip 1em plus 0.5em minus
  0.4em\relax Springer, 1999, pp. 223--238.

\bibitem{zong2023Privacy}
Z.~Zong, M.~Yang, J.~Ley, A.~Markopoulou, and C.~Butts, ``Privacy by
  {{Projection}}: {{Federated Population Density Estimation}} by {{Projecting}}
  on {{Random Features}},'' \emph{Proceedings on Privacy Enhancing
  Technologies}, 2023.

\bibitem{saenko2010adapting}
K.~Saenko, B.~Kulis, M.~Fritz, and T.~Darrell, ``Adapting visual category
  models to new domains,'' in \emph{European conference on computer
  vision}.\hskip 1em plus 0.5em minus 0.4em\relax Springer, 2010, pp. 213--226.

\bibitem{ganin2016domain}
Y.~Ganin, E.~Ustinova, H.~Ajakan, P.~Germain, H.~Larochelle, F.~Laviolette,
  M.~Marchand, and V.~Lempitsky, ``Domain-adversarial training of neural
  networks,'' \emph{The journal of machine learning research}, vol.~17, no.~1,
  pp. 2096--2030, 2016.

\bibitem{peng2017visda}
X.~Peng, B.~Usman, N.~Kaushik, J.~Hoffman, D.~Wang, and K.~Saenko, ``Visda: The
  visual domain adaptation challenge,'' \emph{arXiv preprint arXiv:1710.06924},
  2017.

\bibitem{hong2021federated}
J.~Hong, Z.~Zhu, S.~Yu, Z.~Wang, H.~H. Dodge, and J.~Zhou, ``Federated
  adversarial debiasing for fair and transferable representations,'' in
  \emph{Proceedings of the 27th ACM SIGKDD Conference on Knowledge Discovery \&
  Data Mining}, 2021, pp. 617--627.

\bibitem{donahue2014decaf}
J.~Donahue, Y.~Jia, O.~Vinyals, J.~Hoffman, N.~Zhang, E.~Tzeng, and T.~Darrell,
  ``Decaf: A deep convolutional activation feature for generic visual
  recognition,'' in \emph{International conference on machine learning}.\hskip
  1em plus 0.5em minus 0.4em\relax PMLR, 2014, pp. 647--655.

\bibitem{he2016deep}
K.~He, X.~Zhang, S.~Ren, and J.~Sun, ``{Deep Residual Learning for Image
  Recognition},'' in \emph{2016 IEEE Conference on Computer Vision and Pattern
  Recognition (CVPR)}, ser. 2016 IEEE Conference on Computer Vision and Pattern
  Recognition (CVPR).\hskip 1em plus 0.5em minus 0.4em\relax IEEE, 2016, pp.
  770--778.

\bibitem{li2016revisiting}
Y.~Li, N.~Wang, J.~Shi, J.~Liu, and X.~Hou, ``Revisiting batch normalization
  for practical domain adaptation,'' \emph{arXiv preprint arXiv:1603.04779},
  2016.

\bibitem{maria2017autodial}
F.~Maria~Carlucci, L.~Porzi, B.~Caputo, E.~Ricci, and S.~Rota~Bulo, ``Autodial:
  Automatic domain alignment layers,'' in \emph{Proceedings of the IEEE
  international conference on computer vision}, 2017, pp. 5067--5075.

\bibitem{horn2012matrix}
\BIBentryALTinterwordspacing
R.~A. Horn and C.~R. Johnson, \emph{{Matrix Analysis}}, 2nd~ed.\hskip 1em plus
  0.5em minus 0.4em\relax Cambridge University Press, 2012. [Online].
  Available: \url{http://www.cambridge.org/9780521548236}
\BIBentrySTDinterwordspacing

\bibitem{yu2015useful}
Y.~Yu, T.~Wang, and R.~J. Samworth, ``{A useful variant of the Davis–Kahan
  theorem for statisticians},'' \emph{Biometrika}, vol. 102, no.~2, pp.
  315--323, 2015.

\end{thebibliography}
}

\vspace{11pt}

\clearpage 

\appendices

\onecolumn

\section{Useful Lemmas}
\label{sec:app_lemmas}

\begin{Lemma}[Sherman–Morrison]\label{lem:sherman-morrison}
For $\A \in \RR^{p \times p}$ invertible and $\uu,\vv \in \RR^p$, $\A + \uu \vv^\T$ is invertible if and only if $1+\vv^\T \A^{-1} \uu \neq 0$ and
\begin{equation*}
  (\A + \uu \vv^\T)^{-1} = \A^{-1} - \frac{\A^{-1} \uu \vv^\T \A^{-1} }{1+\vv^\T \A^{-1} \uu}.
\end{equation*}
\end{Lemma}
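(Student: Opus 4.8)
The plan is to establish the two assertions — the invertibility criterion and the closed form of the inverse — separately, with a single direct computation carrying most of the weight. Throughout I write $\beta \equiv 1 + \vv^\T \A^{-1} \uu \in \RR$ for the scalar in question, and I repeatedly exploit that $\vv^\T \A^{-1} \uu$ is a \emph{scalar}, so it may be pulled out of matrix products, whereas the rank-one block $\uu \vv^\T \A^{-1}$ may not.

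First I would prove the ``if'' direction together with the formula by direct verification. Assuming $\beta \neq 0$, set $\M \equiv \A^{-1} - (\A^{-1} \uu \vv^\T \A^{-1})/\beta$ and compute $(\A + \uu \vv^\T)\M$. Expanding the four terms gives $\I + \uu \vv^\T \A^{-1}$ minus $(\uu \vv^\T \A^{-1} + \uu(\vv^\T \A^{-1}\uu)\vv^\T \A^{-1})/\beta$; collecting the rank-one contributions, the coefficient of $\uu \vv^\T \A^{-1}$ is $1 - (1 + \vv^\T \A^{-1}\uu)/\beta = 1 - \beta/\beta = 0$, so the product reduces to $\I$. This shows $\M$ is a right inverse of $\A + \uu \vv^\T$, and since both matrices are square, this already certifies invertibility with $(\A + \uu \vv^\T)^{-1} = \M$ (the one-sided check $\M(\A + \uu \vv^\T) = \I$ is entirely analogous, should one prefer to verify it explicitly).

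Next I would prove the ``only if'' direction by contraposition: if $\beta = 0$, then $\A + \uu \vv^\T$ is singular. When $\beta = 0$ one has $\vv^\T \A^{-1}\uu = -1$, which forces $\uu \neq 0$ and hence $\A^{-1}\uu \neq 0$ because $\A$ is invertible. The nonzero vector $\A^{-1}\uu$ then lies in the kernel, since $(\A + \uu \vv^\T)\A^{-1}\uu = \uu + \uu(\vv^\T \A^{-1}\uu) = \uu\,\beta = 0$, exhibiting the singularity. Combining the two directions yields the stated equivalence.

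The arguments here are entirely routine; the only point demanding care is the bookkeeping of scalars versus matrices, and checking that the degenerate case $\uu = 0$ causes no trouble — but $\uu = 0$ gives $\beta = 1 \neq 0$, consistent with $\A + \uu \vv^\T = \A$ being invertible, so it never triggers the singular branch. As an alternative to the kernel argument, one could instead invoke the matrix determinant lemma $\det(\A + \uu \vv^\T) = \det(\A)\,\beta$ (itself a one-line consequence of evaluating a $2 \times 2$ block Schur complement two ways), from which the equivalence is immediate because $\det \A \neq 0$. I do not expect any genuine obstacle; if anything, the ``subtle'' step is merely remembering to justify the invertibility \emph{equivalence} rather than only producing a candidate inverse.
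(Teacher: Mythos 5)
Your proof is correct and complete. Note that the paper itself states this lemma as a classical fact in its appendix of useful lemmas and offers \emph{no} proof at all, so there is no in-paper argument to compare against; your direct verification is the standard textbook route. Both halves of your argument check out: the product $(\A + \uu\vv^\T)\M$ does collapse to $\I$ because the coefficient of the rank-one term $\uu\vv^\T\A^{-1}$ is $1 - (1+\vv^\T\A^{-1}\uu)/\beta = 0$, and your appeal to the finite-dimensional fact that a one-sided inverse of a square matrix is two-sided is legitimate. The contrapositive direction is also handled properly: when $\beta = 0$ you correctly rule out $\uu = \zo$ before exhibiting $\A^{-1}\uu$ as a nonzero kernel vector, which is exactly the point a careless write-up would miss. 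The optional determinant-lemma alternative you mention would work equally well.
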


\begin{Theorem}[Weyl's inequality, {\cite[Theorem~4.3.1]{horn2012matrix}}]\label{theo:weyl}
Let $\A,\B \in \RR^{p \times p}$ be symmetric matrices and let the respective eigenvalues of $\A$, $\B$ and $\A+\B$ be arranged in non-decreasing order, i.e., $\lambda_1 \leq \lambda_2 \leq \ldots \leq \lambda_{p-1} \leq \lambda_p$. Then, for all $i\in\{1,\ldots,p\}$,
\begin{align*}
  &\lambda_i(\A + \B) \leq \lambda_{i+j} (\A) + \lambda_{p-j}(\B), \quad j=0,1,\ldots,p-i,\\
    &\lambda_{i-j+1} (\A) + \lambda_j(\B) \leq \lambda_i(\A + \B), \quad j=1,\ldots,i.
\end{align*}
In particular, taking $i=1$ in the first equation and $i=p$ in the second equation, together with the fact $\lambda_j(\B) =-\lambda_{p+1-j}(-\B)$ for $j=1,\ldots,p$, implies
\begin{align*}
  \max_{1\leq j\leq p} |\lambda_j(\A) - \lambda_j(\B)| \leq \| \A - \B \|_2.
\end{align*}
\end{Theorem}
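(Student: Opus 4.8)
The plan is to derive Weyl's inequality from the Courant–Fischer variational characterization of the eigenvalues of a symmetric matrix, and then to obtain the displayed perturbation bound as a corollary. Throughout, for a symmetric $\M \in \RR^{p \times p}$ with eigenvalues ordered non-decreasingly as $\lambda_1(\M) \leq \cdots \leq \lambda_p(\M)$, I write $R_\M(\x) = \x^\T \M \x/(\x^\T \x)$ for the Rayleigh quotient, and I take as the single foundational tool the min--max theorem, namely $\lambda_i(\M) = \min_{\dim \mathcal S = i} \max_{0 \neq \x \in \mathcal S} R_\M(\x)$ together with its dual max--min form $\lambda_i(\M) = \max_{\dim \mathcal S = p-i+1} \min_{0 \neq \x \in \mathcal S} R_\M(\x)$; both follow from the spectral theorem and are standard.

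First I would prove the upper inequality $\lambda_i(\A+\B) \leq \lambda_{i+j}(\A) + \lambda_{p-j}(\B)$ for a fixed $j \in \{0,\ldots,p-i\}$ by a dimension-counting argument. Let $\{\uu_k\}$ and $\{\vv_k\}$ be orthonormal eigenbases of $\A$ and $\B$, and set $\mathcal U = \mathrm{span}\{\uu_1,\ldots,\uu_{i+j}\}$ and $\mathcal V = \mathrm{span}\{\vv_1,\ldots,\vv_{p-j}\}$. Since each subspace is spanned by the bottom eigenvectors of the respective matrix, one has $R_\A(\x) \leq \lambda_{i+j}(\A)$ for $\x \in \mathcal U$ and $R_\B(\x) \leq \lambda_{p-j}(\B)$ for $\x \in \mathcal V$. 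Because $\dim \mathcal U + \dim \mathcal V = (i+j)+(p-j) = p+i$, the intersection obeys $\dim(\mathcal U \cap \mathcal V) \geq i$, so I may select a subspace $\mathcal S \subseteq \mathcal U \cap \mathcal V$ with $\dim \mathcal S = i$. For every $\x \in \mathcal S$ we get $R_{\A+\B}(\x) = R_\A(\x) + R_\B(\x) \leq \lambda_{i+j}(\A) + \lambda_{p-j}(\B)$, and feeding this particular $\mathcal S$ into the min--max formula $\lambda_i(\A+\B) = \min_{\dim \mathcal S' = i} \max_{\x \in \mathcal S'} R_{\A+\B}(\x)$ yields the claimed bound.

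Next I would obtain the lower inequality by symmetry rather than by repeating the argument. Applying the already-proved upper inequality to the pair $(-\A,-\B)$ and using $\lambda_k(-\M) = -\lambda_{p+1-k}(\M)$ to translate every eigenvalue, the relation $\lambda_m(-\A-\B) \leq \lambda_{m+j'}(-\A) + \lambda_{p-j'}(-\B)$ becomes, after the substitution $m = p+1-i$, the reindexing $j = j'+1$, and a sign flip, exactly $\lambda_{i-j+1}(\A) + \lambda_j(\B) \leq \lambda_i(\A+\B)$ for $j \in \{1,\ldots,i\}$. This bookkeeping---tracking how the non-decreasing order reverses under negation and how the admissible index ranges transform---is the step I expect to be most error-prone, so I would verify it at the endpoints $j=1$ and $j=i$.

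Finally, for the perturbation bound I would set $\E = \A - \B$ so that $\A = \B + \E$, and specialize the two inequalities to this sum. The upper inequality with offset $j=0$ gives $\lambda_i(\A) \leq \lambda_i(\B) + \lambda_p(\E)$, while the lower inequality with $j=1$ gives $\lambda_i(\B) + \lambda_1(\E) \leq \lambda_i(\A)$; together these sandwich $\lambda_1(\E) \leq \lambda_i(\A) - \lambda_i(\B) \leq \lambda_p(\E)$ for every $i$. Since $\E$ is symmetric, $\|\E\|_2 = \max\{\lambda_p(\E), -\lambda_1(\E)\}$, and the stated identity $\lambda_j(\E) = -\lambda_{p+1-j}(-\E)$ is precisely what rewrites $-\lambda_1(\E)$ as the top eigenvalue of $-\E$, confirming this formula for the spectral norm. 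Hence $|\lambda_i(\A) - \lambda_i(\B)| \leq \|\E\|_2 = \|\A-\B\|_2$, and taking the maximum over $i$ closes the proof.
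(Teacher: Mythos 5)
Your proof is correct. Note that the paper itself does not prove this statement at all: it is quoted verbatim, with citation, from Horn and Johnson (Theorem~4.3.1), so there is no paper proof to diverge from. Your argument---Courant--Fischer min--max plus the dimension count $\dim(\mathcal U \cap \mathcal V) \geq \dim \mathcal U + \dim \mathcal V - p = i$ for the upper inequality, then negation with $\lambda_k(-\M) = -\lambda_{p+1-k}(\M)$ for the lower one---is precisely the classical textbook proof, and your reindexing ($m = p+1-i$, $j = j'+1$, range $j' \in \{0,\ldots,i-1\}$ mapping to $j \in \{1,\ldots,i\}$) checks out at both endpoints. One small point in your favor: your derivation of the perturbation corollary, specializing to $j=0$ in the upper bound and $j=1$ in the lower bound at \emph{arbitrary} $i$ to sandwich $\lambda_1(\A-\B) \leq \lambda_i(\A) - \lambda_i(\B) \leq \lambda_p(\A-\B)$, is actually cleaner and more directly correct than the parenthetical hint in the theorem statement itself (``taking $i=1$ in the first equation and $i=p$ in the second''), which as written does not immediately produce the uniform-over-$i$ bound; your route is the standard and rigorous one.
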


\begin{Theorem}[Davis--Kahan theorem,~\cite{yu2015useful}]\label{theo:davis-kahan}
Under the same notation as in Theorem~\ref{theo:weyl}, assume that for a given $i \in \{ 1, \ldots, n\}$ one has $ \min \{ | \lambda_{i-1}(\A ) - \lambda_i(\A) |, | \lambda_{i+1}(\A ) - \lambda_i(\A) | > 0 $, then the corresponding eigenvectors satisfy 
\begin{equation}
  \sin \Theta\left( \uu_i (\A + \B), \uu_i (\A) \right) \leq \frac{ 2 \| \B \|_2 }{ \min \{ | \lambda_{i-1}(\A ) - \lambda_i(\A) |, | \lambda_{i+1}(\A ) - \lambda_i(\A) | \} },
\end{equation}
where we denote `$\sin \Theta$' the alignment between two vectors $\uu_1, \uu_2 \in \RR^{n}$ as
\begin{equation}
  \Theta(\uu_1, \uu_2) \equiv \arccos \left( \frac{\uu_1^\T \uu_2}{\| \uu_1 \|_2 \cdot \| \uu_2 \|_2} \right),% \in [-1, 1],
\end{equation}
that satisfies $\| \uu_1 - \uu_2 \| \leq \sqrt 2 \sin \Theta(\uu_1, \uu_2)$.
\end{Theorem}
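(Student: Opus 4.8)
The plan is to prove the $\sin\Theta$ bound by expanding the perturbed eigenvector in the eigenbasis of $\A$, reading off each coordinate from the eigen-equation, and then converting the angle bound into the vector-distance bound by an elementary trigonometric identity. Throughout, write $\{ \uu_j \}_{j=1}^p$ for an orthonormal eigenbasis of the symmetric matrix $\A$ with eigenvalues $\lambda_j \equiv \lambda_j(\A)$, and let $\hat\uu \equiv \uu_i(\A + \B)$ be the unit eigenvector of $\A + \B$ with eigenvalue $\hat\lambda \equiv \lambda_i(\A + \B)$. Since eigenvectors are defined only up to sign, I would first fix the orientation of $\hat\uu$ so that $\uu_i^\T \hat\uu \geq 0$, placing $\Theta \equiv \Theta(\hat\uu, \uu_i)$ in $[0, \pi/2]$ with $\cos\Theta = \uu_i^\T \hat\uu$.

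First I would expand $\hat\uu = \sum_{j=1}^p c_j \uu_j$ with $c_j = \uu_j^\T \hat\uu$ and $\sum_j c_j^2 = 1$, so that $\sin^2\Theta = 1 - c_i^2 = \sum_{j \neq i} c_j^2$. The key step is to extract the off-diagonal coordinates from $(\A + \B)\hat\uu = \hat\lambda \hat\uu$: projecting onto $\uu_j$ for $j \neq i$ gives $(\lambda_j - \hat\lambda) c_j = - \uu_j^\T \B \hat\uu$, hence $c_j = \uu_j^\T \B \hat\uu / (\hat\lambda - \lambda_j)$ whenever $\hat\lambda \neq \lambda_j$.

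Next I would lower-bound the denominators uniformly. Writing $\delta \equiv \min\{ |\lambda_{i-1} - \lambda_i|, |\lambda_{i+1} - \lambda_i| \}$, the ordering of the eigenvalues gives $|\lambda_i - \lambda_j| \geq \delta$ for every $j \neq i$, while Weyl's inequality (\Cref{theo:weyl}) supplies $|\hat\lambda - \lambda_i| \leq \| \B \|_2$. I would then split into two cases: if $\| \B \|_2 \geq \delta/2$ the claimed bound holds trivially since $2\| \B \|_2/\delta \geq 1 \geq \sin\Theta$; otherwise the triangle inequality yields $|\hat\lambda - \lambda_j| \geq |\lambda_i - \lambda_j| - |\hat\lambda - \lambda_i| \geq \delta - \| \B \|_2 > \delta/2$ for all $j \neq i$. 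Substituting into the coordinate formula and using $\sum_{j\neq i}(\uu_j^\T \B \hat\uu)^2 \leq \| \B \hat\uu \|_2^2 \leq \| \B \|_2^2$ gives $\sin^2\Theta \leq 4\| \B \|_2^2/\delta^2$, which is the stated bound. Finally, the vector-distance inequality follows from $\| \uu_1 - \uu_2 \|_2^2 = 2(1 - \cos\Theta)$ together with $2\sin^2\Theta = 2(1 - \cos\Theta)(1 + \cos\Theta) \geq 2(1 - \cos\Theta)$, valid since $\cos\Theta \geq 0$.

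The main obstacle is obtaining the factor of $2$ cleanly: the naive denominator estimate $\delta - \| \B \|_2$ can vanish or turn negative, so the argument must be routed through the case split above rather than a single bound. The remaining points needing care are the eigenvector sign ambiguity and the verification that $|\lambda_i - \lambda_j| \geq \delta$ holds for \emph{all} $j \neq i$ (not only the neighbours $j = i \pm 1$), which uses the monotone ordering of the $\lambda_j$; the hypothesis $\delta > 0$ is precisely what forces $\lambda_i$ to be simple, so that $\hat\uu$ and $\uu_i$ are individually well-defined.
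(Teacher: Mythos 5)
Your proof is correct. One point of comparison is moot, however: the paper does not prove this statement at all --- it is imported verbatim as a cited result from the reference \cite{yu2015useful} (Yu, Wang and Samworth's variant of the Davis--Kahan theorem) and used as a black box in the proof of \Cref{theo:perf_RF_TCA}, so there is no in-paper argument to match yours against. What you have produced is a self-contained, elementary derivation of that variant: expanding $\uu_i(\A+\B)$ in the eigenbasis of $\A$, reading off the off-diagonal coefficients $c_j = \uu_j^\T \B \hat\uu/(\hat\lambda - \lambda_j)$ from the eigen-equation, and --- this is the essential point --- using Weyl's inequality (\Cref{theo:weyl}) together with the case split on whether $\|\B\|_2 \geq \delta/2$ to replace the perturbed gap $|\hat\lambda - \lambda_j|$ by the unperturbed gap $\delta/2$. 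That case split is exactly what makes the gap condition depend only on the spectrum of $\A$ (the distinguishing feature of the Yu--Wang--Samworth formulation over the classical Davis--Kahan theorem, which requires a gap between the spectra of $\A$ and $\A+\B$), and it is also what rescues the argument when $\delta - \|\B\|_2$ is small or negative; you identified both issues correctly. Your handling of the two points the paper's statement glosses over is also sound: the sign normalization $\uu_i^\T\hat\uu \geq 0$ (without which the final inequality $\|\uu_1 - \uu_2\|_2 \leq \sqrt{2}\sin\Theta$ is false, e.g.\ for $\uu_2 = -\uu_1$), and the fact that $|\lambda_i - \lambda_j| \geq \delta$ for \emph{all} $j \neq i$ follows from the monotone ordering, not just for the neighbours. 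The only cosmetic omission is the boundary convention $\lambda_0 = +\infty$, $\lambda_{p+1} = -\infty$ for $i=1$ or $i=p$, a gap equally present in the paper's own statement.
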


\section{Facts and Discussions about Vanilla TCA}
\label{sec:appendix_fact_TCA}

Here, we discuss some interesting properties of the vanilla TCA approach in \eqref{eq:vanilla_TCA}.

Recall from \Cref{lem:equivalent_form} that the ``transformed'' features obtained by vanilla TCA is given by the top eigenspace of 
\begin{equation}
    \A \equiv \H \left( \K^2 - \frac{ \K^2 \bell \bell^\T \K^2 }{\gamma + \bell^\T \K^2 \bell} \right) \H,
\end{equation}
as defined in \eqref{eq:def_A}.
As a consequence, we have the following remark on the regularization parameter $\gamma $ of vanilla TCA.

\begin{Remark}[On regularization of vanilla TCA]\label{rem:regu_vanilla_TCA}\normalfont
Per its definition in \eqref{eq:def_ell} and let $\lambda_{\max}(\K) \geq \lambda_{\min} (\K) \geq 0$ denote the maximum and minimum eigenvalue of $\K$, one has $\| \bell \|_2^2 = \frac{n}{n_S n_T}$, and that the (only) non-zero eigenvalue of $\frac{ \K^2 \bell \bell^\T \K^2 }{\gamma + \bell^\T \K^2 \bell}$ is $\frac{\bell^\T \K^4 \bell}{\gamma + \bell^\T \K^2 \bell}$ with
%and associated eigenvector aligned to $\K^2 \bell$.
\begin{equation}\label{eq:regu_vanilla_TCA}
  \frac{\lambda_{\min}^4(\K) n }{ \gamma n_S n_T + \lambda_{\min}^2(\K) n } \leq \frac{\bell^\T \K^4 \bell}{\gamma + \bell^\T \K^2 \bell} \leq \frac{\lambda_{\max}^4(\K) n }{ \gamma n_S n_T + \lambda_{\max}^2(\K) n }.
\end{equation}

As such, for $\lambda_{\min}(\K), \lambda_{\max}(\K)$ both of order $\Theta(1)$ with respect to $n$, one has:
\begin{enumerate}
  \item if the number of source and target data are ``balanced'', in the sense that both $n_S, n_T$ are of order $\Theta(n)$, then the rank-one matrix $\frac{ \K^2 \bell \bell^\T \K^2 }{\gamma + \bell^\T \K^2 \bell}$ is of spectral norm order $\Theta(1)$, and thus ``on even ground'' with $\K^2$ (that is, the spectral norm of the rank-one matrix is of the same order as $\K^2$ even for $n$ large), \emph{if and only if} one sets $\gamma = \Theta(n^{-1})$; and
  \item if the number of source and target data are ``unbalanced'' with $n_S =\Theta(1)$ and $n_T = \Theta(n)$, or $n_S = \Theta(n)$ and $n_T = \Theta(1)$ (recall that $n_S + n_T = n$), then $\frac{ \K^2 \bell \bell^\T \K^2 }{\gamma + \bell^\T \K^2 \bell}$ is ``on even ground'' with $\K^2$ in a spectral norm sense \emph{if and only if} $\gamma = \Theta(1)$.
\end{enumerate}
A similar statement can be made for the R-TCA approach introduced in \Cref{subsec:R-TCA}.
\end{Remark}

\begin{proof}[Proof of \Cref{rem:regu_vanilla_TCA}]
Since $\frac{ \K^2 \bell \bell^\T \K^2 }{\gamma + \bell^\T \K^2 \bell} \K^2 \bell = \K^2 \bell \frac{\bell^\T \K^4 \bell }{\gamma + \bell^\T \K^2 \bell} $, so the (only) non-zero eigenvalue of $\frac{ \K^2 \bell \bell^\T \K^2 }{\gamma + \bell^\T \K^2 \bell}$ is $\frac{\bell^\T \K^4 \bell}{\gamma + \bell^\T \K^2 \bell}$.

% $K = U \Lambda U^T$, so $K^{n} = U \Lambda U^T U \Lambda U^T \cdots U \Lambda U^T = U \Lambda^n U^T$

\begin{equation}\label{eq:part_of_vanilla_TCA}
  \frac{ \bell^\T \K^4 \bell }{\gamma + \bell^\T \K^2 \bell} = \frac{\frac{ \bell^\T \K^4 \bell }{\| \bell \|_2^2}}{\frac{\gamma + \bell^\T \K^2 \bell}{\| \bell \|_2^2}}, \text{ and }
  \frac{\frac{ \bell^\T \K^2 \bell }{\| \bell \|_2^2}}{\frac{\gamma + \bell^\T \K^2 \bell}{\| \bell \|_2^2}} = 1 - \frac{\frac{ \gamma }{\| \bell \|_2^2}}{\frac{\gamma }{\| \bell \|_2^2} + \frac{\bell^\T \K^2 \bell}{\| \bell \|_2^2}} 
\end{equation}

We perform an eigenvalue decomposition $\K = U \Lambda U^T$, where $\Lambda$ is the eigenvalue matrix, 
so $\bell^\T \K^4 \bell = \bell^\T (U \Lambda^4 U^T) \bell$. 
And since $\lambda_{\max}(\K) $ and $ \lambda_{\min} (\K) $ denote the maximum and minimum eigenvalue of $\K$, 
so we get  $ \lambda_{\min}^2(\K) \leq \frac{\bell^\T \K^2 \bell}{\| \bell \|_2^2} \leq \lambda_{\max}^2(\K)$ 
and $ \lambda_{\min}^2(\K) \frac{ \bell^\T \K^2 \bell }{\| \bell \|_2^2} \leq \frac{ \bell^\T \K^4 \bell }{\| \bell \|_2^2} \leq \lambda_{\max}^2(\K) \frac{ \bell^\T \K^2 \bell }{\| \bell \|_2^2} $. 

For the right-hand side term in \Cref{eq:part_of_vanilla_TCA}, we have the following expression:

\begin{equation}
  1 - \frac{\frac{ \gamma }{\| \bell \|_2^2}}{\frac{\gamma }{\| \bell \|_2^2} + \lambda_{\min}^2(\K) } \leq 1 - \frac{\frac{ \gamma }{\| \bell \|_2^2}}{\frac{\gamma }{\| \bell \|_2^2} + \frac{\bell^\T \K^2 \bell}{\| \bell \|_2^2}} \leq 1 - \frac{\frac{ \gamma }{\| \bell \|_2^2}}{\frac{\gamma }{\| \bell \|_2^2} + \lambda_{\max}^2(\K) } 
  % \frac{\bell^\T \K^2 \bell}{\| \bell \|^2}
\end{equation}

So, we get 
\begin{align}
  \frac{ \bell^\T \K^4 \bell }{\gamma + \bell^\T \K^2 \bell} & = \frac{\frac{ \bell^\T \K^4 \bell }{\| \bell \|_2^2}}{\frac{\gamma + \bell^\T \K^2 \bell}{\| \bell \|_2^2}} \leq \frac{ \lambda_{\min}^2(\K) \frac{ \bell^\T \K^2 \bell }{\| \bell \|_2^2}}{\frac{\gamma + \bell^\T \K^2 \bell}{\| \bell \|_2^2}}
  = \lambda_{max}^2(\K) \left( 1 - \frac{\frac{ \gamma }{\| \bell \|_2^2}}{\frac{\gamma }{\| \bell \|_2^2} + \frac{\bell^\T \K^2 \bell}{\| \bell \|_2^2}} \right) \\
  & \leq \lambda_{max}^2(\K) \left( 1 - \frac{\frac{ \gamma }{\| \bell \|_2^2}}{\frac{\gamma }{\| \bell \|_2^2} + \lambda_{\max}^2(\K) } \right) 
  = \frac{\lambda_{\max}^4(\K) n }{ \gamma n_S n_T + \lambda_{\max}^2(\K) n }
\end{align}

\begin{align}
  \frac{ \bell^\T \K^4 \bell }{\gamma + \bell^\T \K^2 \bell} \geq \lambda_{min}^2(\K) \left( 1 - \frac{\frac{ \gamma }{\| \bell \|_2^2}}{\frac{\gamma }{\| \bell \|_2^2} + \frac{\bell^\T \K^2 \bell}{\| \bell \|_2^2}} \right) \geq \lambda_{min}^2(\K) \left( 1 - \frac{\frac{ \gamma }{\| \bell \|_2^2}}{\frac{\gamma }{\| \bell \|_2^2} + \lambda_{\min}^2(\K) } \right) = \frac{\lambda_{\min}^4(\K) n }{ \gamma n_S n_T + \lambda_{\min}^2(\K) n }
\end{align}

Thus we concludes the proof of \Cref{rem:regu_vanilla_TCA}
\end{proof}

Figure~\ref{fig:results_rankone_Amazon_Webcam} provides numerical evidence for Remark~\ref{rem:regu_vanilla_TCA}, by showing the performance of vanilla TCA as a function of the regularization parameter $\gamma$, for {\sf Amazon} to {\sf Webcam} from the Office-31 dataset, with $n_S = 2817$ and $n_T = 795$. 
In accordance with \Cref{rem:regu_vanilla_TCA}, the classification accuracy of vanilla TCA varies as a function of regularization $\gamma$ \emph{only} when $\gamma$ belongs to a specific interval (marked in gray in Figure~\ref{fig:Vanilla_TCA__Amazon_Webcam}). 
A similar behavior can be observed for R-TCA in \Cref{fig:R_TCA_Amazon_Webcam}.

Remark~\ref{rem:regu_vanilla_TCA} is of direct algorithmic use in the search of optimal regularization parameter $\gamma$ for vanilla TCA, R-TCA, and consequently RF-TCA.

% two figures of Amazon to Webcam

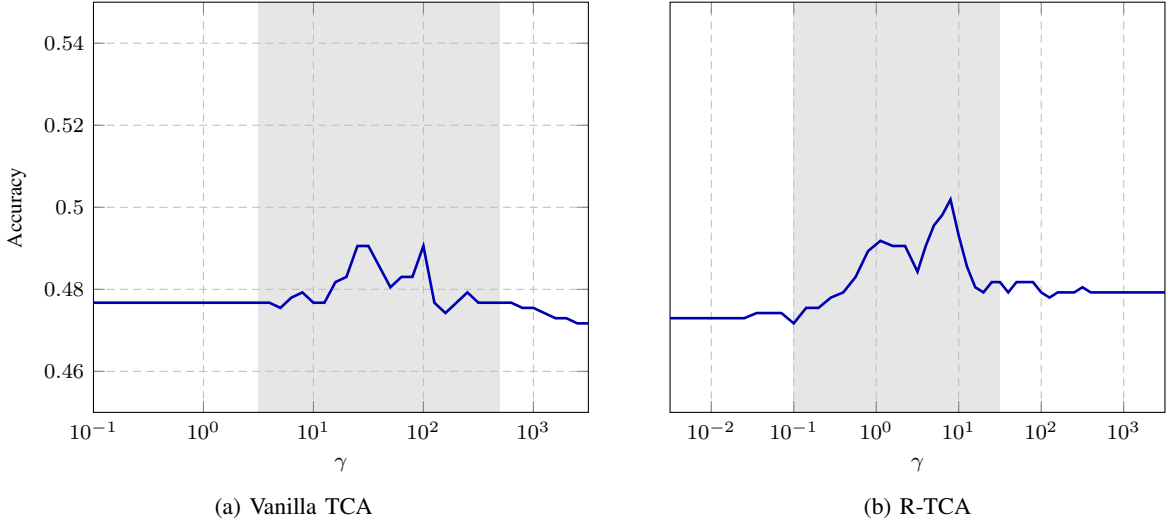
\begin{figure}%
\centering
\subfloat[Vanilla TCA]{
\centering
\begin{tikzpicture}[font=\footnotesize]
    \pgfplotsset{every major grid/.style={style=densely dashed}}
    \begin{axis}[
      %ybar,
      % height=.45\linewidth,
      width=.45\linewidth,
      xmin=-1,
      xmax=3.5,
      ymin=0.45,
      ymax=0.55,
      xtick={-2,-1,0,1,2,3},
      xticklabels = {$10^{-2}$,$10^{-1}$,$10^0$,$10^1$,$10^2$,$10^3$},
      grid=major,
      scaled ticks=true,
      %xmode=log,
      %scale ticks above={4},
      xlabel={ $ \gamma $ },
      ylabel={ Accuracy },
      legend style = {at={(0.98,0.98)}, anchor=north east, font=\footnotesize}
      ]
      \addplot[BLUE,line width=1pt] coordinates{
      (-1.0,0.4767295597484277)(-0.9,0.4767295597484277)(-0.8,0.4767295597484277)(-0.7,0.4767295597484277)(-0.6,0.4767295597484277)(-0.5,0.47672956)(-0.4,0.47672956)(-0.3,0.47672956)(-0.2,0.47672956)(-0.1,0.47672956)(0,0.47672956)(0.1,0.47672956)(0.2,0.47672956)(0.3,0.47672956)(0.4,0.47672956)(0.5,0.47672956)(0.5,0.47672956)(0.6,0.47672956)(0.7,0.475471698)(0.8,0.477987421)(0.9,0.479245283)(1,0.47672956)(1.1,0.47672956)(1.2,0.481761006)(1.3,0.483018868)(1.4,0.490566038)(1.5,0.490566038)(1.5,0.490566038)(1.6,0.485534591)(1.7,0.480503145)(1.8,0.483018868)(1.9,0.483018868)(2,0.490566038)(2.1,0.47672956)(2.2,0.474213836)(2.3,0.47672956)(2.4,0.479245283)(2.5,0.47672956)(2.5,0.47672956)(2.6,0.47672956)(2.7,0.47672956)(2.8,0.47672956)(2.9,0.475471698)(3,0.475471698)(3.1,0.474213836)(3.2,0.472955975)(3.3,0.472955975)(3.4,0.471698113)(3.5,0.471698113)
      } ;
       % \addlegendentry{{ {\sf Amazon} $\to$ {\sf Webcam} }};
    \begin{pgfonlayer}{background}
    \fill [color=black!10!white] (axis cs:0.5,0.45) rectangle (axis cs:2.7,0.55);
    \end{pgfonlayer}
    \begin{pgfonlayer}{background}
    %\fill [color=RED] (axis cs:-0.7,0.45) rectangle (axis cs:-0.8,0.55);
    \end{pgfonlayer}
    \end{axis}
  \end{tikzpicture}
  % \caption{ Vanilla TCA} 
  \label{fig:Vanilla_TCA__Amazon_Webcam}
}
\qquad
\subfloat[R-TCA]{
\centering
\begin{tikzpicture}[font=\footnotesize]
    \pgfplotsset{every major grid/.style={style=densely dashed}}
    \begin{axis}[
      %ybar,
      % height=.45\linewidth,
      width=.45\linewidth,
      xmin=-2.5,
      xmax=3.5,
      ymin=0.45,
      ymax=0.55,
      xtick={-2,-1,0,1,2,3},
      xticklabels = {$10^{-2}$,$10^{-1}$,$10^0$,$10^1$,$10^2$,$10^3$},
      ytick=\empty,
      grid=major,
      scaled ticks=true,
      %xmode=log,
      %scale ticks above={4},
      xlabel={ $ \gamma $ },
      ylabel={ },
      legend style = {at={(0.98,0.98)}, anchor=north east, font=\footnotesize}
      ]
      \addplot[BLUE,line width=1pt] coordinates{
      (-2.5,0.4729559748427673)(-2.35,0.4729559748427673)(-2.2,0.4729559748427673)(-2.05,0.4729559748427673)(-1.9,0.4729559748427673)(-1.75,0.4729559748427673)(-1.6,0.4729559748427673)(-1.45,0.4742138364779874)(-1.3,0.4742138364779874)(-1.1500000000000001,0.4742138364779874)(-1.0,0.4716981132075472)(-0.8500000000000001,0.47547169811320755)(-0.7000000000000002,0.47547169811320755)(-0.55,0.4779874213836478)(-0.3999999999999999,0.47924528301886793)(-0.25,0.4830188679245283)(-0.10000000000000006,0.48930817610062893)(0.049999999999999795,0.4918238993710692)(0.19999999999999973,0.49056603773584906)(0.35000000000000003,0.49056603773584906)(0.5,0.48427672955974843)(0.6000000000000001,0.49056603773584906)(0.7000000000000002,0.49559748427672956)(0.8,0.4981132075471698)(0.9000000000000001,0.5018867924528302)(1.0,0.4930817610062893)(1.1,0.48553459119496856)(1.2000000000000002,0.48050314465408805)(1.3,0.47924528301886793)(1.4000000000000001,0.4817610062893082)(1.5,0.4817610062893082)(1.6,0.47924528301886793)(1.7000000000000002,0.4817610062893082)(1.8000000000000003,0.4817610062893082)(1.9000000000000004,0.4817610062893082)(2.0,0.47924528301886793)(2.1,0.4779874213836478)(2.2,0.47924528301886793)(2.3000000000000003,0.47924528301886793)(2.4000000000000004,0.47924528301886793)(2.5,0.48050314465408805)(2.6,0.47924528301886793)(2.7,0.47924528301886793)(2.8000000000000003,0.47924528301886793)(2.9000000000000004,0.47924528301886793)(3.0,0.47924528301886793)(3.1,0.47924528301886793)(3.2,0.47924528301886793)(3.3000000000000003,0.47924528301886793)(3.4000000000000004,0.47924528301886793)(3.5,0.47924528301886793)
      } ;
      % \addlegendentry{{ {\sf Amazon} $\to$ {\sf Webcam} }};
    \begin{pgfonlayer}{background}
    \fill [color=black!10!white] (axis cs:-1,0.45) rectangle (axis cs:1.5,0.55);
    \end{pgfonlayer}
    \begin{pgfonlayer}{background}
    %\fill [color=RED] (axis cs:-1.8,0.45) rectangle (axis cs:-1.9,0.55); % -1.8742969
    \end{pgfonlayer}
    \end{axis}
  \end{tikzpicture}
  \label{fig:R_TCA_Amazon_Webcam}
}%
\caption{ Classification accuracy versus the regularization parameter $\gamma$ on DECAF6 features of Office-31 data for $n_S = 2817$ ({\sf Amazon}), $n_T = 795$ ({\sf Webcam}) using a SVM classifier. Here, $\bell^\T \K^2 \bell = 10^{-0.793} = 0.1607$ and $\bell^\T \K \bell = 10^{-1.874} = 0.01336$.}
\label{fig:results_rankone_Amazon_Webcam}
\end{figure}

\section{Proof of \Cref{theo:perf_RF_TCA}}
\label{sec:proof_of_theo_RF_TCA}

In this section, we present the proof of \Cref{theo:perf_RF_TCA}.

Before we go into the detailed proof of \Cref{theo:perf_RF_TCA}, recall first from \Cref{lem:TCA-VS-R-TCA} in \Cref{subsec:R-TCA} that the R-TCA approach defined in \eqref{eq:alternative_TCA} is ``equivalent'' to the vanilla TCA up to a change of variable in the regularization.
As such, it suffices to study the behavior of the R-TCA, the solution $\W_{\rm R}$ of which satisfies
\begin{equation}\label{eq:R_TCA_solution_eigen_sym}
  \A_{\rm R} \cdot \H \K \W_{\rm R} \equiv \frac1{\gamma} \H \left( \K - \frac{\K \bell \bell^\T \K}{ \gamma + \bell^\T \K \bell } \right) \H \cdot \H \K \W_{\rm R} =  \H \K \W_{\rm R} \cdot \bLambda_{\rm R},
\end{equation}
per the Sherman–Morrison lemma (\Cref{lem:sherman-morrison}) for $\gamma + \bell^\T \K \bell \neq 0$.

As for R-TCA, it can be checked that also be shown that the transferred features $\W_{\RF}$ obtained by RF-TCA take a similar as
\begin{equation}\label{eq:def_A_RF}
  \A_{\RF} \cdot \H \bSigma^\T \W_{\RF} \equiv \H \bSigma^\T (\bSigma \L \bSigma^\T + \gamma \I_{2N})^{-1} \bSigma \H \cdot \H \bSigma^\T \W_{\RF} = \H \bSigma^\T \W_{\RF} \cdot \bLambda_{\RF},
\end{equation}
with diagonal $\bLambda_{\RF} \in \RR^{m \times m}$ containing the largest $m$ eigenvalues of $(\bSigma \L \bSigma^\T + \gamma \I_{2N})^{-1} \bSigma \H \bSigma^\T$, and thus of the symmetric matrix
\begin{equation}\label{eq:RF_TCA_1}
  \A_{\RF} = \H \bSigma^\T (\bSigma \L \bSigma^\T + \gamma \I_{2N})^{-1} \bSigma \H = \frac1\gamma \H \left( \bSigma^\T \bSigma - \frac{\bSigma^\T \bSigma \bell \bell^\T \bSigma^\T \bSigma}{\gamma + \bell^\T \bSigma^\T \bSigma \bell} \right) \H,
\end{equation}
again up to row and column centering via $\H$.

Comparing \eqref{eq:R_TCA_solution_eigen_sym} against \eqref{eq:RF_TCA_1}, we see that the RF-TCA formulation takes a similar form to the that of R-TCA, with the RFF Gram matrix $\bSigma^\T \bSigma $ instead of the Gaussian kernel matrix $\K = \K_{\Gauss}$.
Since $\bSigma^\T \bSigma$ well approximates $\K$ in a spectral norm sense for $N$ large, by Theorem~\ref{theo:RFF_approx_spectral}, one may expect that the two matrices $\A_{\rm R}$ and $\A_{\RF}$ are also close in spectral norm.
This is  precisely given in the following result, the proof of which follows from an (almost immediate) application of Theorem~\ref{theo:RFF_approx_spectral}.

\begin{Corollary}\label{coro:RF_TCA_concentration}
For a given data matrix $\X \in \RR^{p \times n}$, denote the associated Gaussian kernel matrix $\K_{\Gauss} = \K \in \RR^{n \times n}$ and the random Fourier features matrix $\bSigma \in \RR^{2 N \times n}$ as in \Cref{def:RFF}. 
Then, for any given $\varepsilon \in (0, 1)$, there exists a universal constant $C >0$ such that if $N \geq C \dim(\K) \log(n)/\varepsilon^2$, the following (expected) relative spectral norm error bound holds
\begin{equation}
  \frac{\EE \left\| \K - \frac{\K \bell \bell^\T \K}{ \gamma + \bell^\T \K \bell } - (\bSigma^\T \bSigma - \frac{\bSigma^\T \bSigma \bell \bell^\T \bSigma^\T \bSigma}{\gamma + \bell^\T \bSigma^\T \bSigma \bell} ) \right\|_2 }{\| \K \|_2}  \leq \varepsilon.
\end{equation}
\end{Corollary}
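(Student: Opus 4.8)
The plan is to treat the random Fourier feature Gram matrix $\M \equiv \bSigma^\T \bSigma$ as a spectral-norm perturbation of the Gaussian kernel $\K$, and to show that the rank-one ``corrected'' matrix appearing in \eqref{eq:RF_TCA_1} depends on its argument in a Lipschitz fashion. By \Cref{theo:RFF_approx_spectral} together with the sample complexity $N \geq C \dim(\K)\log(n)/\varepsilon^2$, the relative error $\delta \equiv \|\M - \K\|_2/\|\K\|_2$ satisfies $\EE\,\delta \leq \varepsilon'$, where $\varepsilon'$ can be made as small as desired by enlarging $C$. A single application of the triangle inequality then splits the numerator of the target bound into $\|\K - \M\|_2$, which is already $\delta\,\|\K\|_2 \le \varepsilon'\|\K\|_2$, plus the difference of the two rank-one correction terms, which is the only quantity requiring real work.

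For the rank-one part, I would set $\uu \equiv \K\bell$, $\vv \equiv \M\bell$, $a \equiv \gamma + \bell^\T \K \bell$ and $b \equiv \gamma + \bell^\T \M \bell$, so that the two corrections read $\uu\uu^\T/a$ and $\vv\vv^\T/b$. Since $\K$ and $\M$ are both positive semi-definite, one has $a \geq \gamma$ and $b \geq \gamma$ deterministically (this is where $\gamma > 0$ enters), so both denominators are bounded away from zero. The key algebraic step is the telescoping identity
\begin{equation}
  \frac{\uu\uu^\T}{a} - \frac{\vv\vv^\T}{b} = \frac{\uu(\uu - \vv)^\T}{a} + \frac{(\uu - \vv)\vv^\T}{a} + \vv\vv^\T\,\frac{b - a}{ab},
\end{equation}
which isolates the three elementary perturbations $\uu - \vv = (\K - \M)\bell$, $b - a = \bell^\T(\M - \K)\bell$, and the ambient sizes of $\uu, \vv$. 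Bounding each summand in spectral norm and using $\|\uu - \vv\|_2 \leq \|\K - \M\|_2\|\bell\|_2$, $|b - a| \leq \|\K - \M\|_2\|\bell\|_2^2$, $\|\uu\|_2 \leq \|\K\|_2\|\bell\|_2$, and $\|\vv\|_2 \leq \|\M\|_2\|\bell\|_2 \leq (\|\K\|_2 + \|\K - \M\|_2)\|\bell\|_2$, every term is controlled by a factor $\|\K - \M\|_2$ (plus a higher-order $\|\K - \M\|_2^2$ contribution) times a constant depending only on $\gamma$, $\|\bell\|_2^2 = n/(n_S n_T)$, and $\|\K\|_2$.

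It then remains to take expectations and assemble: after dividing by $\|\K\|_2$, the leading terms are proportional to $\EE\,\delta \leq \varepsilon'$, so choosing $C$ large enough that the accumulated constant times $\varepsilon'$ does not exceed $\varepsilon$ finishes the argument. I expect the one genuine obstacle to be the higher-order contribution: the second and third summands above contain the product $\|\K - \M\|_2\,\|\M\|_2$, and since \Cref{theo:RFF_approx_spectral} is stated only as a first-moment bound, $\EE[\|\K - \M\|_2\,\|\M\|_2]$ is not immediately of order $\varepsilon'\|\K\|_2^2$. I would resolve this either by invoking the sub-exponential tail (hence the second-moment control $\EE\|\K - \M\|_2^2 = O(\varepsilon'^2\|\K\|_2^2)$) furnished by the matrix Bernstein inequality underlying \Cref{theo:RFF_approx_spectral}, or by working on the high-probability event $\{\delta \le \varepsilon'\}$ on which $\|\M\|_2 \le (1 + \varepsilon')\|\K\|_2$ holds deterministically; either route keeps every product linear in $\delta$ up to the negligible $\delta^2$ term.
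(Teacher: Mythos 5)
Your proposal follows the same skeleton as the paper's proof: a triangle inequality that splits off $\|\K - \M\|_2$, a telescoping decomposition of the difference of the two rank-one corrections, and an appeal to \Cref{theo:RFF_approx_spectral}. The gap is in how you bound the telescoped terms. You lower-bound both denominators $a = \gamma + \bell^\T \K \bell$ and $b = \gamma + \bell^\T \M \bell$ by $\gamma$ and upper-bound the numerators by raw products of norms, so each summand carries a factor like $\|\K\|_2 \|\bell\|_2^2/\gamma$ (and your third summand even $\|\K\|_2^2 \|\bell\|_2^4/\gamma^2$, since both $\|\vv\|_2^2$ and $|b-a|$ contribute a $\|\bell\|_2^2$). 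After dividing by $\|\K\|_2$, your bound on the rank-one part is of order $\delta\,(\kappa + \kappa^2)$ with $\kappa \equiv \|\K\|_2\|\bell\|_2^2/\gamma$, so the ``accumulated constant'' you propose to absorb into $C$ depends on $\gamma$, on $\|\bell\|_2^2 = n/(n_S n_T)$, and on $\|\K\|_2$. That is not the statement being proved: the corollary asserts a \emph{universal} $C$, uniformly in $\gamma$, and your constant diverges as $\gamma \to 0$. This matters concretely, because the paper's own tuning analysis (\Cref{rem:regu_vanilla_TCA}) recommends $\gamma = \Theta(n^{-1})$ in the balanced regime, exactly where $\kappa$ can grow polynomially in $n$ and your bound inflates the required $N$ far beyond $\dim(\K)\log(n)/\varepsilon^2$.

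The paper's proof avoids this by never divorcing the denominators from their numerators: it first puts both corrections over the common denominator $ab$, which generates only the scalar factors $\gamma$, $\bell^\T \K \bell$, and $\bell^\T \M \bell$ in the numerator, and then bounds every term with the self-normalizing ratios $\bell^\T \Z \bell/(\gamma + \bell^\T \Z \bell) \leq 1$ and $\gamma/(\gamma + \bell^\T \Z \bell) \leq 1$ for p.s.d.\ $\Z \in \{\K, \M\}$. Each piece is thus bounded \emph{pathwise} by $\|\K - \M\|_2$ times a factor at most one, yielding a total universal constant ($5$) with no dependence on $\gamma$, $\bell$, or the spectrum of $\K$. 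This also resolves, for free, the secondary obstacle you flag: because the random quantities $\bell^\T \M \bell$ in the numerators cancel against the denominators before any expectation is taken, the whole estimate stays linear in $\|\K - \M\|_2$, and the first-moment bound of \Cref{theo:RFF_approx_spectral} suffices; no matrix-Bernstein tail or good-event conditioning (neither of which is furnished by the cited theorem as stated) is needed. Your decomposition, by contrast, produces the factor $|b - a| = |\bell^\T(\M - \K)\bell|$ detached from any matching denominator, and this is precisely what cannot be repaired by choosing $C$ larger; to fix the argument you essentially need to switch to the common-denominator decomposition used in the paper.
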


\begin{proof}[Proof of \Cref{coro:RF_TCA_concentration}]
In the sequel, we will use the shortcut notation $\hat \K \equiv \bSigma^\T \bSigma \in \RR^{n \times n}$. 
We aim to bound the following expected operator norm difference
\begin{equation}
\EE \left\| \K - \frac{\K \bell \bell^\T \K}{ \gamma + \bell^\T \K \bell } - \left(\bSigma^\T \bSigma - \frac{\bSigma^\T \bSigma \bell \bell^\T \bSigma^\T \bSigma}{\gamma + \bell^\T \bSigma^\T \bSigma \bell} \right) \right\|_2 = \EE \left\| \K - \frac{\K \bell \bell^\T \K}{ \gamma + \bell^\T \K \bell } - \left(\hat \K - \frac{\hat \K \bell \bell^\T \hat \K}{\gamma + \bell^\T \hat \K \bell} \right) \right\|_2.
\end{equation}

First, we have, for $\EE\| \K - \hat \K \|_2 \leq \varepsilon$, by $\| \A + \B \|_2 \leq \| \A \|_2 + \| \B \|_2 $, $\| \A \B \|_2 \leq \| \A \|_2 \cdot \| \B \|_2$, and $\gamma >0$ that
\begin{align}\label{eq:norm_coro1}
&\EE \left\| \frac{\hat \K \bell \bell^\T \hat \K}{\gamma + \bell^\T \hat \K \bell } - \frac{\K \bell \bell^\T \K}{\gamma + \bell^\T \K \bell } \right\|_2 
    \leq \EE \left[ \frac{\left\| \gamma(\hat \K \bell \bell^\T \hat \K-\K \bell \bell^\T \K) + \bell^\T \K \bell \hat \K \bell \bell^\T \hat \K - \bell^\T \hat \K \bell \K \bell \bell^\T \K \right\|_2}{(\gamma + \bell^\T \hat \K \bell)(\gamma + \bell^\T \K \bell)} \right] \nonumber \\
    & \leq \EE \left[ \frac{\left\| \gamma(\hat \K \bell \bell^\T \hat \K-\K \bell \bell^\T \K) \right\|_2}{(\gamma + \bell^\T \hat \K \bell)(\gamma + \bell^\T \K \bell)} \right] + \EE \left[ \frac{\left\| \bell^\T \K \bell \hat \K \bell \bell^\T \hat \K - \bell^\T \hat \K \bell \K \bell \bell^\T \K \right\|_2}{(\gamma + \bell^\T \hat \K \bell)(\gamma + \bell^\T \K \bell)} \right].
\end{align}

Then, for the first right-hand side term in \eqref{eq:norm_coro1}, we have
\begin{align}
    \EE \left[\frac{\left\| \gamma(\hat \K \bell \bell^\T \hat \K-\K \bell \bell^\T \K) \right\|_2}{(\gamma + \bell^\T \hat \K \bell)(\gamma + \bell^\T \K \bell)} \right]
    & = \EE \left[\frac{\gamma\left\| (\hat \K- \K)\bell \bell^\T \hat \K + \K \bell \bell^\T \hat \K - \K \bell \bell^\T \K \right\|_2}{(\gamma + \bell^\T \hat \K \bell)(\gamma + \bell^\T \K \bell)} \right] \nonumber \\
    & \leq \gamma \EE \left[\frac{\left\| \hat \K - \K \right\|_2 \cdot \bell^\T \hat \K \bell }{(\gamma + \bell^\T \hat \K \bell)(\gamma + \bell^\T \K \bell)} \right] + \gamma \EE \left[ \frac{ \bell^\T \K \bell \cdot \left\| \hat \K - \K \right\|_2}{(\gamma + \bell^\T \hat \K \bell)(\gamma + \bell^\T \K \bell)} \right] \nonumber \\
    & \leq \frac{\gamma \EE \left\| \hat \K - \K \right\|_2 }{\gamma + \bell^\T \K \bell} + \frac{\gamma \EE \left\| \hat \K - \K \right\|_2 }{\gamma + \bell^\T \hat \K \bell} \leq 2\varepsilon, \label{eq:part1_norm}
\end{align}
where we use the fact that $ \frac{ \bell^\T \Z \bell }{ \gamma + \bell^\T \Z \bell } \leq 1$ for p.s.d.~$\Z = \hat \K, \K$ and $\gamma > 0$ in the third inequality, and that $\frac{\gamma}{ \gamma + \bell^\T \Z \bell } \leq 1$ as well as $\EE \| \K - \hat \K \|_2 \leq \varepsilon$ in the fourth inequality.

For the second right-hand side term in \eqref{eq:norm_coro1}, we have, again for $\EE\| \K - \hat \K \| \leq \varepsilon$ that
\begin{align*}
&\EE \left[ \frac{\left\| \hat \K \bell \bell^\T \K \bell \bell^\T \hat \K -  \K \bell \bell^\T \hat \K \bell \bell^\T \K \right\|_2}{(\gamma + \bell^\T \hat \K \bell)(\gamma + \bell^\T \K \bell)} \right] \\ 
&= \EE \left[ \frac{\left\|  \hat \K \bell \bell^\T \K \bell \bell^\T \hat \K -  \K \bell \bell^\T \K \bell \bell^\T \hat \K + \K \bell \bell^\T \K \bell \bell^\T \hat \K - \K \bell \bell^\T \hat \K \bell \bell^\T \hat \K + \K \bell \bell^\T \hat \K \bell \bell^\T \hat \K - \K \bell \bell^\T \hat \K \bell \bell^\T \K \right\|_2}{(\gamma + \bell^\T \hat \K \bell)(\gamma + \bell^\T \K \bell)} \right]  \\
%%%
& \leq \EE \left[ \frac{ \left\| (\hat \K- \K) \bell \bell^\T \K \bell \bell^\T \hat \K \right\|_2}{(\gamma + \bell^\T \hat \K \bell)(\gamma + \bell^\T \K \bell)} \right] + \EE \left[ \frac{\left\|  \K \bell \bell^\T (\K - \hat \K) \bell \bell^\T \hat \K \right\|_2}{(\gamma + \bell^\T \hat \K \bell)(\gamma + \bell^\T \K \bell)} \right] + \EE \left[ \frac{\left\|  \K \bell \bell^\T \hat \K \bell \bell^\T (\hat \K - \K) \right\|_2}{(\gamma + \bell^\T \hat \K \bell)(\gamma + \bell^\T \K \bell)} \right]  \\
%%%
&\leq 3 \cdot \EE \left[\frac{ \left\| \hat \K - \K \right\|_2 (\bell^\T \K \bell) (\bell^\T \hat \K \bell) }{(\gamma + \bell^\T \hat \K \bell)(\gamma + \bell^\T \K \bell)} \right] \leq 3 \varepsilon.
\end{align*}

Putting together, this gives, for $\EE \left\| \K - \hat \K \right\|_2 \leq \varepsilon$ that
\begin{equation}
\EE \left[ \left\| \K - \frac{\K \bell \bell^\T \K}{ \gamma + \bell^\T \K \bell } - \left(\bSigma^\T \bSigma - \frac{\bSigma^\T \bSigma \bell \bell^\T \bSigma^\T \bSigma}{\gamma + \bell^\T \bSigma^\T \bSigma \bell} \right) \right\|_2 \right] \leq 5 \varepsilon,
\end{equation}
which, together with Theorem~\ref{theo:RFF_approx_spectral}, concludes the proof of Corollary~\ref{coro:RF_TCA_concentration}.
\end{proof}

With \Cref{coro:RF_TCA_concentration} at hand, we are ready to present the proof of \Cref{theo:perf_RF_TCA} as follows.
\begin{proof}[Proof of \Cref{theo:perf_RF_TCA}]
Note from \eqref{eq:R_TCA_solution_eigen_sym}~and~\eqref{eq:def_A_RF} that the columns of $\H \K \W_{\rm R} \in \RR^{n \times m}$ and $\H \bSigma^\T \W_{\RF} \in \RR^{n \times m}$ are in fact the top $m$ eigenvectors $\uu_i(\A_{\rm R}), \uu_i( \A_{\RF} ) \in \RR^n, i \in \{1, \ldots, m\}$, of $\A_{\rm R} \equiv \H \K (\gamma \K + \K \bell \bell^\T \K )^{-1} \K \H$ and $\A_{\RF} \equiv \H \bSigma^\T (\gamma \I_{2N} + \bSigma \bell \bell^\T \bSigma^\T)^{-1} \bSigma \H $, respectively. 
As such, the Frobenius norm error satisfies
\begin{equation}
    \EE \left[ \| \H ( \bSigma^\T \W_{\RF} - \K \W_{\rm R} ) \|_2^2 \right] = \sum_{i=1}^m \EE \left[ \| \uu_i( \A_{\RF} ) - \uu_i(\A_{\rm R}) \|_2^2 \right].
\end{equation}
It then follows from Davis--Kahan theorem \cite{yu2015useful}, Theorem~\ref{theo:davis-kahan} in Appendix~\ref{sec:app_lemmas}, that
\begin{align}
    \left\| \uu_i( \A_{\RF} ) - \uu_i(\A_{\rm R}) \right\|_2 & \leq \sqrt 2 \sin\Theta \left(\uu_i( \A_{\RF} ), \uu_i(\A_{\rm R})\right) \nonumber \\
    & \leq \frac{2 \sqrt 2\left\| \A_{\RF} - \A_{\rm R} \right\|_2}{\min\{|\lambda_{i-1}(\A_{\rm R})-\lambda_{i}(\A_{\rm R})|,|\lambda_{i+1}(\A_{\rm R})-\lambda_{i}(\A_{\rm R})| \}},
\end{align}
with `$\sin \Theta(\uu_1, \uu_2)$ the ``sine similarity'' between two vectors $\uu_1, \uu_2 \in \RR^{n}$ with $\Theta(\uu_1, \uu_2) \equiv \arccos \left( \frac{\uu_1^\T \uu_2}{\| \uu_1 \|_2 \cdot \| \uu_2 \|_2} \right)$ that satisfies $\| \uu_1 - \uu_2 \|_2 \leq \sqrt 2 \Theta(\uu_1, \uu_2)$.

%Note that with Corollary~\ref{coro:RF_TCA_concentration} and 
Further note that for $\EE \left\| \K - \frac{\K \bell \bell^\T \K}{ \gamma + \bell^\T \K \bell } - (\bSigma^\T \bSigma - \frac{\bSigma^\T \bSigma \bell \bell^\T \bSigma^\T \bSigma}{\gamma + \bell^\T \bSigma^\T \bSigma \bell} ) \right\|_2 \leq \varepsilon \| \K \|_2$, one has
\begin{equation}
  \EE \left\| \A_{\RF} - \A_{\rm R} \right\|_2 \leq \| \H \|_2 \cdot \varepsilon \| \K \|_2 \cdot \| \H \|_2 \leq \varepsilon \| \K \|_2,
\end{equation}
where we use the fact that $\| \H \| = 1$, and therefore
\begin{align}
    \EE \left[ \left\| \H ( \bSigma^\T \W_{\RF} - \K \W ) \right\|_2^2 \right] = \sum_{i=1}^m \EE \left[ \left\| \uu_i( \A_{\RF} ) - \uu_i(\A_{\rm R}) \right\|_2^2 \right] \leq \frac{8m\varepsilon^2}{ (\Delta_\lambda/\| \K \|_2)^2} = \frac{8m\varepsilon^2}{ \tilde{\Delta}_\lambda^2},
\end{align}
where we recall that $\Delta_\lambda \equiv \min_{1 \leq i \leq m} \left|\lambda_i(\A_{\rm R})-\lambda_{i+1}(\A_{\rm R}) \right| > 0$ denotes the \emph{eigen-gap} of $\A_{\rm R} \equiv \H \K (\gamma \K + \K \bell \bell^\T \K )^{-1} \K \H$, as well as the \emph{relative eigen-gap}
\begin{equation}
  \tilde{\Delta}_{\lambda} \equiv \Delta/\| \K \|_2,
\end{equation}
as in the statement of \Cref{theo:perf_RF_TCA}.
Applying Corollary~\ref{coro:RF_TCA_concentration}, we concludes the proof of Theorem~\ref{theo:perf_RF_TCA}.
\end{proof}

\section{Additional numerical experiments}
\label{app_sec:exp}

\subsection{Additional numerical experiments on RF-TCA}
\subsubsection{Experiments setting}

In this section, we provide additional experimental results on RF-TCA. 
All experiments are performed on a machine with Intel(R) Core(TM) i7-7700 CPU @ 3.60GHz and 3090Ti GPU.

Below are the details of experiment in this section:
\begin{itemize}
  \item \textbf{Datasets}: The Office-Caltech \cite{gong2012geodesic} dataset has $4$ subsets (Amazon, Caltech, Dslr and Webcam) with $10$ classes in each subset. The Office-31 \cite{saenko2010adapting} dataset has $3$ subsets (Amazon, Dslr and Webcam) with $31$ classes in each subset. Experiments are performed on DECAF6 features~\cite{donahue2014decaf} of both datasets. 
  All datasets are available at \url{https://github.com/SadAngelF/FedRF-TCA}.
  Both source and target data vectors are normalized to have unit Euclidean norms. 
  \item \textbf{Classifiers}: We use $3$ types of classifiers in this section: fully-connected neural network (FCNN), support vector machine (SVM) and k-nearest neighbor (kNN). FCNN is a fully-connected neural network with two hidden layers (having $100$ neurons per layer). SVM uses the (Gaussian) RBF kernel and follows the same hyperparameter searching protocol as below. And the parameter $k$ of kNN is $1$.
  \item \textbf{Hyperparameters}: There are five hyperparameters in RF-TCA: the number of random features $N$, the dimension of common feature space $m$, the regularization parameter $\gamma$, and the Gaussian (width) kernel parameter $\sigma$ in $\K_{\Gauss} = \{ \exp(- \| \x_i - \x_j \|^2/ (2\sigma^2) ) \}_{i,j=1}^n$. 
  We choose $m = 100$, and search $\gamma$ in the set $\{10^{-3}, 10^{-2}, 10^{-1}, 1, 10^{1}, 10^{2}, 10^{3} \}$, the Gaussian (width) parameter $\sigma$ in the set $\{5, 6, \cdots, 14, 15\}$. For each test, we perform hyperparameter search in the range above and report the best performance.
\end{itemize}

\subsubsection{Numerical results on RF-TCA}

\begin{table}[!htbp]
  \centering
  \caption{ Classification accuracy ($\%$) of different DA methods on Office-Caltech dataset with single source domain. Methods are applied on DeCAF6 features~\cite{donahue2014decaf}.}  
  \label{PerfomanceOffice10}
  \begin{tabular}{ccccccccc}
  \hline
  Methods         & TCA    & \makecell{RF-TCA \\ (with $N = 1\,000$)} & \makecell{RF-TCA \\ (with $N = 500$)}  & Vanilla TCA & JDA    & CORAL  & GFK    & DaNN\\ \hline
  A$\rightarrow$C & 81.03 & 76.49 & 76.04 & 80.76 & 81.12 & \textbf{84.32} & 82.57 & 83.92\\ 
  A$\rightarrow$D & 83.43 & 83.43 & 80.25 & \textbf{87.89} & 84.07 & 84.71 & 0.8535 & 83.00\\
  A$\rightarrow$W & 74.57 & 66.10 & 68.13 & 73.55 & 65.76 & 73.89 & \textbf{75.93} & 75.67\\
  C$\rightarrow$A & 87.89 & 91.02 & 91.44 & 87.78 & 87.99 & 92.37 & \textbf{92.42} & 90.30\\
  C$\rightarrow$D & 63.69 & \textbf{90.44} & 89.17 & 58.59 & 57.32 & 88.53 & 89.38 & 90.00\\
  C$\rightarrow$W & 62.71 & 79.32 & 77.28 & 61.01 & 58.64 & 79.32 & \textbf{83.50} & 79.67\\
  D$\rightarrow$A & 82.15 & 74.53 & 69.93 & 81.31 & 79.64 & \textbf{85.07} & 79.91 & 79.60\\
  D$\rightarrow$C & 52.89 & 59.75 & 58.50 & 50.75 & 48.88 & 76.84 & \textbf{78.57} & 73.92\\
  D$\rightarrow$W & 80.00 & 90.84 & 90.50 & 80.00 & 80.00 & 97.96 & \textbf{99.10} & 96.67\\
  W$\rightarrow$A & 77.55 & 72.86 & 69.93 & \textbf{77.97} & 76.82 & 77.66 & 75.21 & 74.00\\
  W$\rightarrow$C & 62.42 & 58.05 & 55.03 & 59.83 & 63.49 & 70.79 & \textbf{72.62} & 68.50\\
  W$\rightarrow$D & 91.08 & \textbf{100.00} & \textbf{100.00} & 91.71 & 86.62 & \textbf{100.00} & \textbf{100.00} & \textbf{100.00}\\
  Avg             & 74.95 & 78.57 & 77.18 & 74.26 & 72.52 & 84.29 & \textbf{84.54} & 83.24\\ \hline
  \end{tabular}
  \end{table}

  \begin{table}[!htbp]
  \centering
   \caption{ Running time (s) of different DA methods on Office-Caltech dataset with single source domain. Methods are applied on DeCAF6 features~\cite{donahue2014decaf}.}
  \label{TimeOffice10}
  \begin{tabular}{ccccccccc}
  \hline
  Methods         & TCA    & \makecell{RF-TCA \\ (with $N = 1\,000$)} & \makecell{RF-TCA \\ (with $N = 500$)}  & Vanilla TCA & JDA    & CORAL  & GFK    & DaNN\\ \hline
  A$\rightarrow$C & 38.76 & 2.812 & \textbf{0.6355} & 3.072  & 604.7 & 92.91 & 117.6 & 983.6\\ 
  A$\rightarrow$D & 5.231 & 2.349 & \textbf{0.4795} & 0.5285 & 180.9 & 90.27 & 191.3 & 973.8\\
  A$\rightarrow$W & 7.981 & 2.438 & \textbf{0.4993} & 0.7088 & 203.3 & 87.09 & 191.5 & 957.5\\
  C$\rightarrow$A & 38.63 & 2.892 & \textbf{0.6337} & 3.067  & 588.1 & 89.51 & 124.9 & 1004\\
  C$\rightarrow$D & 10.92 & 2.569 & \textbf{0.5157} & 0.7563 & 249.0 & 90.99 & 117.9 & 955.6\\
  C$\rightarrow$W & 10.75 & 2.538 & \textbf{0.5471} & 0.9705 & 251.8 & 88.10 & 119.8 & 947.4\\
  D$\rightarrow$A & 5.356 & 2.373 & \textbf{0.4776} & 0.5294 & 183.9 & 89.27 & 185.1 & 924.4\\
  D$\rightarrow$C & 10.99 & 2.475 & \textbf{0.4985} & 0.7536 & 249.1 & 90.44 & 116.3 & 947.4\\
  D$\rightarrow$W & 0.2790 & 0.4151 & 0.4116 & \textbf{0.07043} & 81.59 & 87.74 & 187.6 & 908.7\\
  W$\rightarrow$A & 7.774 & 2.459 & \textbf{0.4993} & 0.7024 & 210.4 & 86.76 & 184.7 & 959.9\\
  W$\rightarrow$C & 10.66 & 2.527 & \textbf{0.5304} & 0.9671 & 245.5 & 87.18 & 123.4 & 965.5\\
  W$\rightarrow$D & 0.2802 & 2.387 & 0.4131 & \textbf{0.07328} & 82.09 & 87.92 & 186.6 & 975.3\\
  Avg             & 12.30 & 2.353 & \textbf{0.5118} & 1.016 & 260.9 & 89.01 & 153.9 & 961.8\\ \hline
  \end{tabular}
  \end{table}

  \begin{table}[!htbp]
  \centering
  \caption{ Classification accuracy ($\%$) of different DA methods on Office-31 dataset with single source domain. Methods are applied on DeCAF6 features~\cite{donahue2014decaf}.}
  \label{PerfomanceOffice31}
  \begin{tabular}{ccccccccc}
  \hline
  Methods         & TCA    & \makecell{RF-TCA \\ (with $N = 1\,000$)} & \makecell{RF-TCA \\ (with $N = 500$)}  & Vanilla TCA & JDA    & CORAL  & GFK    & DaNN\\ \hline
  A$\rightarrow$D & \textbf{60.24} & 56.02 & 55.62 & 59.83 & 57.42 & 54.81 & 55.82 & 49.80\\
  A$\rightarrow$W & 48.55 & 51.57 & 51.06 & 47.29 & 45.53 & 49.93 & \textbf{52.41} & 50.63\\
  D$\rightarrow$A & \textbf{40.75} & 31.30 & 29.42 & 39.61 & 39.43 & 35.88 & 34.63& 35.76\\
  D$\rightarrow$W & \textbf{93.08} & 87.67 & 87.92 & 92.95 & 91.06 & 89.81 & 87.04 & 87.25\\
  W$\rightarrow$A & 38.90 & 30.13 & 32.58 & \textbf{39.33} & 37.06 & 33.65 & 33.95 & 36.76\\
  W$\rightarrow$D & \textbf{98.99} & 94.97 & 93.57 & 97.79 & 97.59 & 91.36 & 90.16 & 90.60\\
  Avg             & \textbf{63.42} & 58.61 & 58.36 & 62.80 & 61.34 & 59.24 & 59.00 & 58.47\\ \hline
  \end{tabular}
  \end{table}
  \begin{table}[!htbp]
  \centering
  \caption{ Running time (s) of different DA methods on Office-31 dataset with single source domain. Methods are applied on DeCAF6 features~\cite{donahue2014decaf}.}
  \label{TimeOffice31}
  \begin{tabular}{ccccccccc}
  \hline
  Methods         & TCA    & \makecell{RF-TCA \\ (with $N = 1\,000$)} & \makecell{RF-TCA \\ (with $N = 500$)}  & Vanilla TCA & JDA    & CORAL  & GFK    & DaNN\\ \hline
  A$\rightarrow$D & 206.2 & 3.257 & \textbf{0.8764} & 13.48 & 2207 & 96.64 & 125.9 & 6078\\
  A$\rightarrow$W & 265.4 & 3.459 & \textbf{0.9498} & 17.28 & 2911 & 93.35 & 133.2 & 6066\\
  D$\rightarrow$A & 208.0 & 3.415 & \textbf{0.8695} & 13.49 & 2211 & 93.06 & 124.4 & 5783\\
  D$\rightarrow$W & 11.16 & 2.365 & \textbf{0.4760} & 0.8867 & 243.8 & 86.94 & 116.5 & 5545\\
  W$\rightarrow$A & 266.9 & 3.428 & \textbf{0.9389} & 17.32 & 2779 & 93.38 & 127.3 & 5941\\
  W$\rightarrow$D & 11.13 & 2.343 & \textbf{0.4953} & 0.8869 & 251.1 & 88.64 & 128.8 & 5620\\
  Avg             & 161.5 & 3.045 & \textbf{0.7676} & 10.55 & 1767 & 92.00 & 126.0 & 5839\\ \hline
  \end{tabular}
  \end{table}

  As demonstrated in \Cref{PerfomanceOffice10}, \Cref{TimeOffice10}, \Cref{PerfomanceOffice31} and \Cref{TimeOffice31}, the RF-TCA method achieves performance comparable to other methods while requiring significantly less runtime. Furthermore, as the dataset size increases, the computational efficiency of the RF-TCA method becomes even more pronounced.
  
  \begin{table}[!htbp]
    \centering
    \caption{Classification accuracy ($\%$) of different kernels of RF-TCA (with $N = 500$) on Office-Caltech dataset with DECAF6 features.}
    \label{performance_kernel_Office10}
    \begin{tabular}{cccccccccccccc}
    \toprule
    Kernel          & A$\rightarrow$C & A$\rightarrow$D & A$\rightarrow$W & C$\rightarrow$A & C$\rightarrow$D & C$\rightarrow$W & D$\rightarrow$A & D$\rightarrow$C & D$\rightarrow$W & W$\rightarrow$A & W$\rightarrow$C & W$\rightarrow$D & Avg \\ \midrule
    Laplace & \textbf{81.57} & 80.25 & \textbf{69.15} & 90.29 & \textbf{89.81} & \textbf{77.29} & \textbf{70.25} & \textbf{59.22} & 75.93 & \textbf{72.55} & \textbf{57.35} & 96.82 & 76.70  \\
    Gauss   & 76.04  & 80.25 & 68.13 & \textbf{91.44} & 89.17 & 77.28 & 69.93 & 58.50 & \textbf{90.50} & 69.93 & 55.03 & \textbf{100.00} & \textbf{77.18} \\ \bottomrule
    \end{tabular}
    \end{table}

\begin{table}[!htbp]
  \centering
  \caption{Classification accuracy ($\%$) of different kernels of RF-TCA (with $N = 500$) on Office-31 dataset with DECAF6 features.}
  \label{performance_kernel_Office31}
  \begin{tabular}{ccccccccc}
  \toprule
  Kernel          & A$\rightarrow$D & A$\rightarrow$W & D$\rightarrow$A & D$\rightarrow$W & W$\rightarrow$A & W$\rightarrow$D & Avg     \\ \midrule
  Laplace & \textbf{59.64} & 50.44 & \textbf{35.78} & 87.80 & \textbf{34.26} & \textbf{93.98} & \textbf{60.32}  \\
  Gauss   & 55.62  & \textbf{51.06}  & 29.42  & \textbf{87.92}  & 32.58  & 93.57  & 58.36 \\ \bottomrule
  \end{tabular}
\end{table}

While our primary focus in this work is on the Gaussian kernel, it is worth noting that the RF-TCA method is applicable to other kernel functions, as demonstrated in both \Cref{performance_kernel_Office10} and \Cref{performance_kernel_Office31}.

\subsection{Additional source classifier adaptation strategies on FedRF-TCA}

\subsubsection{Experiments setting}

In this section, we provide additional experimental results on FedRF-TCA. 

Below are the details of experiment in this section:
\begin{itemize}
  \item \textbf{Datasets}: The Office-Caltech \cite{gong2012geodesic} dataset has $4$ subsets (Amazon, Caltech, Dslr and Webcam) with $10$ classes in each subset. The Digit-Five \cite{ganin2015unsupervised} dataset has $5$ subsets (Mnist, MNIST-M, USPS, SVHN, Synthetic Digits) with $10$ classes in each subset. . Experiments are performed on raw data oof both datasets.
  \item \textbf{Classifiers}: We only use fully-connected neural network (FCNN). FCNN is a fully-connected neural network with two hidden layers (having $100$ neurons per layer). 
  \item \textbf{Hyperparameters}: The FedRF-TCA model shares the same five hyperparameters as the RF-TCA model. 
\end{itemize}

\subsubsection{One-shot performance for FedRF-TCA}

In \Cref{alg:FedRF-TCA}, though clients only need to transmit source classifiers every $T_C$ round, we attempt to figure out whether the performance remains the same if clients only transmit source classifiers once at the end of training. Thus we conduct a ablation experiment as in~\Cref{fig:FedRF-TCA_frequency_one_shot} based on the same setting as~\Cref{fig:FedRF-TCA_frequency} with extra experiment when $T_C = 1600$. The performance when source classifiers are only transmitted once (the total round number is $1600$ and $T_C = 1600$) is very unstable.

\begin{figure}[htbp]
  \centering
  \begin{tikzpicture}
  \renewcommand{\axisdefaulttryminticks}{4} 
  \pgfplotsset{every major grid/.style={densely dashed}}       
  \tikzstyle{every axis y label}+=[yshift=-10pt] 
  \tikzstyle{every axis x label}+=[yshift=5pt]
  \pgfplotsset{every axis legend/.append style={cells={anchor=east},fill=white, at={(0.02,0.98)}, anchor=north west, font=\tiny}}
  \begin{axis}[
    %ybar,
    height=.35\linewidth,
    width=.6\linewidth,
    xmin=2.19,
    xmax=7.5,
    ymin=0,
    xtick={2.5257, 3.21887, 3.912, 4.6052, 5.2983, 5.9915, 6.6846, 7.378},
    xticklabels = {$12$, $25$, $50$, $100$,$200$,$400$,$800$,$1600$},
    grid=major,
    scaled ticks=true,
    %xmode=log,
    %scale ticks above={4},
    xlabel={ communication cycles of classifiers ($\rm T_{C}$)},
    ylabel={ Accuracy },
    xlabel style={label distance=0.1pt},
    ylabel style={label distance=0.1pt},
    legend style = {at={(0.965,0.98)}, anchor=north east, font=\footnotesize}
    ]    
  
  % mnist
  \addplot[
  mark=+,color=BLUE!60!white,line width=1pt,
  error bars/.cd,
  y dir=both, y explicit,
  error bar style={color=gray}
  ] 
  coordinates{
  (2.304, 0.9763) +- (0,0.0018)
  (2.996, 0.9763) +- (0,0.0007)
  (3.91, 0.9744) +- (0,0.0023)
  % (4.317, 0.6407) +- (0,0.0137)
  (4.605, 0.9742) +- (0,0.0032)
  (5.298, 0.9757) +- (0,0.0016)
  (5.991, 0.9734) +- (0,0.0019)
  (6.685, 0.9632) +- (0,0.0284)
  (7.36, 0.7356) +- (0,0.3756)
  };
  \addlegendentry{mm,sv,sy,up→mt}

  % usps
  \addplot[
  mark=+,color=GREEN!60!white,line width=1pt,
  error bars/.cd,
  y dir=both, y explicit,
  error bar style={color=gray}
  ] 
  coordinates{
  (2.304, 0.8995) +- (0,0.0113)
  (2.996, 0.8946) +- (0,0.0079)
  (3.91, 0.8936) +- (0,0.0063)
  (4.605, 0.8984) +- (0,0.0086)
  (5.298, 0.8999) +- (0,0.0098)
  (5.991, 0.8848) +- (0,0.0351)
  (6.685, 0.8941) +- (0,0.0085)
  (7.39, 0.6606) +- (0,0.3637)
  };
  \addlegendentry{mm,mt,sv,sy→up}
  
  % mnist-m
  \addplot[
  mark=+,color=RED!60!white,line width=1pt,
  error bars/.cd,
  y dir=both, y explicit,
  error bar style={color=gray}
  ] 
  coordinates{
  (2.304, 0.6495) +- (0,0.0166)
  (2.996, 0.6549) +- (0,0.0079)
  (3.91, 0.6516) +- (0,0.0184)
  (4.317, 0.6407) +- (0,0.0137)
  (4.605, 0.6463) +- (0,0.0084)
  (5.298, 0.6287) +- (0,0.0171)
  (5.991, 0.6393) +- (0,0.0147)
  (6.685, 0.6287) +- (0,0.0253)
  (7.33, 0.4261) +- (0,0.2456) % 7.378
  };
  \addlegendentry{mt,sv,sy,up→mm}
  \end{axis}
  \end{tikzpicture}
  \caption{Classification accuracy (\%) of FedRF-TCA on Digit-Five dataset~\cite{ganin2016domain} for different communication cycles $\rm T_{C} \in \{ 10, 20, 50, 100, 200, 400, 800, 1600 \}$ , and total round number is $1600$. The setting is the same as (\uppercase\expandafter{\romannumeral1}) in \Cref{DigitFive_Ablation}.
  The red line represents the average accuracy, while the vertical gray lines indicate the standard deviation. In order to enhance clarity, we have slightly spaced out the x-axis values for the experiment at $T_C=1600$. } % In fact, they all correspond to $T_C=1600$.
  \label{fig:FedRF-TCA_frequency_one_shot}
\end{figure}
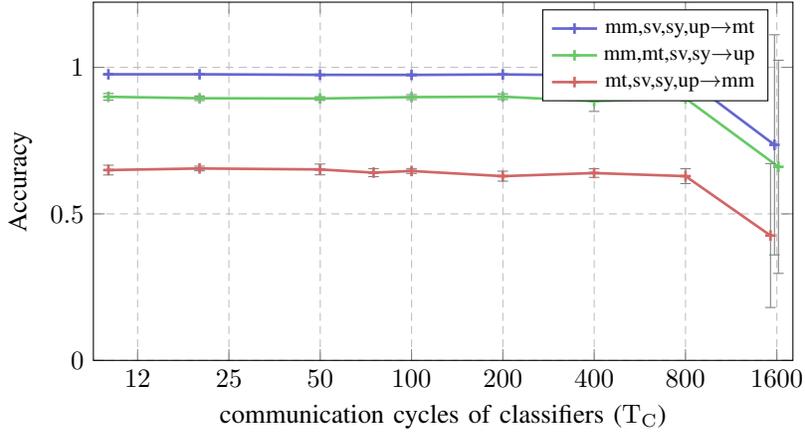

We cancel the source classifiers parameter aggregation step ($T_C \to \infty$) in \Cref{alg:FedRF-TCA} and apply a hard voting strategy utilizing the source classifiers to vote for target predictions. All source clients transmit the classifiers only once at the end of the training process. Results are shown in \Cref{Hard_Voting_Office_10_Fedrated} and \Cref{Hard_Voting_DigitFive}. The results of additional ablation experiments on robustness and asynchrony are shown in \Cref{DigitFive_Asynchrony_Ablation}.

\subsubsection{One-shot hard voting strategy}

We cancel the source classifiers parameter aggregation step ($T_C \to \infty$) in \Cref{alg:FedRF-TCA} and apply a hard voting strategy utilizing the source classifiers to vote for target predictions. All source clients transmit the classifiers only once at the end of the training process. Results are shown in \Cref{Hard_Voting_Office_10_Fedrated} and \Cref{Hard_Voting_DigitFive}. The results of additional ablation experiments on robustness and asynchrony are shown in \Cref{DigitFive_Asynchrony_Ablation}

% the results from paper: <Feature Distribution Matching for Federated Domain Generalization>
\begin{table*}[]
  \centering
  \caption{
  One-shot hard voting for source classifiers in FedRF-TCA.  Classification accuracy (\%) on Office-Caltech dataset~\cite{gong2012geodesic} with different federated DA methods. 
  Baseline results are cited from \cite{peng2019federated}.
  Setting (\uppercase\expandafter{\romannumeral1}): all clients average both $\W_{\RF}$ and $\C$ in each communication round; (\uppercase\expandafter{\romannumeral2}): only a random subset $\mathcal{S}_t$ of source clients are involved in training; (\uppercase\expandafter{\romannumeral3}): as for (\uppercase\expandafter{\romannumeral2}) with classifier aggregation interval $T_C = 50$. 
  }
  \label{Hard_Voting_Office_10_Fedrated}
  \begin{tabular}{cccccc}
    \toprule
    FDA methods                       & C, D, W $\to$ A & A, D, W $\to$ C & A, C, W $\to$ D & A, C, D $\to$ W & Average \\
  \midrule
  ResNet101~\cite{he2016deep}                         & 81.9  & 87.9  & 85.7   & 86.9  & 85.6    \\
  AdaBN~\cite{li2016revisiting}     & 82.2   & 88.2   & 85.9   & 87.4   & 85.7    \\
  AutoDIAL~\cite{maria2017autodial} & 83.3   & 87.7   & 85.6   & 87.1   & 85.9    \\
  f-DAN\footnotemark[1]~\cite{long2015learning,peng2019federated}     & 82.7   & 88.1   & 86.5   & 86.5   & 85.9    \\
  f-DANN\footnotemark[2]~\cite{ganin2015unsupervised,peng2019federated} & 83.5   & \underline{88.5}   & 85.9   & 87.1   & 86.3    \\
  FADA~\cite{peng2019federated}     & 84.2   & \textbf{88.7}   & 87.1   & 88.1   & 87.1   \\
  \midrule
  FedRF-TCA (\uppercase\expandafter{\romannumeral4})   & \textbf{93.9}  & 87.6  & \underline{98.4}  & \underline{95.2}  & \textbf{93.8}   \\
  FedRF-TCA (\uppercase\expandafter{\romannumeral5})   & \underline{89.64}  & 87.13  & \textbf{98.88}  & \textbf{98.25}  & \underline{93.48}   \\
  \bottomrule
  \end{tabular}
\end{table*}

% the results from paper: <Feature Distribution Matching for Federated Domain Generalization>
\begin{table*}[]
    \centering
    \caption{One-shot hard voting for source classifiers in FedRF-TCA. Classification accuracy (\%) on Digit-Five dataset~\cite{ganin2016domain} with UFDA protocol. Baseline results are cited from \cite{peng2019federated}.}
    \label{Hard_Voting_DigitFive}
    % $max_{G_t TTA_f(G_t)}(\%)$ on tne Digit-Five dataset based on different methods.
    \begin{tabular}{ccccccc}
    \toprule
    FDA methods & mm, sv, sy, up $\to$ mt & mt, sv, sy, up $\to$ mm & mt, mm, sv, sy $\to$ up & mt, mm, sy, up $\to$ sv & mt, mm, sv, up $\to$ sy & Avg  \\ \midrule
    Source Only                                         & 75.4    & 49.6     & 75.5       & 22.7      & 44.3        & 53.5 \\
    f-DANN\footnotemark[2]~\cite{ganin2015unsupervised}                                              & 86.1     & 59.5     & 89.7       & 44.3      & 53.4        & 66.6 \\
    f-DAN\footnotemark[1]~\cite{long2015learning}                                               & 86.4     & \underline{57.5}     & \underline{90.8 }     & \underline{45.3 }     & \underline{58.4 }       & \underline{67.7} \\
    FADA~\cite{peng2019federated}                                                & \underline{91.4}       & \textbf{62.5}    & \textbf{91.7 }     & \textbf{50.5 }      & \textbf{71.8 }      & \textbf{73.6} \\ \midrule   

    FedRF-TCA (\uppercase\expandafter{\romannumeral4})      & 86.5      & 53.7      & 88.0      & 38.3      & 38.1      & 60.8 \\
    FedRF-TCA (\uppercase\expandafter{\romannumeral5})      & \textbf{92.6}      & 55.5      & 89.2      & 36.0      & 38.3      & 62.23 \\        
    \bottomrule
    \end{tabular}
\end{table*}

The aggregation strategy for source classifiers in FedRF-TCA can take various forms. While the hard voting strategy depicted in \Cref{Hard_Voting_Office_10_Fedrated} and \Cref{Hard_Voting_DigitFive} may have implications for source privacy, it continues to yield satisfactory performance. It is possible to explore improved aggregation strategies that balance privacy protection and communication cost more effectively.

Due to the unpredictable nature of the real communication environment, it is challenging for client $i$ to transmit both pairs of messages, $\bSigma^{(i)} \bell^{(i)}$ and $\W_{RF}^{(i)}$, even though these messages in FedRF-TCA are lightweight. So we conduct a ablation experiments on the passing order and the asynchrony of the messages for the one-shot hard voting strategy. As in~\Cref{fig:multi_source_target_system}, we divide the transmission strategy for $\bSigma^{(i)} \bell^{(i)}$ and $\W_{RF}^{(i)}$ into separate categories, which are denoted as ``all'', ``ordered'', or ``random''. 

\begin{table*}[htb]
    \centering
    \caption{One-shot hard voting for source classifiers in FedRF-TCA. Classification accuracy (\%) on Digit-Five dataset~\cite{ganin2016domain} with varying orders or asynchrony in $\Sigma y$ and $\W_{\RF}$ among clients. (B), (C), (D) and (E) represent that only one ordered or randomly selelcted source clients participants in the communication. (B) and (D) means that the message $\bSigma^{(i)} \bell^{(i)}, \W_{RF}^{(i)}$ are from the same source-target pair. (1) ``all'' means that all clients $i$ pass the message $\bSigma^{(i)} \bell^{(i)}$ or $\W_{RF}^{(i)}$; (2) ``order'' means message is passed one by one in order; (3) ``random'' means message is passed one by one randomly. For example, (C) signifies that in each round, $\bSigma^{(i)} \bell^{(i)}$ is sequentially transmitted to the target client from the ordered source clients $i$, while $\W_{RF}^{(j)}$ is sent to the server for aggregation from a randomly selected source client $j$.}
    \label{DigitFive_Asynchrony_Ablation}
    \begin{tabular}{ccccccc}
    \hline
    % $\{ \Sigma_S^{(i)} y_S^{(i)} \}_{i=1}^K$
    Setting ($\Sigma y \,$/$\, W_{\rm RF}$) & (A) all$\,$/$\,$all  & (B) ordered & (C) ordered$\,$/$\,$random & (D) random & (E) random$\,$/$\,$random \\ % & (B) ordered$\,$/$\,$all
    \hline
    mm,sv,sy,up→mt & \underline{86.46} & 85.75 & 74.79 & 82.39 & \textbf{92.55} \\ % & 84.57 
    mt,sv,sy,up→mm & 53.71 & 45.11 & \textbf{57.55} & \underline{56.59} & 55.46 \\ % & 51.56 
    mt,mm,sv,sy→up & \underline{87.98} & 85.34 & 85.31 & \textbf{97.18} & 80.80 \\ % & 82.80 
    \hline
    \end{tabular}
\end{table*}

\Cref{DigitFive_Asynchrony_Ablation} demonstrates the strong robustness of the FedRF-TCA method regarding the order and asynchrony of message passing. Whether messages are transmitted in a specific order or randomly, and whether they are sent in pairs or not, FedRF-TCA consistently delivers high-performance results.

\end{document}